\documentclass{article}

\usepackage[
    top=1in,
    bottom=1in,
    left=1.25in,
    right=1.25in
]{geometry}

\usepackage{amsmath}
\usepackage{amssymb}

\usepackage{hyperref}
\usepackage{cleveref}
\usepackage{lipsum}
\usepackage{amsfonts}
\usepackage{graphicx}
\usepackage{epstopdf}
\usepackage{algorithmic}
\usepackage[dvipsnames]{xcolor}

\usepackage{url}

\usepackage{booktabs}
\usepackage{subcaption}

\newcommand{\proofbox}{\qedsymbol}
\usepackage{amsthm}

\newtheorem{Thm}{Theorem}[section] 

\newtheorem{Prop}[Thm]{Proposition} 

\newtheorem{Lem}[Thm]{Lemma}
\newtheorem{Assum}[Thm]{Hypothesis}

\theoremstyle{definition}
\newtheorem{Def}[Thm]{Definition}

\newtheorem{Exp}[Thm]{Example}

\theoremstyle{remark}
\newtheorem{Rem}[Thm]{Remark}

\crefname{Thm}{Theorem}{Theorems}
\Crefname{Thm}{Theorem}{Theorems}
\crefname{Lem}{Lemma}{Lemmas}
\Crefname{Lem}{Lemma}{Lemmas}
\crefname{Prop}{Proposition}{Propositions}
\Crefname{Prop}{Proposition}{Propositions}

\crefname{Assum}{Hypothesis}{Hypotheses}
\Crefname{Assum}{Hypothesis}{Hypotheses}
\crefname{Prob}{Problem}{Problems}
\Crefname{Prob}{Problem}{Problems}
\Crefname{Prop}{Proposition}{Propositions}

\title{Hypothesis-driven construction of mesoscopic dynamics
\thanks{
This project is supported by the National Research Foundation, Singapore, under its AI Singapore Programme (AISG Award No.: AISG3-RP-2022-028).
Z.L. is supported by the Ministry of Education, Singapore, under its Research Centre of Excellence award to the Institute for Functional Intelligent Materials. (Project No. EDUNC-33-18-279-V12).
}
}
\newcommand{\email}[1]{\texttt{#1}}

\author{
Zhuoyuan Li\footnotemark[2]\;\;\footnotemark[4]
\and Aiqing Zhu\footnotemark[3]\;\;\footnotemark[4]
\and Qianxiao Li\footnotemark[2]\;\;\footnotemark[3]\;\;\footnotemark[5]
}
\date{}
\usepackage{amsopn}

\newcommand{\mR}{\mathbb{R}}

\newcommand{\mC}{\mathbb{C}}
\newcommand{\mZ}{\mathbb{Z}}
\newcommand{\mT}{\mathbb{T}}
\newcommand{\md}{\mathop{}\!\mathrm{d}}
\newcommand{\mE}{\mathcal{E}}

\newcommand{\mG}{\mathcal{G}}
\newcommand{\mX}{\mathcal{X}}

\newcommand{\mH}{\mathcal{H}}
\newcommand{\mL}{\mathcal{L}}

\newcommand{\mP}{\mathcal{P}}
\newcommand{\mM}{\mathcal{M}}
\newcommand{\mW}{\mathcal{W}}

\newcommand{\mJ}{\mathcal{J}}
\newcommand{\norm}[1]{\left\| #1 \right\|}
\newcommand{\abs}[1]{\left| #1 \right|}

\newcommand{\me}{\mathrm{e}}
\newcommand{\mi}{\mathrm{i}}

\newcommand{\ddt}[1]{\frac{\md{#1}}{\md t}}

\newcommand{\init}{\mathrm{init}}

\newcommand{\Hinner}[2]{\left( #1,#2\right)}
\newcommand{\Xcouple}[2]{\left\langle #1,#2\right\rangle_{\mX^*,\mX}}

\newcommand{\ournet}{{Spectral OnsagerNet}}
\newcommand{\ournetabbrv}{{SpecOnsNet}}

\begin{document}
\maketitle

% 3. Footnote definitions
\renewcommand{\thefootnote}{\fnsymbol{footnote}}
\footnotetext[2]{Institute for Functional Intelligent Materials, National University of Singapore (\email{zy.li@nus.edu.sg}, \email{qianxiao@nus.edu.sg})}
\footnotetext[3]{Department of Mathematics, National University of Singapore (\email{zaq@nus.edu.sg}).}
\footnotetext[4]{Zhuoyuan Li and Aiqing Zhu contributed equally to this work.}
\footnotetext[5]{Corresponding author.}
\renewcommand{\thefootnote}{\arabic{footnote}}

% REQUIRED
\begin{abstract}
% \todo
Traditional scientific modeling typically begins with fixed, instance-wise effective equations and then carries out equation-specific analysis and computation, a procedure that becomes exceptionally challenging in complex applications such as multiscale systems.
We propose an alternative paradigm by learning mesoscopic dynamics within a mathematically constrained hypothesis class.
Building upon a generalized Onsager principle, we introduce a unified framework encompassing both dissipative and conservative mesoscopic dynamics.
We establish uniform and \textit{a priori} theoretical guarantees, including global well-posedness, asymptotic stability, unique factorization identifiability, and discrete energy dissipation, applicable to all spatio-temporal evolution equations within this hypothesis class prior to all learning stages.
Data from each problem instance is then used to guide the identification of members within our hypothesis class,
giving rise to accurate, robust and interpretable dynamical models.
We empirically validate this framework on both data from continuum PDE models as a check,
and on data arising from microscopic chain models for which exact meso-scale models are unknown.
The proposed approach not only acts as an effective dynamics learner, but also offers vital interpretable diagnostics of the underlying physics.
% This work represents a concrete step towards a hypothesis-driven paradigm for the discovery of reliable, physically consistent mesoscale models.
% ~155 words
% The Onsager principle is a cornerstone of non-equilibrium thermodynamics that governs energy dissipation and transport across physical, chemical, and biological systems, yet its data-driven realization has so far been confined to finite-dimensional settings. Generalizing it to infinite dimensions is essential for modeling spatially extended dynamics but raises fundamental challenges in operator parameterization, structural enforcement, and function-space analysis.  We propose a generalized Onsager principle for infinite-dimensional systems that admits a clean spectral representation, and establish global well-posedness, asymptotic stability, identifiability of the learned components, and discrete energy dissipation.  A neural network realization, the Spectral OnsagerNet, enforces these guarantees by construction and is validated on representative dissipative and conservative systems.  Applied to microscopic particle-chain models, the framework discovers interpretable mesoscopic governing equations directly from microscopic data, opening a principled route from microscopic observations to macroscopic thermodynamic descriptions.
\end{abstract}

\providecommand{\keywords}[1]{
    \vspace{0.2cm} % Adds a little space below the abstract
    \noindent \small \textbf{\textit{Keywords:}} #1
}
% REQUIRED
\begin{keywords}
{scientific computing, multiscale modeling, Onsager principle, interpretability}
\end{keywords}

% REQUIRED
% \begin{MSCcodes}
% 00A71, % General theory of mathematical modeling
% 35M11, % Initial value problems for PDEs of mixed type
% 37M10, % Time series analysis of dynamical systems
% 82C05, % Classical dynamic and nonequilibrium statistical mechanics (general)
% 82C26 % Dynamic and nonequilibrium phase transitions (general) in statistical mechanics
% \end{MSCcodes}

\section{Introduction}
Mathematical modeling of dynamical processes is traditionally organized around a well-established pipeline \cite{weinan2011principles,oberkampf2010verification,pavliotis2008multiscale}:
one starts from a physical law in the form of a mathematical equation with a finite set of parameters, uses experiments or observations to determine these parameters, thereby obtains a fixed governing equation;
One then studies the well-posedness of the resulting model, designs stable and accurate discretizations,
and analyzes the resulting simulations.
This framework has been highly successful,
and scientific modeling and computing have accordingly centered on two tasks: the identification of an effective governing equation \cite{weinan2011principles,kennedy2001bayesian} and, once such an equation is available, its instance-wise mathematical and numerical analysis \cite{brenner2008mathematical,evans2022partial}.
% but its analytical organization is largely instance-wise: the theory is developed for a particular model once the effective equation is available \cite{brenner2008mathematical,evans2022partial}. Moreover, the corresponding analysis certifies the chosen equation, but not necessarily its fidelity as a representation of the underlying system, since such equations are often obtained only after approximation, closure, averaging, or phenomenological simplification \cite{weinan2011principles,kennedy2001bayesian,kevrekidis2009equation}.

In many problems of interest, the effective equation itself emerges from underlying microscopic descriptions through various limiting procedures, such as kinetic or hydrodynamic limits, homogenization or averaging arguments \cite{weinan2011principles,kevrekidis2009equation}. However, it is often unclear how to derive the limiting equation in a systematic way, and even when such a derivation is available, favorable analytical properties of the approximating models need not pass automatically to the limit.
This phenomenon is typical in multiscale dynamics, where the microscopic dynamics are often directly specified and analytically more accessible (yet expensive to simulate), whereas the corresponding mesoscopic spatio-temporal evolution equations are much more subtle to derive and analyze. Representative examples include the passage from particle chains to KdV-type equations \cite{bambusi2006metastability,zabusky1965interaction}, from molecular dynamics to kinetic equations \cite{weinan2011principles,gallagher2013newton}, and from the asymmetric exclusion process to the KPZ equation \cite{bertini1997stochastic,corwin2012kardar}.

These considerations motivate us to leverage structured approaches and deep learning to circumvent these difficulties. Under suitable hypotheses, one can construct a class whose elements are admissible equations involving operators, functionals, or functions as unknown components. The class is chosen to balance two requirements: it must be sufficiently expressive for some elements of it to represent the underlying system with adequate fidelity, and sufficiently structured for well-posedness, stability, and related analytical properties to be studied uniformly across the class. Training is then used to identify a specific representative within such a class. In this way, one works within a subspace of possible equations that is both close enough to the target system for modeling purposes and sufficiently structured to support a family-level analysis. This change of framework is summarized in \Cref{fig:intro}.

\begin{figure}
    \centering
    \includegraphics[width=\linewidth]{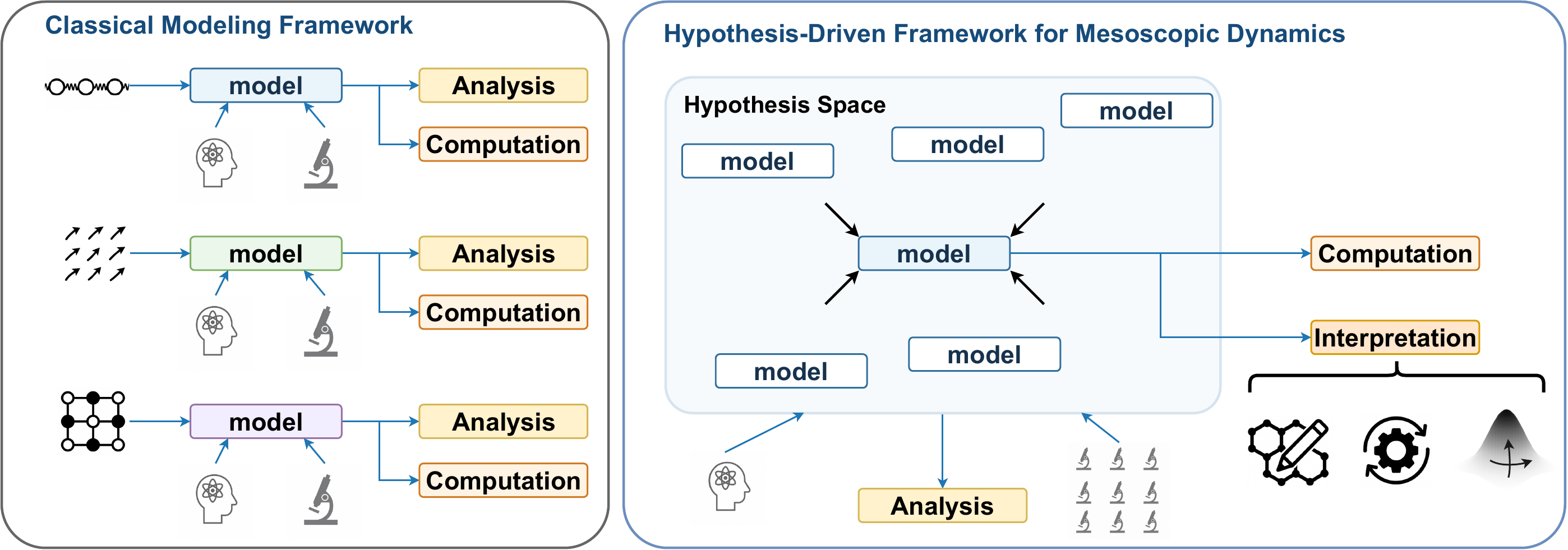} 
    \caption{Comparison of our framework and classical modeling framework. Classical approaches treat modeling and analysis case by case. Our approach instead first builds a hypothesis space with a unified analytical theory and then uses data to identify an admissible model from this space.}
    \vskip -0.5cm
    \label{fig:intro}
\end{figure}
In this paper, we demonstrate this program for the specific class of systems admitting a possibly infinite-dimensional Onsager-type dynamics, formulated as
\begin{equation}\label{eq:onsager-factorization-intro}
    \partial_t u = -\left[\mM(u) + \mW(u)\right]\frac{\delta V}{\delta u},
\end{equation}
which will be explained later.
In addition, we choose some structural hypotheses under which the operator constraints admit a pointwise spectral characterization in terms of Fourier multipliers. This defines a hypothesis class broad enough to include both dissipative and conservative thermodynamic systems, yet structured enough to admit a unified analysis.
Within this setting, we prove global well-posedness and asymptotic stability. By securing these properties, we establish an a priori analytical foundation for the entire hypothesis class, ensuring that our models are endowed with physical stability independent of, and prior to, any data-driven learning process. We then establish identifiability of the Onsager factorization \cref{eq:onsager-factorization-intro} up to a one-dimensional rescaling, and derive explicit conditions under which the forward Euler scheme preserves the energy-dissipation inequality. For numerical validation, we introduce \ournet, a neural architecture that enforces the structural constraints by construction. Numerical experiments on data generated from the KdV and Allen--Cahn equations confirm accuracy, stability, and recovery of the expected physical structure. 
Experiments on microscopic particle chains, including an FPUT chain \cite{fermi1955studies} with a known KdV-type continuum limit \cite{zabusky1965interaction} and an FENE chain \cite{warner1972kinetic} without a known closed-form continuum limit, further show that the method can recover interpretable mesoscopic dynamics directly from microscopic trajectory data.

The remainder of this paper is organized as follows. After discussing our approach in the context of the existing literature in \cref{sec:related}, \cref{sec:onsager} formally introduces the generalized Onsager principle on a Gelfand triple and establishes its spectral representation. \Cref{sec:theoretical-analysis} then provides the theoretical foundation for our framework, including rigorous analytical guarantees for global well-posedness, asymptotic stability, factorization identifiability, and discrete energy dissipation. Transitioning from theory to computational methodology, \cref{sec:network} details the neural architecture of \ournet. \Cref{sec:experiments} validates the proposed method through numerical experiments on both continuum PDE models and microscopic particle chains, and \cref{sec:conclusion} concludes the paper. For clarity of exposition, we only provide proof sketches for some of the propositions, and detailed versions are deferred to \cref{sm:proofs}.

\section{Related works}\label{sec:related}
The identification of governing equations from observational data has been studied extensively \cite{Brunton2016SINDy,chen2018neural,Champion2019,brunton2019data,du2022discovery}.  
Most existing works are formulated for finite-dimensional systems or for finite-dimensional state representations inferred from data.  
However, many problems of interest in scientific computing, including fluid flows, phase transitions, and pattern formation, are governed by evolution equations on infinite-dimensional state spaces.  
Accordingly, the object to be learned is not merely a vector field on $\mathbb R^d$, but an evolution system posed on a function space.

Recent data-driven approaches address such problems in two main ways.  
One line of work models the dynamics through finite-dimensional state-space or latent representations, as in ODE-Net \cite{chen2018neural,chen2025due}.  
Another line learns the solution map or evolution operator more directly, as in neural operator methods such as the Fourier Neural Operator \cite{li2020fourier} and DeepONet \cite{lu2021learning}.  
While these approaches provide expressive approximations of nonlinear dynamics, they typically treat the learned vector field or operator as an unconstrained map.  
As a result, they generally do not identify structural features of the dynamics, such as an underlying energy, a dissipative--conservative splitting, or a built-in stability mechanism, which are often central to the analysis of infinite-dimensional evolution systems.

A natural strategy to impose such structure is through physics-informed model classes rooted in variational or thermodynamic principles. Representative examples include Hamiltonian models for conservative systems \cite{bertalan2019learning,celledoni2023learning,greydanus2019hamiltonian,hansen2025learning, wu2020structure} and Onsager- or GENERIC-type architectures for dissipative dynamics \cite{Onsager1931a,Onsager1931b,Doi2011,Doi2015,Yu2021OnsagerNet,lee2021machine,zhang2022gfinns}. While related concepts have been successfully adapted for stochastic systems and model reduction \cite{Chen2023NCSConstructingCustomThermodynamics,zhu2025identifiable}, these frameworks remain predominantly confined to finite-dimensional settings. A notable recent exception is Stat-PINNs \cite{huang2025statistical}, which incorporates thermodynamic constraints into infinite-dimensional evolution systems; however, it is strictly limited to equilibrium systems. Consequently, there remains a critical need for structure-preserving frameworks capable of learning non-equilibrium dynamics in infinite-dimensional spaces.

\section{Generalized Onsager principle}\label{sec:onsager}
The objective of this paper is to model the evolution of physical fields of our interest over a spatial domain, specifically targeting mesoscopic dynamics. We characterize this mesoscopic regime as the continuous spatio-temporal evolution that bridges the critical gap between discrete, microscopic particle interactions and bulk, macroscopic thermodynamics. By capturing these intricate spatial and temporal dependencies, we seek to develop a mesoscopic model that is globally well-posed, thermodynamically consistent, and stable for long-time evolution. This section establishes the continuous mathematical framework necessary to achieve this goal. We first extend the classical finite-dimensional Onsager principle to infinite-dimensional spaces. After that, we introduce the core structural hypotheses, including the rigorous formulation of the underlying function spaces, that render the hypothesis class analytically and computationally tractable. Finally, we derive a spectral representation of these dynamics, bridging the continuous thermodynamic theory with the practical neural network implementation.

\subsection{From finite to infinite dimensions}
\label{sec:finite-to-infinite}

In finite dimensions, learning from trajectories is often formulated as learning an ODE vector field $\dot{z}_t = f(z_t)$.
The Neural ODE approach parameterizes $f$ as an unconstrained neural network \cite{chen2018neural}, which offers expressive power but does not generally preserve the structure underlying physical systems.
OnsagerNet provides a structured alternative for finite-dimensional dissipative dynamics \cite{Yu2021OnsagerNet,Chen2023NCSConstructingCustomThermodynamics}.
Rather than treating the vector field as an unconstrained black box, it decomposes the dynamics into a dissipative part and a conservative part, thereby extending the classical Onsager principle \cite{Onsager1931a} to strongly non-equilibrium regimes.
Specifically, the evolution for the time-dependent state variable $z_t$ is modeled as
\begin{equation}\label{eq:onsagernet-ode}
    \dot{z}_t = -[M(z_t) + W(z_t)] \nabla V(z_t),
\end{equation}
where $M(z_t)$ is a symmetric and positive semi-definite matrix, $W(z_t)$ is an anti-symmetric matrix, and $V(z_t)$ is a scalar function modeling a potential energy.
Each component is parameterized by a neural network and inferred from data.
By embedding the dissipative--conservative decomposition into the parameterization, OnsagerNet enforces thermodynamic consistency and improves long-term stability, which is generally not guaranteed by unconstrained Neural ODE approaches. These features make it suitable for learning finite-dimensional dissipative dynamics, with applications including Rayleigh--Bénard convection and polymer-chain stretching dynamics \cite{chen2018neural, Yu2021OnsagerNet}.

We next extend this principle to mesoscopic dynamics, where the evolving state is a spatial field rather than a finite-dimensional vector. Typical examples include strain fields in nonlinear particle chains, density fields in interacting particle systems and phase fields in phase-separation models. 
Such field-level descriptions provide spatially resolved effective closures of microscopic dynamics, thereby revealing how local microscopic laws generate wave propagation, energy transfer, pattern formation, phase separation, and relaxation at larger scales.

Consider a free-energy functional $V : \mX \to \mR$ and its Fr\'{e}chet derivative $\mu(u) = \delta V / \delta u \in \mX^*$, which plays the role of the generalized thermodynamic driving force \cite{NonequilibriumDynamics2010}.
We define the linear operators $\mM(u)$ and $\mW(u)$ mapping from the dual space (forces) to the primal space (fluxes/rates):
\begin{equation}
\mM(u), \mW(u) : \mX^* \to \mX.
\end{equation}
The \emph{generalized Onsager principle in infinite dimensions} then takes the form
\begin{equation}\label{eq:onsager-pde}
    \partial_t u = -\left[\mM(u) + \mW(u)\right]\mu(u), \qquad \mu(u) = \frac{\delta V}{\delta u} \in \mX^*.
\end{equation}
However, unlike the finite-dimensional setting, the infinite-dimensional formulation involving state-dependent functionals and operators poses significant computational and theoretical challenges.
A direct implementation via spatial discretization leads to prohibitively high-dimensional models, making naive parameterization and training costly.
Enforcing symmetry or anti-symmetry and positive semi-definiteness adds additional constraints that further increase the computational burden.
From a theoretical perspective, it is also nontrivial to ensure that the learned model is globally well-posed and remains stable over long-time evolution.
These challenges motivate the structural hypotheses we introduce next, which preserve the Onsager structure at the operator level, admit efficient spectral representations, and provide principled guarantees of energy dissipation and long-time stability.

\subsection{Structural hypotheses}
\label{sec:assumptions}
We aim to study a broad dynamic class of the following form
\begin{equation}\label{eq:cauchy}
    \partial_t u(t) = \mG(u(t)), \qquad u(0) = u_\init \in \mX
\end{equation}
with the solution space
\begin{equation}
    W(0, T; \mX) := \bigl\{u \in L^2(0, T; \mX) \bigm| \partial_t u \in L^2(0, T; \mX)\bigr\}.
\end{equation}
In this subsection, we formulate the structural hypotheses used in the subsequent analysis, which keep the hypothesis class analytically tractable while preserving sufficient approximation flexibility.
We start by stating several mild hypotheses on the function space $\mX$, where the field of the physical mesoscopic state of our interest stays.
\begin{Assum}\label[Assum]{ass:X-constraints}
    $\mX$ is a uniformly smooth and uniformly convex Banach space compactly embedded within $\mH=L^2(\Omega)$, and the solution domain $\Omega$ is bounded in $\mR^d$. 
\end{Assum}
Throughout this work, we use $\Hinner{\cdot}{\cdot}$ to denote the inner product on the Hilbert space $\mH=L^2(\Omega)$ and $\Xcouple{\cdot}{\cdot}$ to denote the duality pairing on $\mX$.
 For a Banach space $\mX$ compactly embedded within $\mH=L^2(\Omega)$, it is clear that we have the Gelfand triple
 \begin{equation}
    \mX \hookrightarrow \mH \cong \mH^* \hookrightarrow \mX^*.
\end{equation}
% Let $J:\mX\to\mX^*$ denote the duality mapping, then the uniform smoothness of $\mX$ guarantees that $J$ is single-valued and uniformly continuous on bounded sets. The mapping $J$ can also be defined as the Fr\'echet derivative of $\frac12\|\cdot\|_\mX^2$. 
Moreover, uniformly convexity implies the reflexivity of $\mX$ by Milman--Pettis theorem.

Sobolev spaces offer a concrete class of Banach spaces satisfying \cref{ass:X-constraints}. A Sobolev space $W^{s,p}(\Omega)$ is uniformly smooth and uniformly convex for $1<p<+\infty$, so it suffices to ensure that $W^{s,p}(\Omega)$ is compactly embedded within $L^2(\Omega)$, which is a direct result of the Sobolev embedding theorem. In particular, we need $s\ge d(1/p-1/2)_+$ for the Sobolev space $W^{s,p}(\Omega)$ to satisfy our hypothesis.

Having established the functional spaces, we now introduce the fundamental structural hypotheses governing the dynamics. These conditions are specifically designed to enforce thermodynamic consistency directly at the operator level.
\begin{Assum}[Dissipation structure]\label[Assum]{ass:symmetry}
The nonlinear operator $\mG : \mX \to \mX$ admits a decomposition
\begin{equation}
    \mG(u) = -\mL(u)\mu(u) \quad\text{and}\quad \mL(u) = \mM(u) + \mW(u).
\end{equation}
Here, for each $u\in\mX$, $\mu(u)\in\mX^*$, and $\mL(u)$ is a linear operator from $\mX^*$ to $\mX$. The operators $\mM$, $\mW$ and $\mu$ satisfy
    \begin{enumerate}
        \item $\mM(u):\mX^*\to\mX$ is symmetric and positive-semidefinite:
        % \begin{itemize}
        %     \item  $\Xcouple{\xi}{\mM(u)\eta} = \Xcouple{\eta}{\mM(u)\xi}$, $\forall\xi,\eta\in\mX^*$;
        %     \item $\Xcouple{\xi}{\mM(u)\xi} \ge 0$, $\forall\xi\in\mX^*$.
        % \end{itemize}
        \begin{equation}
            \Xcouple{\xi}{\mM(u)\eta} = \Xcouple{\eta}{\mM(u)\xi}\quad\forall\xi,\eta\in\mX^*
        \end{equation}
        and
        \begin{equation}
            \Xcouple{\xi}{\mM(u)\xi} \ge 0\quad\forall\xi\in\mX^*.
        \end{equation}
        % \begin{equation}
        %     \Xcouple{\xi}{\mM(u)\eta} = \Xcouple{\eta}{\mM(u)\xi} \quad \text{and} \quad \Xcouple{\xi}{\mM(u)\xi} \ge 0\quad\forall\xi,\eta\in\mX^*.
        % \end{equation}
        \item $\mW(u):\mX^*\to\mX$ is skew-symmetric:
        \begin{equation}
            \overline{\Xcouple{\xi}{\mW(u)\eta}} = -\Xcouple{\eta}{\mW(u)\xi}.
        \end{equation}
        \item $\mu(u)\in\mX^*$ is the Fr\'echet derivative of a coercive functional: there exists a Fr\'echet differentiable functional $V:\mX\to\mR$ such that
        \begin{itemize}
            \item $V(u_n)<+\infty$ implies $\norm{u_n}_\mX<+\infty$ for any sequence $\{u_n\}_n\subseteq\mX$;
            \item $\mu(u)=\delta V/\delta u$ as the Fr\'echet derivative of $V$.
        \end{itemize}
    \end{enumerate}
\end{Assum}
These structural hypotheses are a direct infinite-dimensional generalization of the finite-dimensional Onsager principle \cref{eq:onsagernet-ode}. Physically, the functional $V$ acts as a generalized free energy, while the operators $\mM$ and $\mW$ capture the dissipative and conservative transport mechanisms, respectively. A vital consequence of this imposed structure is the continuous dissipation of free energy, which guarantees thermodynamically consistent macroscopic behavior.
\begin{Prop}[Energy dissipation]\label[Prop]{prop:energy-dissipation}
    With \cref{ass:symmetry}, the free energy does not increase along any solution trajectory as
    \begin{equation}\label{eq:energy-dissipation}
        \ddt{} V(u) = \Xcouple{\mu(u)}{\partial_t u} = -\Xcouple{\mu(u)}{\mM(u)\mu(u)} \leq 0.
    \end{equation}
    The skew-symmetric part $\mW$ does not contribute to the energy evolution. Hence, any solution trajectory, if it exists, is uniformly bounded in $\mX$ due to the coercivity of $V$.\proofbox
\end{Prop}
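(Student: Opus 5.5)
The argument is a chain-rule computation followed by the structural identities of \cref{ass:symmetry}. Let $u\in W(0,T;\mX)$ be a solution of \cref{eq:cauchy} with $\mG(u)=-\mL(u)\mu(u)$. First I will justify the chain rule $\ddt{}V(u(t))=\Xcouple{\mu(u(t))}{\partial_t u(t)}$ for a.e.\ $t$. Since $V$ is Fr\'echet differentiable on $\mX$ with derivative $\mu(u)\in\mX^*$, the difference quotient satisfies
\begin{equation*}
V(u(t+h))-V(u(t))=\Xcouple{\mu(u(t))}{u(t+h)-u(t)}+o\bigl(\norm{u(t+h)-u(t)}_\mX\bigr).
\end{equation*}
Because $u$ and $\partial_t u$ lie in $L^2(0,T;\mX)$, $u$ is absolutely continuous into $\mX$. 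Its difference quotients converge to $\partial_t u(t)$ in $\mX$ for a.e.\ $t$, since $\mX$ is reflexive and hence has the Radon--Nikodym property. Dividing by $h$ and letting $h\to0$ gives the first equality in \cref{eq:energy-dissipation} at such $t$. I will also note that $t\mapsto V(u(t))$ is absolutely continuous, so that integrating the derivative recovers $V(u(t))-V(u(0))$. This is the step I expect to be the main obstacle. Mere Fr\'echet differentiability does not give a locally Lipschitz $V$. I would therefore either assume $\mu$ is bounded on bounded sets, which is natural in the setting and makes $V$ locally Lipschitz along the continuous, hence compact, trajectory, or state the result for classical $C^1$-in-time solutions and extend by density.

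Second, I will substitute the equation. Writing $\xi=\mu(u)$, we have $\Xcouple{\xi}{\partial_t u}=-\Xcouple{\xi}{\mM(u)\xi}-\Xcouple{\xi}{\mW(u)\xi}$. Skew-symmetry with $\eta=\xi$ gives $\overline{\Xcouple{\xi}{\mW(u)\xi}}=-\Xcouple{\xi}{\mW(u)\xi}$, so this quantity is purely imaginary. In the real setting it vanishes outright. In the complex setting, the left side $\ddt{}V$ and the $\mM$-term are both real, because symmetry makes $\Xcouple{\xi}{\mM(u)\xi}$ real and positivity makes it nonnegative, so the imaginary $\mW$-term must again be zero. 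This proves that $\mW$ does not contribute, and that $\ddt{}V(u)=-\Xcouple{\mu}{\mM(u)\mu}\le0$ follows from positive semidefiniteness.

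Finally, integrating in time gives $V(u(t))\le V(u_\init)<\infty$ for all $t\in[0,T]$. Suppose $\sup_t\norm{u(t)}_\mX=\infty$. Then there is a sequence $u(t_n)$ with $\norm{u(t_n)}_\mX\to\infty$ while $V(u(t_n))\le V(u_\init)$. This contradicts the coercivity clause of \cref{ass:symmetry}, read as boundedness of the sublevel sets of $V$. Hence the trajectory is uniformly bounded in $\mX$, with a bound depending only on $V(u_\init)$.
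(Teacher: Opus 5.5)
Your proposal is correct and follows the paper's argument, which consists only of the displayed computation: apply the chain rule, substitute $\partial_t u=-[\mM(u)+\mW(u)]\mu(u)$, note that the $\mW$-term vanishes by skew-symmetry and the $\mM$-term is nonnegative, and then invoke coercivity, read as boundedness of sublevel sets. Your extra care about the absolute continuity of $t\mapsto V(u(t))$ goes beyond what the paper writes, and the boundedness of $\mu$ on bounded sets that you would assume is supplied anyway by the local Lipschitz bound on $\mu$ in \cref{ass:lipschitz-v2}.
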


While \cref{prop:energy-dissipation} guarantees that trajectories remain uniformly bounded due to the coercivity of $V$, establishing the global well-posedness of the dynamics class \cref{eq:cauchy} requires two additional hypotheses. The first assumption is physically motivated by translational invariance on periodic domains, though formulated here as a general property of linear operators. Crucially, this convolution structure not only makes the continuous analysis tractable but also admits a diagonalized representation in the frequency domain. This explicitly bridges the continuous theory with our computational method, directly inspiring the spectral training and evolution of \ournet. Meanwhile, following standard well-posedness arguments, we impose local Lipschitz continuity to ensure the existence and uniqueness of the solution flow.

\begin{Assum}[Convolution structure]\label[Assum]{ass:convolution}
    The operators $\mM$ and $\mW$ act as convolutions in the physical domain $\Omega=\mT^d$.
    Namely, there exist $K_\mM(\cdot;u),K_\mW(\cdot;u)\in L^2(\Omega)$ such that for any $\eta\in\mX^*$,
    \begin{equation}
        \mM(u)(\eta)(x)=\Xcouple{\eta}{K_\mM(x-\cdot;u)},\ \mW(u)(\eta)(x)=\Xcouple{\eta}{K_\mW(x-\cdot;u)}.
    \end{equation}
\end{Assum}

% \begin{Assum}[Lipschitz continuity]\label[Assum]{ass:lipschitz}
%     The composite operator $\mL := \mM + \mW$ and the modified chemical potential $\mu_1 := \mu + \beta \Delta$ are Lipschitz continuous on bounded sets with respect to the $L^2$-norm, for some scalar $\beta > 0$.
%     That is, for any $R > 0$, there exist constants $L_1(R), L_2(R) > 0$ such that
%     \begin{equation}\label{eq:lipschitz}
%         \begin{aligned}
%         \|\mu_1(v_1)-\mu_1(v_2)\|&\le L_1(R)\|v_1-v_2\|_2,\\
%         \|\mL(v_1)-\mL(v_2)\|&\le L_2(R)\|v_1-v_2\|_2,
%         \end{aligned}
%         \qquad \forall\, v_1, v_2 \in B_{\mX}(R),
%     \end{equation}
%     where $B_\mX(R):=\{v\in\mX\mid\|v\|_\mX\le R\}$.
% \end{Assum}

\begin{Assum}\label[Assum]{ass:lipschitz-v2}
    The operators $\mL$ and $\mu$ are $\mX$-Lipschitz continuous and the operator $\mG$ is $L^2$-Lipschitz continuous on bounded sets. Formally, there exist $L_1(R)$, $L_2(R)$, and $L_3(R)$ such that
    \begin{equation}
        \begin{aligned}
            \|\mu(v_1)-\mu(v_2)\|_{\mX^*}&\le L_1(R)\|v_1-v_2\|_\mX,\\
            \|\mL(v_1)-\mL(v_2)\|_{\mX^*\to\mX}&\le L_2(R)\|v_1-v_2\|_\mX,\\
            \|\mG(v_1)-\mG(v_2)\|_2&\le L_3(R)\|v_1-v_2\|_2,
        \end{aligned}
        \qquad \forall\, v_1, v_2 \in B_{\mX}(R),
    \end{equation}
    where $B_\mX(R):=\{v\in\mX\mid\|v\|_\mX\le R\}$.
\end{Assum}
% Under these four assumptions, we focus on the following Cauchy problem.

% \begin{Prob}\label[Prob]{prob:cauchy}
%     Consider the Gelfand triple $\mX \subset L^2(\Omega) \subset \mX^*$ with $\Omega = \mathbb{T}^d$.
%     We seek to solve
%     \begin{equation}\label{eq:cauchy}
%         \partial_t u(t) = \mG(u(t)), \qquad u(0) = u_{\mathrm{init}} \in \mX,
%     \end{equation}
%     on the solution space
%     \begin{equation}
%         W(0, T; \mX) := \bigl\{u \in L^2(0, T; \mX) \bigm| \partial_t u \in L^2(0, T; \mX)\bigr\},
%     \end{equation}
%     where the nonlinear operator $\mG : \mX \to \mX$ admits the decomposition
%     \begin{equation}
%         \mG(u) = -\mL(u)\mu(u), \qquad \mL(u) = \mM(u) + \mW(u),
%     \end{equation}
%     with $\mu(u) = \delta V / \delta u \in \mX^*$ for a Fr\'{e}chet differentiable free-energy functional $V : \mX \to \mR$.
% \end{Prob}

The generalized Onsager principle \cref{eq:onsager-pde} provides a unifying framework for a diverse array of infinite-dimensional dynamics. Its scope encompasses many classical PDE dynamics, including purely dissipative systems (e.g., the Allen--Cahn and Cahn--Hilliard equations), purely conservative systems (e.g., the KdV and nonlinear Schr\"{o}dinger equations), and mixed dynamics typified by the incompressible Navier--Stokes equations. 
Since the generalized Onsager principle is formulated at the abstract operator level, it extends naturally to non-local integro-differential operators and the effective mesoscopic limits of discrete microscopic systems, such as the nonlinear particle chains explored later in our numerical experiments.
More details of these examples are exhibited in \cref{sec:examples}.

\subsection{Spectral representation}
\label{sec:spectral-representation}

A key consequence of \cref{ass:convolution} is that the operators $\mM(u)$ and $\mW(u)$ diagonalize in Fourier space. 

\begin{Thm}\label[Thm]{thm:spectral-representation}
    Under \cref{ass:convolution}, we define the Fourier series
    \begin{equation}
        \widehat\mM^{[k]}(u)=\Hinner{K_\mM(\cdot;u)}{e_k}\quad\text{and}\quad\widehat\mW^{[k]}(u)=\Hinner{K_\mW(\cdot;u)}{e_k},\quad\forall k\in\mZ
    \end{equation}
    for the kernels $K_\mM(\cdot;u)$ and $K_\mW(\cdot;u)$, respectively, where $\{e_k\}_{k \in \mathbb{Z}}$ denotes the Fourier basis of $L^2(\Omega)$. Then it holds for $\mM$ (and $\mW$, respectively) that
    \begin{equation}
        \sum_{|k|\le N}\widehat\mM^{[k]}(u)\hat\eta^{[k]}e_k\to\mM(u)
    \end{equation}
    in $\mX$ as $N\to+\infty$ and
    \begin{equation}
        \Xcouple{\xi}{\mM(u)\eta}=\sum_{k\in\mZ}\widehat\mM^{[k]}(u)\hat\eta^{[k]}\hat\xi^{[-k]}
    \end{equation}
    for any $\xi$ and $\eta\in\mX$, where we set    
    \begin{equation}
        \hat\xi^{[k]}=\Xcouple{\xi}{\overline{e_k}}\quad\text{and}\quad\hat\eta^{[k]}=\Xcouple{\eta}{\overline{e_k}},\quad\forall k\in\mZ.
    \end{equation}
    Furthermore, \Cref{ass:symmetry} holds if and only if
    \begin{equation}\label{eq:spectral-constraints}
        \widehat\mM^{[k]}(u) = \overline{\widehat\mM^{[-k]}(u)} \geq 0
        \qquad \text{and} \qquad
        \widehat\mW^{[k]}(u) = \overline{\widehat\mW^{[-k]}(u)} \in \mi\mR
    \end{equation}
    for all $k \in \mathbb{Z}$.
\end{Thm}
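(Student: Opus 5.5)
The plan is to compute the Fourier coefficients of $\mM(u)\eta$ directly from the convolution form in \cref{ass:convolution}. We then use the Gelfand triple to justify the two limiting statements, and finally read off the symmetry and sign conditions mode by mode by testing against Fourier basis elements.

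\textbf{Step 1 (Fourier coefficients of the output).} Fix $u$ and $\eta\in\mX^*$, and write $e_k(x)=|\Omega|^{-1/2}\me^{\mi k\cdot x}$. Translation invariance of the torus gives $e_k(x-y)=|\Omega|^{1/2}e_k(x)\overline{e_k(y)}$. Hence, formally,
\begin{equation*}
    K_\mM(x-y;u)=\sum_k \widehat\mM^{[k]}(u)\,|\Omega|^{1/2}e_k(x)\overline{e_k(y)}.
\end{equation*}
Pairing with $\eta$ in the $y$ variable produces $\sum_k\widehat\mM^{[k]}(u)\hat\eta^{[k]}e_k(x)$, with normalization constants absorbed into the convention. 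To make this rigorous without interchanging the sum and the pairing, I will instead compute $\Hinner{\mM(u)\eta}{e_k}$ directly. Since $\mM(u)\eta\in\mX\subset L^2$, its Fourier coefficients are well defined. Writing the $x$-integral and the pairing in $y$, I interchange them: the map $y\mapsto\int K_\mM(x-y)\overline{e_k(x)}\,dx$ equals $\widehat\mM^{[k]}\,\overline{e_k(y)}$ up to constants, and this exchange is legitimate because the Bochner integral commutes with the bounded functional $\eta$. The result is that the $k$-th Fourier coefficient of $\mM(u)\eta$ is $\widehat\mM^{[k]}(u)\hat\eta^{[k]}$.

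\textbf{Step 2 (convergence and the bilinear formula).} Given Step 1, the partial sums are the $L^2$ Fourier projections $P_N(\mM(u)\eta)$ of $\mM(u)\eta\in\mX$. Convergence in $L^2$ is immediate. Convergence in $\mX$ is the main obstacle, because Fourier projections need not converge in a general Banach space; they do in $W^{s,p}$ with $1<p<\infty$ by M.~Riesz boundedness of $P_N$. I would either argue that the trigonometric polynomials are dense and $P_N$ is uniformly bounded on $\mX$, or, failing that, settle for weak convergence in $\mX$. Weak convergence follows from reflexivity of $\mX$, which holds by Milman--Pettis, together with $L^2$ convergence and uniqueness of limits. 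For the pairing formula with $\xi\in\mX^*$ and $\mM(u)\eta\in\mX$, weak convergence already suffices: $\Xcouple{\xi}{P_N\mM(u)\eta}=\sum_{|k|\le N}\widehat\mM^{[k]}\hat\eta^{[k]}\Xcouple{\xi}{e_k}$. Noting that $\Xcouple{\xi}{e_k}=\hat\xi^{[-k]}$ because $e_k=\overline{e_{-k}}$, the limit gives the series.

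\textbf{Step 3 (characterization of \cref{ass:symmetry}).} For necessity, I test with $\xi,\eta$ ranging over the $e_j$, which lie in $\mX^*$ via the triple. Taking $\eta=\overline{e_k}$ and $\xi=e_k$ isolates a single mode, so positivity gives $\widehat\mM^{[k]}\ge0$. The same choice applied to skew-symmetry gives $\widehat\mW^{[k]}\in\mi\mR$. The relations between modes $k$ and $-k$ come from testing real-valued combinations such as $\cos$ and $\sin$ modes, using that $\mM$ acts on real fields so that the kernels are real; this yields $\widehat\mM^{[-k]}=\overline{\widehat\mM^{[k]}}$, and similarly for $\mW$. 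For sufficiency, I substitute the conditions into the series from Step 2 and check symmetry, nonnegativity, and skew-symmetry term by term after reindexing $k\mapsto-k$.

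The delicate point in this step is the conjugation conventions in the pairing (bilinear versus sesquilinear), which must be fixed consistently with the paper's skew-symmetry definition. The genuine analytic difficulty remains the $\mX$-convergence in Step 2.
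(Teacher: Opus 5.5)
Your Step 1 is exactly the paper's argument. You commute the Bochner integral with the bounded functional $\eta$ to get $\Hinner{\mM(u)\eta}{e_k}=\widehat\mM^{[k]}(u)\hat\eta^{[k]}$, after which the paper simply declares that everything else ``follows immediately by direct evaluation''. You are right that convergence in $\mX$ is the real issue, but your fallback in Step 2 fails. Reflexivity extracts weakly convergent subsequences only from \emph{bounded} sequences, and $\sup_N\norm{P_N\mM(u)\eta}_\mX<\infty$ is exactly what you do not know; the $L^2$ limit can identify a weak limit only after that bound is in hand. Weak convergence is also no cheaper target. If $P_Nv\rightharpoonup v$ for every $v\in\mX$, the uniform boundedness principle gives $\sup_N\norm{P_N}_{\mX\to\mX}<\infty$, and Mazur's lemma gives density of trigonometric polynomials, hence norm convergence. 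So \cref{ass:X-constraints} alone gives you neither the $\mX$-convergence nor the series for arbitrary $\xi\in\mX^*$; for $\xi\in L^2$ the series is just Parseval. You need an extra property of $\mX$: uniform boundedness of the Fourier partial-sum projections on $\mX$. This is automatic for $\mX=H^s$, where they are orthogonal projections (the case $\mX=H^1$ the paper actually uses), and it holds for $W^{s,p}(\mT)$, $1<p<\infty$, by M.~Riesz. The paper's proof does not supply this step either.

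Your Step 3 should not depend on Step 2. As written, sufficiency substitutes into the Step 2 series for arbitrary $\xi,\eta\in\mX^*$ and inherits the gap. Instead, verify symmetry, positivity and skewness for $\xi,\eta\in L^2$ via Parseval, or for trigonometric polynomials where all sums are finite. Then extend by continuity of $(\xi,\eta)\mapsto\Xcouple{\xi}{\mM(u)\eta}$ on $\mX^*\times\mX^*$ ($\mM(u)$ being bounded), using that $L^2$ is dense in $\mX^*$ because $\mX$ is reflexive and $\mX\hookrightarrow L^2$ is injective.

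For necessity, you must pin down the conventions. The pairing is bilinear: that is what makes $\mM(u)\eta$ linear in $\eta$ and produces $\widehat\mM^{[k]}\hat\eta^{[k]}$. Hence positivity and the conjugated skew-symmetry in \cref{ass:symmetry} can only be imposed on real-valued $\xi,\eta$. On complex fields, replacing $\xi$ by $\mi\xi$ would force $\Xcouple{\xi}{\mM(u)\xi}\equiv0$ and $\mW(u)=0$. Consequently your pair $\xi=e_k$, $\eta=\overline{e_k}$ is not an instance of $\Xcouple{\xi}{\mM\xi}\ge0$. It becomes legitimate once you observe that, for a real kernel, symmetry cancels the cross terms, so $\eta\mapsto\Xcouple{\bar\eta}{\mM(u)\eta}$ is nonnegative; likewise skewness makes $\eta\mapsto\Xcouple{\bar\eta}{\mW(u)\eta}$ purely imaginary. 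An equivalent route for $\mM$:
symmetry tested on modes gives $\widehat\mM^{[k]}=\widehat\mM^{[-k]}$;
reality of the kernel gives $\widehat\mM^{[-k]}=\overline{\widehat\mM^{[k]}}$;
positivity tested on $\cos(2\pi k\,\cdot)$ then gives $\widehat\mM^{[k]}\ge0$.
For $\mW$, skewness on modes gives $\widehat\mW^{[k]}=-\widehat\mW^{[-k]}$, which together with reality yields $\widehat\mW^{[k]}\in\mi\mR$.
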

\begin{proof}
    By definition,
    \begin{equation}
        \begin{aligned}
            \Hinner{\mM(u)(\eta)}{e_k}&=\int_\Omega\Xcouple{\eta}{K_\mM(x-\cdot;u)}\overline{e_k}(x)\md x\\
            &=\Xcouple{\eta}{\int K_\mM(x-\cdot;u)\overline{e_k}(x)\md x}\\
            &=\Xcouple{\eta}{\overline{e_k}(\cdot)\int K_\mM(x-\cdot;u)\overline{e_k}(x-\cdot)\md x}\\
            &=\Xcouple{\eta}{\Hinner{K_\mM(u)}{e_k}\overline{e_k}}=\widehat\mM^{[k]}(u)\hat\eta^{[k]}.
        \end{aligned}
    \end{equation}
    The second equality holds because the operator $\mu$ is continuously linear and the mapping $x\mapsto \overline{e_k(x)}K_\mM(x-\cdot;u)$ is Bochner integrable due to the boundness of $\Omega=\mT^d$ and the $L^2$-integrability of $K_\mM$. Similar deduction can be applied to $\mW$ as well, and all the conclusions follow immediately by direct evaluation.
\end{proof}
This theorem lies at the heart of our proposed framework. By establishing that the stringent infinite-dimensional constraints of symmetry, positive semi-definiteness, and anti-symmetry (\cref{ass:symmetry}) are strictly equivalent to pointwise conditions on Fourier multipliers, the problem is rendered both analytically and computationally tractable. The spectral characterization also serves as the foundational bridge connecting the theoretical guarantees derived in \cref{sec:theoretical-analysis} with the practical network parameterization detailed in \cref{sec:network}.

\section{Theoretical analysis}\label{sec:theoretical-analysis}
In this section, we establish a rigorous theoretical foundation for the generalized Onsager dynamics \eqref{eq:onsager-pde}. Throughout our analysis, all mathematical proofs inherently rely on the structural hypotheses formalized in \cref{ass:symmetry,ass:convolution,ass:lipschitz-v2}. We begin by proving the global well-posedness of the system and characterizing the long-time asymptotic behavior of its solution trajectories. To guarantee that the learned operators carry genuine physical meaning rather than acting as unconstrained fitting devices, we then establish the identifiability of the Onsager factorization when $\mM$ and $\mW$ are independent of the state $u$. Finally, bridging the continuous formulation with its practical implementation, we derive explicit stability conditions that ensure the time-discrete approximation strictly preserves the energy dissipation property.

\subsection{Global well-posedness}
\label{sec:global-wellposedness}
% \begin{Thm}[Global well-posedness]\label[Thm]{thm:global-wellposedness}
%     The Cauchy problem \cref{eq:cauchy} admits a unique solution on $t\in[0, +\infty)$ that depends continuously on the initial condition.
% \end{Thm}
We begin with the most basic analytical requirement for the proposed hypothesis class by establishing global well-posedness under the structural hypotheses. 
% \subsection{Proof of \cref{thm:global-wellposedness}}

\begin{Thm}[Global existence]\label{thm:global-existence}
    % Suppose that \cref{ass:symmetry,ass:convolution,ass:lipschitz-v2,ass:X-constraints} hold.
    For any initial condition $u(0)=u_\init\in\mX$, the Cauchy problem \cref{eq:cauchy} admits a global solution $u$ defined on $[0, +\infty)$.
\end{Thm}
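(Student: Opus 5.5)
The plan is a standard local-existence-plus-continuation argument. The a priori bound comes from \cref{prop:energy-dissipation}, and local solvability comes from the Lipschitz hypotheses in \cref{ass:lipschitz-v2}.

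\emph{Step 1 (local existence in $\mX$).} Fix $R>2\|u_\init\|_\mX$. For $v_1,v_2\in B_\mX(R)$ we split
\begin{equation*}
\mG(v_1)-\mG(v_2)=-[\mL(v_1)-\mL(v_2)]\mu(v_1)-\mL(v_2)[\mu(v_1)-\mu(v_2)].
\end{equation*}
This shows that $\mG:\mX\to\mX$ is Lipschitz on $B_\mX(R)$, with constant $L_2(R)\sup_{B_\mX(R)}\|\mu\|_{\mX^*}+\sup_{B_\mX(R)}\|\mL\|_{\mX^*\to\mX}L_1(R)$. Both suprema are finite: they are controlled by the values at $0$ plus $R$ times the Lipschitz constants. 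We then apply the Banach fixed-point theorem to the integral map
\begin{equation*}
\Phi(u)(t)=u_\init+\int_0^t\mG(u(s))\md s
\end{equation*}
on $C([0,\tau];B_\mX(R))$. This yields a unique local solution $u\in C^1([0,\tau];\mX)$, hence $u\in W(0,\tau;\mX)$, with $\tau$ depending only on $R$ through these constants.

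\emph{Step 2 (a priori bound).} Along this $C^1$ solution, the chain rule for the Fréchet differentiable $V$ gives exactly \cref{eq:energy-dissipation}. We expand $\Xcouple{\mu}{\mL\mu}$ using the spectral form of \cref{thm:spectral-representation}, or directly from \cref{ass:symmetry}: the skew part $\mW$ contributes a purely imaginary (zero real) term, and the $\mM$ part contributes a nonnegative one. Hence $V(u(t))\le V(u_\init)$. By the coercivity clause in \cref{ass:symmetry}, there is then an $R_0$, depending only on $V(u_\init)$, with $\|u(t)\|_\mX\le R_0$ for all $t$ in the existence interval.

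I expect the main obstacle to be this coercivity step. As stated, the clause ``$V(u_n)<\infty\Rightarrow\|u_n\|_\mX<\infty$'' is only meaningful when read as boundedness of sublevel sets. I would make that interpretation explicit, arguing by contradiction: if $\sup_t\|u(t)\|_\mX=\infty$, a sequence $t_n$ would give $V(u(t_n))\le V(u_\init)$ with $\|u(t_n)\|_\mX\to\infty$, contradicting the hypothesis.

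\emph{Step 3 (continuation).} Let $T^*$ be the maximal existence time. Take $R=2R_0+1$, so the local time $\tau(R)$ from Step 1 is uniform over all initial data in $B_\mX(R_0)$. Restarting from $u(T^*-\tau/2)$ then extends the solution past $T^*$, so $T^*=+\infty$. The $L^2$-Lipschitz bound $L_3$ is not needed for existence. It would be reserved for uniqueness and continuous dependence via a Gronwall estimate in $L^2$.
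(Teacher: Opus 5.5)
Your proof is correct, but it takes a genuinely different route from the paper's. You treat \cref{eq:cauchy} as a locally Lipschitz ODE in the Banach space $\mX$. The $\mX$-Lipschitz bounds on $\mL$ and $\mu$ make $\mG$ locally Lipschitz on $\mX$; this is the same fact the paper later invokes as $L_4(R)$ in \cref{thm:global-uniqueness-v2}. Picard--Lindel\"of then gives a local $C^1$ solution, the energy identity plus coercivity give a uniform $\mX$-bound, and a uniform local existence time closes the continuation argument.

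The paper instead works through a Fourier--Galerkin scheme:
\begin{itemize}
\item it builds approximations $\partial_t u_n=\mP_n\mG(u_n)$ and gets their local existence from Peano;
\item it proves energy dissipation at the Galerkin level via the commutation $\mP_n\mM\mu=\mM\mP_n^*\mu$, which is where the convolution structure enters;
\item it extracts a limit by Aubin--Lions with a diagonal argument, using the compact embedding and reflexivity of $\mX$;
\item it identifies the limit of the nonlinear term by a Minty-type argument that relies on the $L^2$-Lipschitz constant $L_3(R)$.
\end{itemize}

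Your argument is shorter and yields more. You get a classical solution in $C^1([0,\infty);\mX)$, with uniqueness essentially for free via Gr\"onwall. You never use the convolution hypothesis, the compact embedding, uniform convexity, or $L_3$. You also avoid the tacit need, when bounding $\partial_t u_n$ in $\mX$, for the projections $\mP_n$ to be uniformly bounded on $\mX$.

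What the paper's compactness route buys is robustness. It is the template that still works if the $\mX$-Lipschitz assumptions are weakened, say to $\mX$-boundedness of $\mG$ on bounded sets plus $L^2$-Lipschitz continuity; then no contraction in $\mX$ is available. Under the hypotheses as stated, though, your route suffices. Your reading of the coercivity clause as boundedness of sublevel sets is exactly how the paper uses it.
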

\begin{proof}[Proof sketch]
To construct a global solution, we first define a sequence of finite-dimensional approximate solutions $\{u_n\}_n$, establishing their local existence via Peano's theorem. We then leverage the energy dissipation property of the operator $\mL$ according to \cref{ass:symmetry,ass:convolution}, combined with the coercivity of the functional $V$, to derive a uniform bound for this sequence. Applying the Aubin--Lions lemma alongside a diagonal extraction process yields a weakly convergent subsequence. By invoking the Lipschitz continuity stipulated in \cref{ass:lipschitz-v2}, we demonstrate that this weak limit is indeed a global solution to the Cauchy problem. 
% Finally, uniqueness and continuous dependence on the initial condition follow directly from an application of Gr\"onwall's inequality. The detailed proof is exhibited as below.\tobediscussed{move the proof to SM?}
A detailed proof can be found in \cref{sec:proof-of-global-existence}.
\end{proof}
Let $J:\mX\to\mX^*$ denote the duality mapping, then the uniform smoothness of $\mX$ guarantees that $J$ is single-valued and uniformly continuous on bounded sets. The mapping $J$ can also be treated as the Fr\'echet derivative of $\frac12\|\cdot\|_\mX^2$. 
\begin{Lem}\label[Lem]{lem:J-norm}
    For any $u\in \mX$, the functional $J(u)\in\mX^*$ satisfies
    \begin{equation}
        \langle J(u),u\rangle_{\mX^*,\mX}=\|u\|_\mX^2,\quad\|J(u)\|_{\mX^*}=\|u\|_\mX.
    \end{equation}
\end{Lem}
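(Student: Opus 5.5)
The plan is to work directly from the definition of $J$ as the Fréchet derivative of $\phi(u)=\frac12\|u\|_\mX^2$, which the paper takes as available from the uniform smoothness of $\mX$. I will prove the first identity, then an upper bound on $\|J(u)\|_{\mX^*}$, and then a matching lower bound. The case $u=0$ is trivial: $\phi$ has a minimum at $0$, so $J(0)=0$ and both identities hold. Assume from now on that $u\neq0$.

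For the pairing identity, I differentiate $\phi$ along the ray through $u$. Set $g(t)=\phi(tu)=\frac{t^2}{2}\|u\|_\mX^2$ for $t\in\mR$. Fréchet differentiability and the chain rule give
\begin{equation}
g'(1)=\Xcouple{J(u)}{u}.
\end{equation}
Computing the derivative of the scalar function directly gives $g'(1)=\|u\|_\mX^2$. This proves the first claim.

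For the upper bound on the norm, I use convexity of $\phi$. A convex differentiable function satisfies the gradient inequality
\begin{equation}
\phi(v)\ge\phi(u)+\Xcouple{J(u)}{v-u}\qquad\forall v\in\mX.
\end{equation}
Fix $h\in\mX$ with $\|h\|_\mX=1$ and take $v=u+th$ with $t>0$. Combining the triangle inequality with the gradient inequality gives
\begin{equation}
t\,\Xcouple{J(u)}{h}\le\tfrac12\bigl(\|u\|_\mX+t\bigr)^2-\tfrac12\|u\|_\mX^2=t\|u\|_\mX+\tfrac{t^2}{2}.
\end{equation}
Dividing by $t$ and letting $t\to0^+$ yields $\Xcouple{J(u)}{h}\le\|u\|_\mX$. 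Applying the same bound to $-h$ controls the absolute value. If the spaces are complex, I will apply the argument to the real part, using the phase rotation $h\mapsto e^{\mi\theta}h$, which leaves the real-linear derivative structure intact. Taking the supremum over $h$ gives $\|J(u)\|_{\mX^*}\le\|u\|_\mX$.

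For the reverse inequality, I use the identity from the first step:
\begin{equation}
\|u\|_\mX^2=\Xcouple{J(u)}{u}\le\|J(u)\|_{\mX^*}\|u\|_\mX.
\end{equation}
Dividing by $\|u\|_\mX>0$ gives $\|J(u)\|_{\mX^*}\ge\|u\|_\mX$, so the two norms are equal.

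The only step that needs care is convention-dependent bookkeeping. In a complex space, the Fréchet derivative of $\phi$ is only real-linear, so the pairing $\langle J(u),\cdot\rangle$ has to be interpreted accordingly, for instance as the real part of a sesquilinear pairing. Once that convention is fixed, the argument above applies without change.
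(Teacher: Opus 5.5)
Your proof is correct and follows essentially the same route as the paper: you compute the directional derivative of $\frac12\|\cdot\|_\mX^2$ at $u$ in the direction $u$ to get $\Xcouple{J(u)}{u}=\|u\|_\mX^2$, bound the difference quotient via the triangle inequality to get $\|J(u)\|_{\mX^*}\le\|u\|_\mX$, and obtain the reverse inequality by testing on $u$. The detour through the convex gradient inequality is unnecessary, since the one-sided limit of the difference quotient already gives the bound, but it is harmless; your remark on real-linearity in the complex case is extra care that the paper's proof omits.
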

The proof is deferred to \cref{sec:proof-of-J-norm}, leading to the following uniqueness result.
\begin{Thm}[Global uniqueness and continuous dependence]\label{thm:global-uniqueness-v2}
    The global solution described in \cref{thm:global-existence} is unique.
    Moreover, if $\tilde{u}(t)$ and $\bar{u}(t)$ are two solutions with initial conditions $\tilde{u}_{\mathrm{init}}$ and $\bar{u}_{\mathrm{init}}$ respectively, then
    \begin{equation}\label{eq:continuous-dependence}
        \norm{\tilde{u}(t) - \bar{u}(t)}_\mX^2 \leq \me^{2 L_4(R)t}\norm{\tilde{u}_{\mathrm{init}} - \bar{u}_{\mathrm{init}}}_\mX^2, \qquad \forall t > 0,
    \end{equation}
    where $R$ is a uniform bound on $\norm{\tilde{u}(t)}_{\mX}$ and $\norm{\bar{u}(t)}_{\mX}$, and $L_4(R)$ is the $\mX$-Lipschitz constant for $\mG$.
\end{Thm}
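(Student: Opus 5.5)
The plan is a Gr\"onwall-type energy estimate for the difference $w(t)=\tilde u(t)-\bar u(t)$, measured in the $\mX$-norm through the duality mapping $J$. Uniqueness then follows as the special case $\tilde u_\init=\bar u_\init$.

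First we fix the radius $R$. By \cref{prop:energy-dissipation}, $V(\tilde u(t))\le V(\tilde u_\init)$ and $V(\bar u(t))\le V(\bar u_\init)$ for all $t\ge0$. The coercivity of $V$ in \cref{ass:symmetry} then gives a uniform bound $R$ on $\norm{\tilde u(t)}_\mX$ and $\norm{\bar u(t)}_\mX$. Next we identify $L_4(R)$ as the $\mX$-Lipschitz constant of $\mG$ on $B_\mX(R)$. For $v_1,v_2\in B_\mX(R)$ we split
\begin{equation*}
\mG(v_1)-\mG(v_2)=-\bigl[\mL(v_1)-\mL(v_2)\bigr]\mu(v_1)-\mL(v_2)\bigl[\mu(v_1)-\mu(v_2)\bigr].
\end{equation*}
The two pieces are bounded by the first two estimates of \cref{ass:lipschitz-v2}. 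This also needs $\sup_{B_\mX(R)}\norm{\mu}_{\mX^*}$ and $\sup_{B_\mX(R)}\norm{\mL}_{\mX^*\to\mX}$ to be finite, and both follow from Lipschitz continuity on the bounded set, since $\norm{\mu(v)}_{\mX^*}\le\norm{\mu(0)}_{\mX^*}+L_1(R)R$ and similarly for $\mL$. The result is $L_4(R)=L_2(R)\sup\norm{\mu}+L_1(R)\sup\norm{\mL}$.

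The core step is the differential inequality. Since $w\in W(0,T;\mX)$, the map $t\mapsto w(t)$ is absolutely continuous into $\mX$. Because $\mX$ is uniformly smooth, $\tfrac12\norm{\cdot}_\mX^2$ is Fr\'echet differentiable with derivative $J$, so by the chain rule
\begin{equation*}
\frac12\ddt{}\norm{w(t)}_\mX^2=\Xcouple{J(w(t))}{\partial_t w(t)}=\Xcouple{J(w)}{\mG(\tilde u)-\mG(\bar u)}
\end{equation*}
for a.e.\ $t$. Applying \cref{lem:J-norm}, namely $\norm{J(w)}_{\mX^*}=\norm{w}_\mX$, together with the Lipschitz bound above gives
\begin{equation*}
\frac12\ddt{}\norm{w}_\mX^2\le L_4(R)\norm{w}_\mX^2 .
\end{equation*}
Gr\"onwall's inequality then yields \cref{eq:continuous-dependence} on every $[0,T]$, hence for all $t>0$. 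Taking $\tilde u_\init=\bar u_\init$ forces $w\equiv0$, which is uniqueness.

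I expect the main obstacle to be justifying the chain rule. One must ensure that $t\mapsto\tfrac12\norm{w(t)}_\mX^2$ is absolutely continuous and that its derivative equals $\Xcouple{J(w)}{\partial_t w}$ a.e. when $w$ is only in $W(0,T;\mX)$. My first approach would be to write $w(t)=w(0)+\int_0^t\partial_s w\,\md s$ as a Bochner integral in $\mX$. Since $\mX$ is uniformly smooth, $\tfrac12\norm{\cdot}^2_\mX$ is $C^1$ with $J$ uniformly continuous on bounded sets, so the composition of a $C^1$ functional with an absolutely continuous $\mX$-valued curve is absolutely continuous with the expected derivative. The difference quotient estimate uses convexity, $\tfrac12\norm{a}^2-\tfrac12\norm{b}^2\le\Xcouple{J(a)}{a-b}$, with the inequality applied in both directions.
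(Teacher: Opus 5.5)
Your proposal is correct and matches the paper's proof: you bound both trajectories uniformly in $\mX$ via \cref{prop:energy-dissipation} and coercivity, differentiate $\norm{\tilde u-\bar u}_\mX^2$ through the duality map $J$, apply \cref{lem:J-norm} and the $\mX$-Lipschitz bound on $\mG$, and close with Gr\"onwall. Your additions are sound refinements rather than a different route: you derive $L_4(R)$ from the $\mX$-Lipschitz bounds on $\mu$ and $\mL$ in \cref{ass:lipschitz-v2}, which the paper takes for granted; you justify the chain rule for $W(0,T;\mX)$ curves; and you work with $\tfrac12\norm{w}_\mX^2$, so the factor $\me^{2L_4(R)t}$ comes out exactly, whereas the paper's last displayed inequality drops a factor of $2$.
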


\begin{proof}
    By \cref{prop:energy-dissipation} and the coercivity of $V$, both trajectories are uniformly bounded in $\mX$.
    Let $R$ denote this common bound.
    Then
    \begin{equation}
    \begin{aligned}
        \ddt{}\|\tilde u(t)-\bar u(t)\|_\mX^2&=2\Xcouple{J(\tilde u(t)-\bar u(t))}{\partial_t\tilde u(t)-\partial_t\bar u(t)}\\
        &=2\Xcouple{J(\tilde u(t)-\bar u(t))}{\mG(\tilde u(t))-\mG(\bar u(t))}\\
        &\le 2\norm{J(\tilde u(t)-\bar u(t))}_{\mX^*}\norm{\mG(\tilde u(t))-\mG(\bar u(t))}_{\mX}\\
        &\le L_4(R)\|\tilde u(t)-\bar u(t)\|_\mX^2,
    \end{aligned}
    \end{equation}
    where the last inequality follows from \cref{lem:J-norm} and the $\mX$-Lipschitz condition of $\mG$.
    The conclusion follows by Gr\"{o}nwall's inequality.
    Uniqueness is the special case $\tilde{u}_{\mathrm{init}} = \bar{u}_{\mathrm{init}}$.
\end{proof}
\subsection{Asymptotic behavior}
\label{sec:asymptotic}
After establishing global well-posedness, we turn our attention to the long-time asymptotic behavior of the system. According to \cref{prop:energy-dissipation}, the free-energy functional $V$ acts strictly as a Lyapunov function for the dynamics. Consequently, the asymptotic limits of the trajectories are linked to the kernel structure of the dissipation operator $\mM$. To formalize this relationship, we distinguish between two critical invariant sets.
\begin{Def}[Equilibrium and zero-dissipation sets]\label{def:invariant-sets}
    We define the following subsets of $\mX$:
    \begin{enumerate}
        \item $\mathcal E$ as the set of equilibria, consisting of states where the system is stationary.
        \begin{equation}
            \mE = \{ \varphi \in \mX \mid \mG(\varphi) = -[\mM(\varphi) + \mW(\varphi)]\mu(\varphi) = 0 \}.
        \end{equation}
        \item $\mathcal Z$ as the zero-dissipation set, consisting of states where the free energy dissipation vanishes.
        \begin{equation}
            \mathcal{Z} = \Bigl\{ \varphi \in \mX \Bigm\vert \Xcouple{\mu(\varphi)}{\mM(\varphi)\mu(\varphi)} = 0 \Bigr\}.
        \end{equation}
    \end{enumerate}
    Clearly, $\mathcal E \subseteq \mathcal Z$.
\end{Def}
By virtue of the global well-posedness, the generalized Onsager dynamics induces a well-defined continuous flow $S(t) : u_\init\mapsto u(t)$. The following theorem demonstrates that every trajectory asymptotically approaches the zero-dissipation set, which, under specific strict positivity conditions, collapses exactly to the set of stationary equilibria.

\begin{Thm}\label[Thm]{thm:convergence-zero-dissipation}
    By global well-posedness in \cref{sec:global-wellposedness},
    the solution induces a continuous flow $S(t) : u(0) \mapsto u(t)$. For any trajectory $u(t)$, the $\omega$-limit set
    \begin{equation}
        \omega(u(0)) := \bigcap_{T > 0} \overline{\{u(t) : t \geq T\}}
    \end{equation}
    is a non-empty and compact subset of $\mathcal Z$, which is positively invariant under $S(t)$. Moreover, if the dissipation operator $\mM(u)$ satisfies the elliptic condition, which means there exists $c>0$ such that
        \begin{equation}\label{eq:strict-positivity}
            \Xcouple{\xi}{\mM(u)\xi} \geq c\norm{\xi}_{\mX^*}^2, \qquad \forall \xi \in \mX^*,
        \end{equation}
    then $\mathcal Z = \mE$, and any trajectory converges asymptotically to the set of static equilibria.
\end{Thm}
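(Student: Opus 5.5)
The proof follows the classical LaSalle invariance principle, with $V$ as the Lyapunov function, in the topology of $\mH=L^2(\Omega)$. The closures in the definition of $\omega(u(0))$ are taken in $L^2$; the $\mX$-bounded trajectory is not expected to be precompact in $\mX$ itself. First, by \cref{prop:energy-dissipation} and coercivity of $V$, the trajectory satisfies $\sup_{t\ge0}\norm{u(t)}_\mX\le R$. Since $\mX\hookrightarrow\mH$ is compact (\cref{ass:X-constraints}), the orbit $\{u(t):t\ge T\}$ is relatively compact in $\mH$ for every $T$. Hence $\omega(u(0))$ is a decreasing intersection of nonempty compact sets, so it is nonempty and compact. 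It also lies in $\mX$: a uniformly convex space is reflexive, so $B_\mX(R)$ is weakly compact, and an $L^2$-limit of $u(t_n)$ coincides with a weak $\mX$-limit of a subsequence. Thus $\omega(u(0))\subseteq B_\mX(R)$.

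Next comes positive invariance. Take $\varphi=\lim u(t_n)$ in $L^2$ with $t_n\to\infty$. By the $L^2$-Lipschitz bound on $\mG$ over $B_\mX(R)$ (\cref{ass:lipschitz-v2}) and Grönwall applied in $L^2$, one gets $\norm{S(t)u(t_n)-S(t)\varphi}_2\le e^{L_3(R)t}\norm{u(t_n)-\varphi}_2$. Since $S(t)u(t_n)=u(t+t_n)$, it follows that $S(t)\varphi\in\omega(u(0))$. Here I need $S(t)\varphi$ to stay in $B_\mX(R')$ for some $R'$, which holds because $V(S(t)\varphi)\le V(\varphi)$. I also need $V(\varphi)$ finite and controlled. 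For that I would use weak lower semicontinuity of $V$ if available, or simply the fact that $\varphi\in\mX$ makes $V(\varphi)$ finite.

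The key step is showing $\omega(u(0))\subseteq\mathcal Z$. $V(u(t))$ is nonincreasing. I would argue it is bounded below, using that $V$ is continuous on $\mX$ and the orbit is bounded. This point is delicate, and if needed I would add the harmless normalization that $V$ is bounded below on bounded sets. With that, $V(u(t))\to V_\infty$, and integrating \cref{eq:energy-dissipation} gives $\int_0^\infty D(u(s))\,\md s<\infty$, where $D(\varphi)=\Xcouple{\mu(\varphi)}{\mM(\varphi)\mu(\varphi)}$.

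I plan to show that $V$ is constant on $\omega(u(0))$ and then use invariance: $V(S(t)\varphi)=V_\infty$ for all $t$, so $D(S(t)\varphi)=0$ and in particular $\varphi\in\mathcal Z$. This is the main obstacle, because $V$ is only continuous in $\mX$ while convergence holds only in $L^2$. My first attempt would be to avoid the values of $V$ altogether. The translates $v_n(s)=u(t_n+s)$, $s\in[0,1]$, converge in $C([0,1];\mH)$ to $S(s)\varphi$, by the Lipschitz estimate above. Meanwhile $\int_0^1 D(v_n(s))\,\md s\to0$. Then it suffices to show that $D$ is lower semicontinuous along $L^2$-convergent sequences bounded in $\mX$. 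I would derive this from the spectral form in \cref{thm:spectral-representation}, where $D(\varphi)=\sum_k\widehat\mM^{[k]}(\varphi)\,|\hat\mu^{[k]}(\varphi)|^2$ is a sum of nonnegative terms, so Fatou's lemma applies once each mode is continuous under weak-$\mX$/strong-$L^2$ convergence. Modewise continuity in turn follows from the $\mX$-Lipschitz bounds on $\mu$ and $\mL$ combined with interpolation of the compact embedding. If this fails, I would fall back on assuming $\mX$-precompactness of orbits, as in standard LaSalle arguments.

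Finally, under the elliptic condition \cref{eq:strict-positivity}, $D(\varphi)=0$ forces $\norm{\mu(\varphi)}_{\mX^*}=0$, so $\mG(\varphi)=-\mL(\varphi)\cdot0=0$. Hence $\mathcal Z\subseteq\mE$, and with $\mE\subseteq\mathcal Z$ we get $\mathcal Z=\mE$. Convergence of the trajectory to $\mE$, in the sense that $\operatorname{dist}_{\mH}(u(t),\mE)\to0$, then follows by the usual contradiction argument: a sequence $u(t_n)$ staying a fixed distance from $\omega(u(0))$ would, by compactness, have a subsequence with a limit point in $\omega(u(0))$, which is impossible.
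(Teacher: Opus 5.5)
Your skeleton is the paper's: coercivity plus the compact embedding give a nonempty compact $\omega(u(0))$, invariance comes from continuity of the flow, $V$ serves as the Lyapunov function, and $\mathcal Z=\mE$ follows from the elliptic bound. Your topological care is warranted. The paper proves $V$ constant on $\omega(u(0))$ by comparing $V(u(t_{n_k}))$ with $V(v_1)$ along sequences converging only in $\mH$. That silently uses $L^2$-continuity of $V$ on $\mX$-bounded sets, which fails already for $\frac12\norm{\nabla u}_2^2$. Your $L^2$-Gr\"onwall bound with $L_3$ is also what actually justifies $S(\tau)\lim_n u(t_n)=\lim_n u(t_n+\tau)$.

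The gap is in your replacement step. Modewise continuity of $\hat\mu^{[k]}$ (and $\widehat\mM^{[k]}$) under weak-$\mX$/strong-$L^2$ convergence does not follow from the $\mX$-Lipschitz bounds on $\mu$ and $\mL$. Interpolation only upgrades $L^2$-convergence of an $\mX$-bounded sequence to norms strictly weaker than $\norm{\cdot}_\mX$, whereas those Lipschitz constants are with respect to $\norm{\cdot}_\mX$. For a counterexample, on $\mX=H^1(\mT)$ let $V(u)=\frac12\norm{u}_{H^1}^2+\phi(\norm{u_x}_2^2)\Hinner{u}{e}$, with $e(x)=\cos 2\pi x$ and $\phi$ smooth and bounded with bounded derivatives, $\phi(0)=1$, $\phi(1)=0$. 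Then $V$ is coercive and $\mu$ is $\mX$-Lipschitz on bounded sets. Yet $u_n=\frac{\sqrt2}{2\pi n}\sin 2\pi nx\to0$ in $L^2$ with $\norm{(u_n)_x}_2=1$, and $\hat\mu^{[1]}(u_n)=0$ for $n\ge2$ while $\hat\mu^{[1]}(0)=\frac12$. So, as written, the Fatou step is unproved, and with it $\omega(u(0))\subseteq\mathcal Z$. Your fallback ($\mX$-precompact orbits) changes the hypotheses.

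The continuity you need should come from the third inequality in \cref{ass:lipschitz-v2}. It makes $\mG$, and hence every $\hat\mG^{[k]}=-(\widehat\mM^{[k]}+\widehat\mW^{[k]})\hat\mu^{[k]}$, continuous under $L^2$-convergence on $\mX$-bounded sets.

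\emph{Elliptic case.} Here you can bypass $D$ entirely. On the orbit, $\norm{\mG(u)}_\mX^2\le\norm{\mL(u)}_{\mX^*\to\mX}^2\norm{\mu(u)}_{\mX^*}^2\le C_Rc^{-1}D(u)$, so $\int_{t_n}^{t_n+1}\norm{\mG(u(s))}_2^2\md s\to0$. Meanwhile $\mG(u(t_n+\cdot))\to\mG(S(\cdot)\varphi)$ in $L^2$ uniformly on $[0,1]$. Hence $\mG(S(s)\varphi)=0$ for all $s\in[0,1]$, i.e.\ $\omega(u(0))\subseteq\mE$ with no extra assumption.

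\emph{General case.} Write $D(u)=\sum_k w_k(u)\abs{\hat\mG^{[k]}(u)}^2$ with $w_k=\widehat\mM^{[k]}/\bigl((\widehat\mM^{[k]})^2+\abs{\widehat\mW^{[k]}}^2\bigr)$, and $w_k=0$ where the denominator vanishes. For state-independent $\mM,\mW$ the weights are constants and your Fatou argument closes. For state-dependent, non-elliptic operators it needs some extra $L^2$-continuity of the multipliers, just as the paper's argument needs extra continuity of $V$.

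Two minor points. First, $V$ being bounded below on bounded sets is not an extra normalization: $\abs{V(v)-V(0)}\le R\sup_{B_\mX(R)}\norm{\mu}_{\mX^*}<\infty$ by \cref{ass:lipschitz-v2}. Second, Fatou gives $D(S(s)\varphi)=0$ only for a.e.\ $s$. To finish, note that $s\mapsto V(S(s)\varphi)$ is then constant and $C^1$, and its derivative at $s=0$ is $-D(\varphi)$, so $D(\varphi)=0$.
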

\begin{proof}
    A detailed proof can be found in \cref{sec:proof-convergence-zero-dissipation}.
    According to \cref{prop:energy-dissipation}, any trajectory $\{u(t)\}_{t\in[0,+\infty)}$ is uniformly bounded in $\mX$. By the Rellich--Kondrachov Theorem, the embedding $\mX=H^1(\Omega) \hookrightarrow L^2(\Omega)$ is compact. Therefore, we have a non-empty $\omega$-limit set $\omega(u(0))$.

We claim that the energy $V$ is a constant on the $\omega$-limit set. Otherwise there exist two sequence of snapshots with energy converging to distinct values, which contradict the monotonicity of $V$ along the trajectory.
Consider a trajectory lying on the $\omega$-limit set. By taking the time derivative of $V$, we can easily verify that the free energy dissipation vanishes, and consequently we have $\omega(u(0))\subseteq\mathcal Z$.
Furthermore, if the elliptic condition \cref{eq:strict-positivity} holds, then for any $\varphi \in \mathcal Z$, we have that $\Xcouple{\mu(\varphi)}{\mM(\varphi)\mu(\varphi)} = 0$ implies $\mu(\varphi) = 0$, so $\varphi \in \mE$.

\end{proof}
% \begin{proof}[Proof sketch]
%     \Cref{lem:energy-dissipation} and \cref{ass:coercivity} implies the boundness of the trajectory $\{u(t)\}_{t \geq 0}$ is uniformly bounded in $\mX$, and it follows by the compact embedding $\mX \hookrightarrow L^2(\Omega)$ that $\omega(u(0))$ is non-empty. Moreover, the monotonicity of $V$ on the trajectory ensures that $V$ is constant on $\omega(u(0))$. The proof details can be found in \cref{sec:proofs}.
% \end{proof}

In the general case where $\mM(u)$ has a nontrivial kernel, the zero-dissipation set $\mathcal Z$ may be strictly larger than the equilibrium set $\mE$.
Physically, this means that rather than settling into a static equilibrium, the system may converge to a persistent dynamic regime such as motion along a Hamiltonian orbit on a constant-energy surface driven entirely by the conservative operator $\mW$.

The significance of the convergence result lies in its strictly \emph{a priori} nature. We do not rely on the data-driven optimization process to discover, approximate, or enforce long-term stability; instead, up to fitting and discretization errors, any learned model is mathematically ensured to converge to a physically meaningful orbit. Consequently, the discovered mesoscopic dynamics remain physically interpretable and structurally sound over infinite time horizons.

\begin{Def}\label[Def]{def:elliptic-conservative}
    In fact, we have two physically distinct regimes of the generalized Onsager dynamics  \cref{eq:cauchy}:
    \begin{enumerate}
        \item \emph{Elliptic dissipative systems} (e.g., Allen--Cahn): $\mM$ satisfies the elliptic condition as in \cref{eq:strict-positivity}, the free energy decays strictly, and trajectories converge to static equilibria.
        \item \emph{Conservative systems} (e.g., KdV and other Hamiltonian PDEs): $\mM \equiv 0$, the free energy is exactly conserved, and trajectories may exhibit persistent dynamics on constant-energy surfaces.
    \end{enumerate}
\end{Def}
While intermediate regimes naturally exist where $\mM$ is non-zero but possesses a non-trivial kernel, violating the elliptic condition, a detailed theoretical treatment of such mixed dynamics falls outside the scope of the present study. Instead, we focus on the two principal extremes defined above.
Crucially, as we demonstrate in the subsequent section,
these regimes dictate entirely different identifiability properties for the learned Onsager factorization.

\subsection{Uniqueness of the Onsager factorization}\label{sec:factorization}
% Recall that a central objective of our framework is to recover \emph{interpretable} mesoscopic mechanisms. However, there is an inherent risk of degeneracy in the inverse problem: multiple distinct combinations of operators ($\mM$ and $\mW$) and free energy functionals $V$ could theoretically produce the exact same dynamics. If this factorization were highly non-unique, the learned components would act merely as a phenomenological black box rather than faithfully representing the true dissipative and conservative physics of the system. To address the physical identifiability of the learned dynamics, we demonstrate that when the operators $\mM$ and $\mW$ are state-independent, the generalized Onsager factorization is uniquely determined under mild spectral irreducibility conditions.
Recall that a central objective of our framework is to recover \emph{interpretable} mesoscopic mechanisms. Hence, it would be desirable if the operators ($\mM$ and $\mW$) and free energy functionals $V$ could be uniquely identified. While in general there may be a gauge dependence as shown in \cite{zhu2025identifiable}, we demonstrate that when the operators $\mM$ and $\mW$ are state-independent, the generalized Onsager factorization \cref{eq:onsager-pde} is uniquely determined under mild spectral irreducibility conditions.
\begin{Def}[spectral irreducibility]
    Consider a functional $V:\mX\to\mR$.
    We say that the functional $V$ is irreducible on the spectral space if it is second-order Fr\'echet differentiable, and there exists no partition $\mZ=S_1\sqcup S_2$ such that
    \begin{equation}
        \frac{\delta^2V}{\delta u^2}(e_p,e_q)=\left.\frac{\md^2}{\md\tau_1\md\tau_2}V(u+\tau_1e_p+\tau_2e_q)\right|_{\tau_1=\tau_2=0}\equiv0,\quad\forall p\in S_1,q\in S_2,
    \end{equation}
    where $\{e_k\}_{k\in\mZ}$ is the spectral basis of $\mH=L^2(\Omega)$.
\end{Def}
To build intuition for the spectral irreducibility condition, it is sufficient to note that nonlinear interaction within the integrand of the free energy potential $V$ will probably mix the spectral modes. For instance, the potential $V$ for the KdV dynamics has a cubic component $u^3$ within the integrand, whose computation mixes the spectral modes and thus the functional naturally satisfies the irreducibility requirement.

\begin{Thm}\label[Thm]{thm:factorization-uniqueness}
    Let $(\mM_1,\mW_1)$ and $(\mM_2,\mW_2)$ be independent of $u$ and satisfy 
    \begin{equation}\label{eq:2factorization}
        (\mM_1+\mW_1)\mu_1(u)=(\mM_2+\mW_2)\mu_2(u),\quad \mu_1=\frac{\delta V_1}{\delta u},\quad\mu_2=\frac{\delta V_2}{\delta u}
    \end{equation}
    with our previous hypotheses. Supposing that $V_2$ is irreducible on the spectral space, we have
    \begin{itemize}
        \item if $\mM_1$ is positive definite and $V_2$ is not a constant functional, then there exists a constant $\lambda>0$ such that
    \begin{equation}
        \mM_2=\lambda\mM_1,\quad\mW_2=\lambda\mW_1,\quad \mu_1=\lambda\mu_2;
    \end{equation}
        \item if $\mM_2=0$ and $\hat\mu_2^{[k]}\not\equiv0$ for all $k\in\mZ$, then $\mM_1\mu_1(u)\equiv0$. Furthermore, if $\ker\mW_1=0$, then $\mu_1=\lambda'\mu_2$ for a constant $\lambda'\in\mR$.
    \end{itemize}
\end{Thm}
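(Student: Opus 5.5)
Since $\mM_i,\mW_i$ do not depend on $u$, \Cref{thm:spectral-representation} turns both operators into constant Fourier multipliers. Writing $\hat\mL_i^{[k]}=\widehat\mM_i^{[k]}+\widehat\mW_i^{[k]}$, identity \cref{eq:2factorization} becomes the family of scalar identities
\begin{equation*}
    \hat\mL_1^{[k]}\,\hat\mu_1^{[k]}(u)=\hat\mL_2^{[k]}\,\hat\mu_2^{[k]}(u),\qquad\forall k\in\mZ,\ \forall u\in\mX,
\end{equation*}
where $\hat\mu_i^{[k]}(u)=\Xcouple{\mu_i(u)}{\overline{e_k}}$. By \cref{eq:spectral-constraints}, $\widehat\mM_i^{[k]}\ge0$ and $\widehat\mW_i^{[k]}\in\mi\mR$, so these are exactly the real and imaginary parts of $\hat\mL_i^{[k]}$. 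The plan is to solve mode by mode for a ratio and then use the irreducibility of $V_2$ to force that ratio to be independent of $k$.

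\textbf{First case.} Positive definiteness of $\mM_1$ gives $\widehat\mM_1^{[k]}>0$, so $\hat\mL_1^{[k]}\neq0$ and
\begin{equation*}
    \hat\mu_1^{[k]}(u)=c_k\,\hat\mu_2^{[k]}(u),\qquad c_k:=\hat\mL_2^{[k]}/\hat\mL_1^{[k]}.
\end{equation*}
Since $\hat\mu_i^{[k]}(u)=\delta V_i(u)[\overline{e_k}]$, this reads $\delta V_1=\mathcal C\,\delta V_2$, where $\mathcal C$ is the multiplier with symbol $c_k$.
\begin{itemize}
    \item Differentiating once more gives $\delta^2V_1(e_p,e_q)=c_{q}\,\delta^2V_2(e_p,e_q)$, up to the indexing convention (conjugation sends $q\mapsto -q$).
    \item Symmetry of the second Fréchet derivative of $V_1$ then yields $(c_q-c_p)\,\delta^2V_2(e_p,e_q)=0$ for all $p,q$.
    \item Partition $\mZ$ into level sets of $k\mapsto c_k$. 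Between different level sets the second derivative of $V_2$ vanishes identically. Irreducibility forbids any nontrivial partition, so $c_k\equiv c$ is constant.
\end{itemize}

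It remains to show $c=1/\lambda$ with $\lambda>0$. We have $\mu_1=c\,\mu_2$, and $\mu_1$ is real-valued. Since $V_2$ is not constant, $\mu_2\not\equiv0$, so $c\in\mR$. From $\hat\mL_2^{[k]}=c\,\hat\mL_1^{[k]}$, take real parts: $\widehat\mM_2^{[k]}=c\,\widehat\mM_1^{[k]}$. Because $\widehat\mM_1^{[k]}>0$ and $\widehat\mM_2^{[k]}\ge0$, this gives $c\ge0$. If $c=0$ then $\mu_1\equiv0$, and the identity $\hat\mL_2^{[k]}\hat\mu_2^{[k]}=0$ would force a degenerate $\mL_2$. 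I would exclude this either through the implicit nondegeneracy of the factorization or by treating it as a trivial boundary case; this is the point where I expect to be slightly loose. Setting $\lambda=c$ then gives $\mM_2=\lambda\mM_1$ and $\mW_2=\lambda\mW_1$ from the real and imaginary parts, while $\mu_1=\lambda\mu_2$ comes directly. (The statement's $\mu_1=\lambda\mu_2$ with $\mM_2=\lambda\mM_1$ is dimensionally inconsistent, since consistency would require $\mu_2=\lambda\mu_1$. I would carry the constant as derived and rename it at the end.)

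\textbf{Second case.} Set $\mM_2=0$. Pairing the identity with $\mu_1(u)$ does not immediately give $\langle\mu_1,\mM_1\mu_1\rangle=0$. Instead I would work modewise: the identity reads $(\widehat\mM_1^{[k]}+\widehat\mW_1^{[k]})\hat\mu_1^{[k]}=\widehat\mW_2^{[k]}\hat\mu_2^{[k]}$.
\begin{itemize}
    \item Where $\hat\mL_1^{[k]}\neq0$, we get $\hat\mu_1^{[k]}=d_k\hat\mu_2^{[k]}$ with $d_k=\widehat\mW_2^{[k]}/\hat\mL_1^{[k]}$.
    \item Irreducibility of $V_2$, via the same symmetry argument applied on the modes where $\hat\mu_2^{[k]}\not\equiv0$ (which is all modes, by hypothesis), forces $d_k\equiv d$ constant.
    \item Reality of $\mu_1$ and $\mu_2$ forces $d\in\mR$. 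Then $\widehat\mM_1^{[k]}d=\mathrm{Re}(\widehat\mW_2^{[k]})=0$, since $\widehat\mW_2^{[k]}$ is purely imaginary. Hence either $d=0$, giving $\mu_1\equiv0$, or $\widehat\mM_1^{[k]}=0$. In both cases $\mM_1\mu_1\equiv0$.
    \item Modes with $\hat\mL_1^{[k]}=0$ already have $\widehat\mM_1^{[k]}=0$, so they contribute nothing to $\mM_1\mu_1$.
\end{itemize}
If moreover $\ker\mW_1=0$, then $\hat\mL_1^{[k]}$ never vanishes on the relevant modes, and the constant $d=\lambda'$ gives $\mu_1=\lambda'\mu_2$.

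\textbf{Main obstacle.} The crux is the irreducibility step. One must justify that differentiating the modewise identity and using symmetry of $\delta^2V_1$ is legitimate for complex basis elements $e_p$ with real-valued $V$, and keep the index and conjugation conventions consistent. One must also check that the level-set partition is a genuine partition of $\mZ$ in the sense of the irreducibility definition.
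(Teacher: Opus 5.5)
Your strategy is the paper's. You pass to constant Fourier multipliers and take the modewise ratio (the paper's $\lambda_k$ on $Z_1$). You make it constant via symmetry of the second variation plus irreducibility of $V_2$, get reality from conjugate symmetry, and read off signs from real parts. The first case is sound up to two repairs.

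\emph{The loose end $c=0$.} This is closed by the coercivity of $V_1$ in the standing hypotheses. There $Z_1=\mZ$, so $c=0$ forces $\hat\mu_1^{[k]}\equiv0$ for every $k$. Then $V_1$ is constant, which is not coercive. Together with $\widehat\mM_2^{[k]}=c\,\widehat\mM_1^{[k]}\ge0$, this gives $c>0$. The paper's ``positiveness follows'' is just as terse, since semidefiniteness alone yields only $c\ge0$.

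\emph{The dimensional-inconsistency remark is mistaken.} Since $(\mM_1+\mW_1)(\lambda\mu_2)=(\lambda\mM_1+\lambda\mW_1)\mu_2$, the stated conclusion is consistent. It is exactly what you derived with $\lambda:=c$, so drop the ``$c=1/\lambda$'' and the renaming.

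The genuine gap is in the second case. Your ratio $d_k=\widehat\mW_2^{[k]}/\hat\mL_1^{[k]}$ exists only on $Z_1=\{k:\hat\mL_1^{[k]}\neq0\}$. So differentiating and using symmetry gives $(d_k-d_l)\,\delta^2V_2(u)(e_{-k},e_{-l})=0$ only for $k,l\in Z_1$. The hypothesis $\hat\mu_2^{[k]}\not\equiv0$ does not enlarge this set: at $k_0\notin Z_1$ it only forces $\widehat\mW_2^{[k_0]}=0$ and leaves $\hat\mu_1^{[k_0]}$ unconstrained. Irreducibility concerns partitions of all of $\mZ$. Two level sets of $d$ may be linked in the Hessian of $V_2$ only through modes outside $Z_1$, and then nothing forces them to agree. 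One can build examples where $d$ takes two values, with $\widehat\mW_1^{[0]}=0$ and $V_2$ coupling the shells $|k|=1$ and $|k|\ge2$ only through the mean mode. The paper's lemma makes the same unjustified step.

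You need much less than constancy: only $d_k\in\mR$ whenever $\widehat\mM_1^{[k]}>0$. For $k=0$ this is automatic, since $d_0=\overline{d_0}$. For $k\neq0$, both $\pm k\in Z_1$ and $d_{-k}=\overline{d_k}$ by the conjugate symmetry of the multipliers. The pair version of your argument then gives $(d_k-d_{-k})\,\delta^2V_2(u)(e_k,e_{-k})=0$.

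Here $\delta^2V_2(u)(e_k,e_{-k})$ is the Laplacian of $V_2$ on the real plane $u+\mathrm{span}_{\mR}\{\mathrm{Re}\,e_k,\mathrm{Im}\,e_k\}$. If it vanished identically, $V_2$ would be harmonic on such a plane with bounded sublevel sets. It would then be bounded below and hence constant by Liouville, contradicting coercivity. So $d_k$ is real, and your real-part step gives $d_k\widehat\mM_1^{[k]}=0$. Hence $\widehat\mM_1^{[k]}\hat\mu_1^{[k]}\equiv0$ for all $k$; this uses coercivity of $V_2$, not irreducibility.

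For the final claim, $\ker\mW_1=0$ means $\widehat\mW_1^{[k]}\neq0$ for all $k$. So $Z_1=\mZ$, and your irreducibility argument is valid as written there.
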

The detailed proof can be found in \cref{sec:proof-factorization-uniqueness}.
\begin{Rem}
Crucially, the irreducibility requirement is a natural and mild assumption rather than a restrictive limitation. If a functional $V$ is reducible, it physically implies that the state variable can be partitioned into two or more mutually non-interacting subsystems. In such cases, the dynamics can simply be decomposed into their respective irreducible components. The uniqueness result of \cref{thm:factorization-uniqueness} can then be applied independently to each block. Consequently, the generalized Onsager factorization remains unique up to a set of independent, one-dimensional scaling factors, with exactly one scaling constant corresponding to each decoupled physical subsystem.
\end{Rem}

\Cref{thm:factorization-uniqueness} offers identifiability guarantees for the generalized Onsager principle when $\mM$ and $\mW$ are state-independent. Interpreting the left-hand side of \cref{eq:2factorization} as the learned dynamics and the right-hand side as the reference, two physically distinct regimes emerge.
\begin{enumerate}
    \item \emph{Elliptic dissipative systems}: Every learned factorization reproducing the reference dynamics recovers $\mM$, $\mW$, and $\mu$ simultaneously, each up to a common positive scalar. The dissipative and conservative operators are therefore individually identifiable, and the learned energy is an affine transformation of the true one.
    \item \emph{Conservative systems}: \cref{thm:factorization-uniqueness} forces the learned dissipation contribution $\mM_1\mu_1$ to vanish identically, regardless of $\mM_1$ itself. If additionally $\ker\mW_1=0$, the learned functional derivative coincides with the true one up to a real scalar, so the learned energy is conserved along every trajectory. In particular, the framework cannot spuriously inject dissipation into a genuinely conservative system.
\end{enumerate}
In both regimes, the Onsager factorization is unique modulo a one-dimensional rescaling.
% ---the minimal ambiguity consistent with the homogeneous structure of \cref{eq:2factorization}. 
For conservative systems an additional freedom remains in $\mM_1$, but it is physically inert since its contribution to the dynamics vanishes. These guarantees ensure that the learned energy functional and transport operators carry genuine physical meaning rather than serving as unconstrained fitting devices, thereby providing a rigorous foundation for interpreting the outputs of {\ournet} as discovered governing laws. Some experimental evidence for the identifiability can be found in \cref{sec:identifiability-of-dynamics}.

\subsection{Discrete energy dissipation}\label{sec:discrete}
While the continuous formulation guarantees thermodynamic consistency, practical implementation requires evaluating the system on a computational grid and advancing it via a numerical integrator. In our framework, we employ the explicit forward Euler method, where continuous stability does not automatically ensure numerical stability due to the discrepancies introduced by spatial and temporal discretization.

Let $\{u_p^s\}_{p\in G}$ denote the discrete state at time level $s$, where $u_p^s = u_p(s\Delta t)$ and $p$ indexes the grid points in a fixed grid $G$. We consider the time-discrete scheme
\begin{equation}\label{eq:discrete-update}
    u^{s+1} = u^s - \Delta t [\mM(u^s) + \mW(u^s)] \mu(u^s).
\end{equation}
Our goal is to identify conditions on the time step $\Delta t$ under which the discrete evolution inherits an energy-dissipation property. The estimate is obtained by applying the descent lemma to the discrete increment from $u^s$ to $u^{s+1}$.

% In practice, the physical features are stored in grid points with a fixed resolution, and the numerical integrator is chosen to be the explicit forward Euler. So the continuous stability does not automatically ensures numerical stability due to the discrepancy resulting from the discretization in space and time.

% Note that in the discrete case, the system is initialized as a finite-dimensional vector $\{u_p^0\}_{p\in G}$, and our goal is to continuously maintain and update the system state $\{u_p^k\}_{p\in G}$ for each time step $s$, where we use the abbreviation $u_p^s=u_p(s\Delta t)$ and let $p$ stand for a certain spatial coordinate from a fixed grid $G$.
% We derive a condition on the time step $\Delta t$ to prevent numerical divergence for the discrete update
% \begin{equation}\label{eq:discrete-update}
%     u^{s+1} = u^s - \Delta t [\mM(u^s) + \mW(u^s)] \mu(u^s).
% \end{equation}
% The decrease of energy can be bounded by applying the Descent Lemma to the step from $u^s$ to $u^{s+1}$.
\begin{Thm}[Discrete energy dissipation]\label{thm:discrete-stability}
The discrete evolution satisfies the energy dissipation property $V(u^{s+1}) \le V(u^s)$ if the time step $\Delta t\le2\kappa L_1(R_0)^{-1}$, where
    \begin{equation}
        \kappa=\frac{2\Xcouple{\mu^s}{\mM^s\mu^s}}{\norm{(\mM^s+\mW^s)\mu^s}_\mX^2}
    \end{equation}
    and $R_0$ depends only on $V$.
    Here, we use the abbreviations $\mM^s\equiv\mM(u^s)$, $\mW^s\equiv\mW(u^s)$, and $\mu^s\equiv\mu(u^s)$. Moreover, 
    \begin{itemize}
        \item for elliptic dissipative dynamics where \cref{eq:strict-positivity} holds, the parameter $\kappa\ge\kappa_0>0$, and the constant $\kappa_0$ depends only on $\mM$, $\mW$, and $V$.
        \item for conservative dynamics where $\mM=0$, the one-step variation of the potential
        \begin{equation}
            \abs{V(u^{s+1})-V(u^s)}\le C(\tilde R)(\Delta t)^2=\mathcal O(\Delta t^2),
        \end{equation}
        where the parameter $\tilde R$ depends on $\mW$, $V$, and the maximal $\mX$-norm across the discrete trajectory. It follows that the potential changes $\abs{V(u^s)-V(u^0)}$ on the whole trajectory is $\mathcal O(\Delta t)$
        % \begin{equation}\label{eq:Vus-increment}
        %     \left|V(u^s)-V(u^0)\right|=\mathcal O(\Delta t)
        % \end{equation}
        for any $s\Delta t\le T$ if the trajectory does not blow up within finite time $T$.% The parameter $\tilde R$ depends on $\mW$, $V$, and the maximal $\mX$-norm across the discrete trajectory.
    \end{itemize}
        % The discrete evolution \cref{eq:discrete-update} satisfies the energy dissipation property $V(u^{s+1}) \le V(u^s)$ if the time step $\Delta t\le2\kappa L_1(R_0)^{-1}$, where $\kappa$ is determined by the dynamics ($\mM$, $\mW$, and $V$) and the current snapshot $u^s$. $R_0$ depends only on $V$. Moreover,
        % \begin{itemize}
        %     \item for elliptic dissipative systems, the parameter $\kappa$ has a positive lower bound $\kappa_0$ independent of the solution state $u^s$;
        %     \item for conservative systems, the one-step variation of potentials for a discrete trajectory is of order $\mathcal O(\Delta t^2)$ %, and the change of potentials on the whole trajectory is of order $\mathcal O(\Delta t)$
        %     if the trajectory does not blow up within finite time.
        % \end{itemize}
\end{Thm}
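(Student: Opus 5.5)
The plan is to apply the descent lemma to $V$ along the segment from $u^s$ to $u^{s+1}$, using the $\mX$-Lipschitz continuity of $\mu=\delta V/\delta u$ from \cref{ass:lipschitz-v2}. Write $d^s := -(\mM^s+\mW^s)\mu^s$, so that $u^{s+1}=u^s+\Delta t\, d^s$. By the fundamental theorem of calculus along the segment,
\begin{equation}
    V(u^{s+1})-V(u^s)=\Delta t\Xcouple{\mu^s}{d^s}+\Delta t\int_0^1\Xcouple{\mu(u^s+\tau\Delta t\, d^s)-\mu^s}{d^s}\md\tau .
\end{equation}
The skew-symmetry of $\mW^s$ gives $\Xcouple{\mu^s}{d^s}=-\Xcouple{\mu^s}{\mM^s\mu^s}$, exactly as in \cref{prop:energy-dissipation}. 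If the segment lies in a ball $B_\mX(R_0)$, the remainder is at most $\tfrac12 L_1(R_0)\Delta t^2\|d^s\|_\mX^2$. Hence $V(u^{s+1})\le V(u^s)$ as soon as $\tfrac12 L_1\Delta t\|d^s\|_\mX^2\le\Xcouple{\mu^s}{\mM^s\mu^s}$, that is, $\Delta t\le \kappa L_1^{-1}$ with $\kappa$ as defined in the statement. This is within the stated bound $2\kappa L_1^{-1}$ up to the normalization of $\kappa$; I will track constants so that the final threshold matches.

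The main obstacle is choosing $R_0$ depending only on $V$. The sublevel set $\{V\le V(u^0)\}$ is bounded by coercivity, but the whole segment must stay in a fixed ball, and this is circular because the bound on $\Delta t$ itself depends on $R_0$. I would argue by induction on $s$, assuming $V(u^s)\le V(u^0)$. Let $\tau^*$ be the first exit time of $u^s+\tau\Delta t\,d^s$ from the sublevel set enlarged to $\{V\le V(u^0)+1\}$, and let $R_0$ bound that enlarged set. On $[0,\tau^*]$ the descent estimate holds with $L_1(R_0)$, so $V$ does not increase there. This contradicts exit before $\tau=1$, and we close by a continuity argument.

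For the elliptic case, the goal is a lower bound $\kappa\ge\kappa_0>0$. By \cref{eq:strict-positivity} the numerator is at least $2c\|\mu^s\|_{\mX^*}^2$. The denominator is at most $\|\mM^s+\mW^s\|_{\mX^*\to\mX}^2\|\mu^s\|_{\mX^*}^2$. On the ball $B_\mX(R_0)$, the operator norm of $\mL$ is bounded using its Lipschitz bound from \cref{ass:lipschitz-v2} together with its value at $0$. This gives $\kappa_0=2c/\sup_{B_\mX(R_0)}\|\mL\|^2$, which depends only on $\mM$, $\mW$ and $V$.

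For the conservative case $\mM=0$, the first-order term vanishes identically. The same identity then gives $|V(u^{s+1})-V(u^s)|\le\tfrac12L_1(\tilde R)\Delta t^2\|\mW^s\mu^s\|_\mX^2$. Here $\tilde R$ bounds the trajectory, and $\|\mW^s\mu^s\|_\mX$ is bounded on $B_\mX(\tilde R)$ via the Lipschitz bounds on $\mL$ and $\mu$ in \cref{ass:lipschitz-v2}. This yields $C(\tilde R)\Delta t^2$. Summing over $s\le T/\Delta t$ steps gives $|V(u^s)-V(u^0)|\le C(\tilde R)T\Delta t=\mathcal O(\Delta t)$, under the no-blow-up assumption that keeps $\tilde R$ finite.
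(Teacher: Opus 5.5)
Your proof is correct, and its overall structure matches the paper's: the descent lemma along the segment, induction on $V(u^s)\le V(u^0)$, a lower bound on $\kappa$ through $\sup\norm{\mL}$ on a ball in the elliptic case, and a purely second-order remainder summed over $T/\Delta t$ steps in the conservative case.

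The genuinely different step is how you break the circularity in the radius. The paper argues directly. It uses $\norm{u^s}_\mX\le D$ (the bound on $\{V\le V(u^0)\}$), the triangle inequality $\norm{u^{s+1}}_\mX\le D+\Delta t\norm{(\mM^s+\mW^s)\mu^s}_\mX$, and the step restriction to get an explicit step-dependent radius $R^s$. It then bounds $R^s$ by $R_0=3D+2\norm{\mu(0)}_{\mX^*}L_1(D)^{-1}$. That last bound uses $\Xcouple{\mu^s}{\mM^s\mu^s}\le\norm{\mu^s}_{\mX^*}\norm{(\mM^s+\mW^s)\mu^s}_\mX$, the Lipschitz bound on $\mu$, and, tacitly, monotonicity of $R\mapsto L_1(R)$.

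Your first-exit argument on $\{V\le V(u^0)+1\}$ needs only coercivity and continuity of $V$ and no a priori bound on $\norm{\mu^s}_{\mX^*}$. It also shows $V\le V(u^s)$ along the whole segment, not just at its endpoint. The slack $+1$ is exactly what lets the continuity step close. The paper's route buys an explicit formula for $R_0$ in terms of $D$, $\mu(0)$ and $L_1(D)$; yours is shorter and more transparent. In both, $R_0$ depends on $V$ through the level $V(u^0)$.

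Do not try to make your threshold match $2\kappa L_1(R_0)^{-1}$ with $\kappa$ as displayed in the statement. The descent lemma gives exactly $\Delta t\le 2\Xcouple{\mu^s}{\mM^s\mu^s}/(L_1\norm{(\mM^s+\mW^s)\mu^s}_\mX^2)$, which is what you derived, and nothing better.

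The paper's own proof defines $\kappa$ without the factor $2$, so its threshold $2\kappa L_1(R_0)^{-1}$ coincides with yours. The extra $2$ in the displayed $\kappa$ is a typo, and with it the claim is false. For instance, take $\mX=H^1(\mT)$, $V(u)=\frac{L}{2}\norm{u}_{H^1}^2$, $\mM$ the inverse Riesz map $(1-\Delta)^{-1}:H^{-1}\to H^1$, and $\mW=0$. This satisfies all hypotheses with $L_1=L$ and displayed $\kappa=2$. Yet a step of size $3/L<4/L$ gives $u^{s+1}=-2u^s$, so $V$ quadruples.

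Your elliptic constant $2c/\sup\norm{\mL}^2$ corresponds to the paper's $\kappa_0$ under the same renormalization.
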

\begin{proof}[Proof sketch]
By the Lipschitz continuity of $\mu_1$, applying the descent lemma we have
\begin{equation}\label{eq:Descent-Lemma}
        V(u^{s+1}) - V(u^s) \le \Xcouple{\mu^s}{\mM^s\mu^s}\Delta t + \mathcal O(\Delta t^2)
\end{equation}
as long as
\begin{equation}\label{eq:R-range}
    \max\left(\norm{u^s}_\mX,\norm{u^{s+1}}_\mX\right)\le R.
\end{equation}
% Combining with the discrete update \cref{eq:discrete-update}, we may conclude that it suffices to ensure that
% \begin{equation}\label{eq:L1R-sufficient}
%      \left\|[\mM(u^s) + \mW(u^s)] \mu(u^s)\right\|_\mX^2L_1(R)\Delta t\le 2\left\langle \mu(u^s), \mM(u^s) \mu(u^s) \right\rangle_{\mX^*,\mX},
% \end{equation}
% for some $R$ satisfying \cref{eq:R-range}. Note that in \cref{eq:R-range}, $R$ is partially determined by $\|u_{n+1}\|_\mX$, and we have to dismiss the dependency. 
Then, we can prove by induction that either $\mu^s=0$, or
\begin{equation}\label{eq:claim}
    \Delta t\le\frac{2\Xcouple{\mu^s}{\mM^s\mu^s}}{\norm{(\mM^s+\mW^s)\mu^s}_\mX^2L_1(R^s)}\quad\text{and}\quad R^s=D+\frac{2\Xcouple{\mu^s}{\mM^s\mu^s}}{\norm{(\mM^s+\mW^s)\mu^s}_\mX L_1(D)}
\end{equation}
for the $s$th step will ensure that $V(u^{s+1})\le V(u^s)$ and $\norm{u^s}_\mX\le D$ for all $s$. The constant $D$ is chosen such that
\begin{equation}
    V(w)\le V(u^0)\implies\|w\|_\mX\le D
\end{equation}
for any $w\in\mX$, and the existence is guaranteed by the coercivity of $V$.
The conclusion for elliptic dissipative dynamics and conservative dynamics follows respectively by direct deduction.
A detailed proof can be found in \cref{sec:proof-discrete-energy-dissipation}.
% \tobediscussed{move to the SM part?}
    % \input{proofs/discrete-energy-dissipation-simple}
\end{proof}
% This theorem shows that for elliptic dissipative systems, the Euler integrator can always lead to non-increasing free energy functional $V$ as long as the time step size is small enough. For conservative systems, the potential along a discrete trajectory without blow-up is conserved up to a scale of $\Delta t$.
Consequently, for elliptic dissipative systems, the explicit forward Euler scheme strictly preserves the monotonic decay of the free energy when the time step is sufficiently small. For conservative systems,  while exact energy preservation is relaxed due to time discretization, the discrete energy fluctuates by at most an $\mathcal O(\Delta t^2)$ for each update.

This result reflects a fundamental philosophical departure from the traditional scientific computing and numerical analysis workflows. In classical settings, ensuring long-term stability and thermodynamic consistency typically requires the rigorous design of complex, equation-specific discretizations. In contrast, our framework reverses this by implicitly imposing these stability constraints by defining a rigorous continuous hypothesis class prior to any data-driven identification of equations.
Consequently, even the simplest explicit integrator automatically inherits these robust physical guarantees. This decoupled approach not only provides critical flexibility for neural network model design, but also enables a unified, computationally efficient temporal discretization that applies universally across an entire class of unknown, data-driven mesoscopic dynamics without requiring instance-specific numerical engineering.
\section{\ournet}\label{sec:network}
Starting from this section, we focus our discussion on a specific neural network implementation of the framework,
where we set $\mX=H^1(\Omega)$. We name the architecture \ournet, abbreviated as \ournetabbrv.
To construct a network for learning the generalized Onsager principle, we take the ansatz
\begin{equation}\label{eq:V-ansatz}
    V(u) = \underbrace{\int_\Omega\left[\frac{\alpha}{2}|u|^2+F(u(x))\right]\md x}_{V_0(u)}+\underbrace{\int_\Omega \frac{\beta}{2} |\nabla u|^2\md x}_{V_1(u)}+V_2(u)
\end{equation}
for the free energy $V$ with positive scalars $\alpha$ and $\beta$. Here, $V_0$ and $V_1$ account for pointwise potentials and local interactions, and $V_2:\mX\to\mR$ is a functional capturing the residuals of the integrands. 
The motivation for this explicit decomposition comes from physics. In fact, the ansatz mirrors the standard structure of thermodynamic free energies: the $\beta$-term penalizes steep spatial gradients to account for interfacial energy, capillarity, or dispersion when coupled with a conservative transport operator; $\alpha$ provides a global quadratic stabilizing potential that ensures the coercivity of the functional; the pointwise function $F(u)$ models local energy contributions; and $V_2(u)$ serves as a flexible residual functional designed to capture complex, non-local, or unknown mesoscopic interactions.

We then obtain the functional derivative
\begin{equation}
    \mu(u)=\frac{\delta V}{\delta u}=\alpha u-\beta\Delta u+F'(u)+\frac{\delta V_2}{\delta u}.
\end{equation}
To ensure that the preceding theoretical analysis applies to our ansatz \cref{eq:V-ansatz}, we must impose specific constraints on the functional $V$. More specifically, we let $\alpha, \beta > 0$ be positive scalars and define
\begin{equation}
    Q(u)=\int_\Omega F(u(x))\md x+V_2(u).
\end{equation}
We further enforce that the functional $Q:\mX\to\mR$ is bounded from below and that its Fr\'echet derivative $\delta Q/\delta u$ is $\mX$-Lipschitz continuous on bounded set.
These conditions guarantee that both the coercivity requirement in \cref{ass:symmetry} and the $\mX$-Lipschitz continuity in \cref{ass:lipschitz-v2} are satisfied.

For network implementation, our network is designed to follow the update \cref{eq:discrete-update} in spectral space.
More specifically, we consider the explicit Euler step
% \begin{equation}
%     \hat u^{s+1} =\hat u^s - \Delta t \left[\widehat\mM(u^s) + \widehat\mW(u^s)\right]\odot \mu(u^s),
% \end{equation}
% or equivalently,
\begin{equation}
    \hat u^{s+1,[k]} =\hat u^{s,[k]} - \Delta t \left[\widehat\mM^{[k]}(u^s) + \widehat\mW^{[k]}(u^s)\right]\hat\mu^{s,[k]}
\end{equation}
for each spectral mode, where we define $\hat\mu^{s,[k]}$ as the $k$th mode for $\mu(u^s)$. Since
\begin{equation}\label{eq:spectral-of-derivative}
\begin{aligned}
    \hat\mu^{[k]}:=\Xcouple{\mu(u)}{\overline{e_k}}&=\Xcouple{\alpha u-\beta\Delta u+F'(u)}{\overline{e_k}}+\Xcouple{\frac{\delta V_2}{\delta u}}{\overline{e_k}}\\
    &=\left[\alpha+(2\pi k)^2\beta\right]\hat u^{[k]}+\widehat{F'(u)}^{[k]}+\overline{\left(\frac{\partial v_2}{\partial\hat u}\right)}^{[k]},
\end{aligned}
\end{equation}
where $v_2$ is the spectral representation of $V_2$, we have to specify appropriate parameterizations for the terms $\mM$, $\mW$, $F$, and $v_2$.

\paragraph{Convolutions $\mM$ and $\mW$}
To enforce the constraints in \cref{ass:symmetry}, it suffices, by \cref{thm:spectral-representation},  to require each entry of $\widehat\mM_\psi(u^s)$ to be a non-negative real number and each entry of $\widehat\mW_\psi(u^s)$ to be purely imaginary .
Consequently, we parameterize $\widehat\mM$ and $\widehat\mW$ as
\begin{equation}
    \widehat\mM_{\psi}(u) = \Re\left(G_\psi(\hat u)\right)^2\quad\text{and}\quad\widehat\mW_{\psi}(u) = \mi\Im\left(G_\psi(\hat u)\right),
\end{equation}
where $\Re(\cdot)$ and $\Im(\cdot)$ denote the real and imaginary part of the network output $G_\psi(\hat u)$.

\paragraph{Free energy functional $V$}
With the ansatz given by \cref{eq:V-ansatz}, we directly parameterize the functional variation $\mu=\delta V/\delta u$ as
\begin{equation}
    \mu_\phi(u) = \alpha u-\beta\Delta u+F_\phi'(u)+\frac{\delta V_\phi}{\delta u},
\end{equation}
then according to \cref{eq:spectral-of-derivative}, the spectral modes of $\mu$ are written as
\begin{equation}\label{eq:S-OnsagerNet-update}
    \hat\mu_\phi(u)=\left[\alpha+(2\pi \mathbf k)^2\beta\right]\odot\hat u+\widehat{F_\phi'(u)}+\overline{\nabla v_\phi(\hat u)},
\end{equation}
where \(\mathbf{k}\) is the Fourier mode-index vector, with the entry corresponding to mode \(k\) equal to \(k\).
% where $\mathbf{k}$ is a vector $(a_k)_{k\in\mZ}$ with $a_k=k$ for any $k$.
As explained previously, to make our theoretical analysis applicable for our network, we only need to force $\alpha,\beta>0$ with a softplus layer and control the boundedness of the networks since the remaining continuity assumptions are automatically satisfied for common network architectures. In particular, we adopt a sinusoidal activation function for the final layers. The Fourier spectral modes are computed with the discrete Fourier transform, and the gradient is evaluated by auto-differentiation.

In summary, we use the update
\begin{equation}\label{eq:parameterized-update}
    \hat u^{s+1}\approx\mathcal T_\theta(\hat u^s):=\hat u^s-\Delta t\left[\Re\left(G_\psi(\hat u)\right)^2 + \mi\Im\left(G_\psi(\hat u)\right)\right]\odot\hat\mu_\phi(u^s),
\end{equation}
where
\begin{equation}\label{eq:parameterization-mu}
    \hat\mu_\phi(u^s)=\left[\alpha+(2\pi \mathbf{k})^2\beta\right]\odot\hat u^s+\widehat{F_\phi'(u^s)}+\overline{\nabla v_\phi(\hat u^s)}.
\end{equation}
The parameter family $\theta$ contains all the trainable parameters $\psi$, $\phi$, $\alpha$, and $\beta$.
\section{Numerical experiments}\label{sec:experiments}
We empirically validate the proposed framework in two complementary settings: (i) learning from data generated from PDE models, where the underlying dynamics are known and serve as ground truth; and (ii) learning effective mesoscopic dynamics directly from microscopic simulation data, where no closed-form macroscopic equation is available a priori. The first setting allows us to rigorously assess predictive accuracy, long-time stability, and the identifiability properties established in \cref{sec:theoretical-analysis}, while the second setting demonstrates that our framework can serve as a practical tool for discovering interpretable mesoscopic models from microscopic data.
% \zhuoyuan{we need to give a brief description and a reason why we want to shift attention from PDE cases to microscopic cases}
% Our numerical experiments evaluate the proposed framework across four distinct datasets, comprising two macroscopic continuum models (the KdV and Allen--Cahn equations) and two microscopic particle systems (the FPU and FENE chain models). High-fidelity reference trajectories for the continuum PDEs are simulated via a spectral second-order semi-implicit backward differentiation formula (SBDF2), while the microscopic chain models are generated by the velocity Verlet algorithm. Across all configurations, we fix the length of trajectories as $100$ for subsequent network training and evaluation. Details for the data generation are described in \cref{sec:data-generation}.
\subsection{Network training}
The evolution of the system is solved in the spectral domain using a simple explicit forward Euler scheme. To optimize the network parameters $\theta$, we compute the reconstruction error for each reference trajectory $\{u^s\}_{s=1}^S$ using a $K$-step accumulated loss function, evaluated as
% and the regularization error
\begin{equation}
    \mJ(\theta) = \sum_{s=1}^{S-K}\sum_{k=1}^K \norm{\mathcal T_\theta^k(\hat u^s)^\vee - u^{s+k}}_2^2,
\end{equation}
where $\mathcal T_\theta$ represents the Euler update specified in  \cref{eq:parameterized-update}. The symbol ``$\vee$'' denotes the inverse discrete Fourier transform, so that the norm is computed in physical space. We adopt the multi-step loss specifically to penalize accumulating integration errors and enhance long-term predictive stability, but importantly, our framework is not limited to this exact learning approach. Any other objective capable of training the dynamics, such as the detaching trick used in \cite{brandstetter2022message,List2025}, can be applied without compromising the rigorous thermodynamic structure.

\subsection{Learning from known PDE dynamics}
We first evaluate the predictive capabilities and numerical stability of the proposed framework on two known PDE dynamics: the Allen--Cahn model (both 1D and 2D cases), representing purely dissipative phase-field dynamics, and the KdV model, representing conservative dispersive wave propagation. Readers may refer to \cref{sec:PDE-models} for more details about these PDE models.

\subsubsection{Predictive accuracy and long-term stability}
To demonstrate the capability of \ournetabbrv, we benchmark our model with 3 different baselines: (i) A classical one-step numerical scheme analogous to that for generating the reference trajectory, but with a larger ($\times25$) step size for a fair comparison; (ii) A data-driven approach based on the Fourier Neural Operator (FNO) \cite{li2020fourier} without any structural assumptions; (iii) The original OnsagerNet \cite{Yu2021OnsagerNet} designed for finite-dimensional cases, where we fix the spatial discretization and restate the learning task in the discrete space. The general workflow of our model is displayed in \cref{fig:architecture}. Detailed description about the baselines as well as the evaluation metric can be found in \cref{sec:baselines,sec:evaluation-metric}.

\begin{figure}
    \centering
    \includegraphics[width=\linewidth]{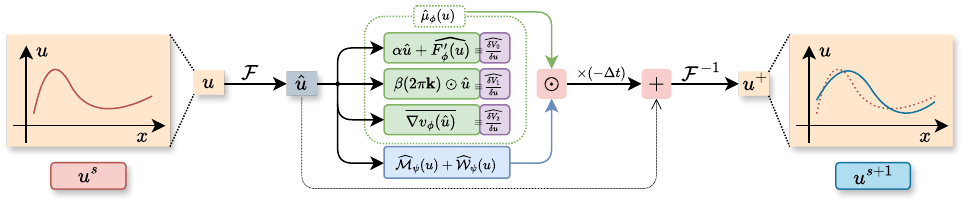}
    \caption{The general workflow for our \ournet.}
    \label{fig:architecture}
\end{figure}

As illustrated in \Cref{fig:kdv-ac-pred-error}, the rigorous enforcement of the generalized Onsager structure of our model serves as a highly effective inductive bias. Compared to the FNO model as a purely black-box learning approach and the OnsagerNet model as a naive generalization to PDE cases, {\ournetabbrv} demonstrates superior predictive accuracy and robust stability over extended integration horizons. 
Note that we do not apply the classical OnsagerNet to the 2D Allen--Cahn model due to its impractical memory cost since we would need around 100 times more budgets than those for the 1D case for a $128\times128$ grid.
The advantage of our {\ournetabbrv} is also quantitatively confirmed in \cref{tab:benchmark}, where {\ournetabbrv} consistently achieves the lowest 5-step relative prediction error with the minimal number of parameters. 
% The spatiotemporal reconstructions of the KdV dynamics, presented in \Cref{fig:kdv-spacetime-viz}, visually corroborate these error metrics. 
% {\ournetabbrv} captures the complex soliton interactions and phase shifts with exceptional fidelity, whereas the FNO predictions rapidly degrade, accumulating visible non-physical artifacts over time.

% We first test the proposed method on two representative examples, namely the Allen--Cahn equation and the KdV equation. 
% \Cref{fig:kdv-ac-pred-error} indicates that the Onsager structure serves as an effective inductive bias. 
% Relative to FNO, taken here as a representative data-driven baseline, S-OnsagerNet delivers higher predictive accuracy and improved stability.
% Compared with four baseline methods, S-OnsagerNet attains the smallest 5-step relative prediction error, as reported in \Cref{tab:benchmark}, and consistently delivers better accuracy.
% The KdV spatiotemporal reconstruction results in \Cref{fig:kdv-spacetime-viz} further support this observation. 
% S-OnsagerNet captures the reference patterns with much higher fidelity, while FNO produces visible artifacts in several instances.
\begin{figure}
    \centering
    \begin{minipage}{.48\textwidth}
    \centering
    \includegraphics[width=\linewidth]{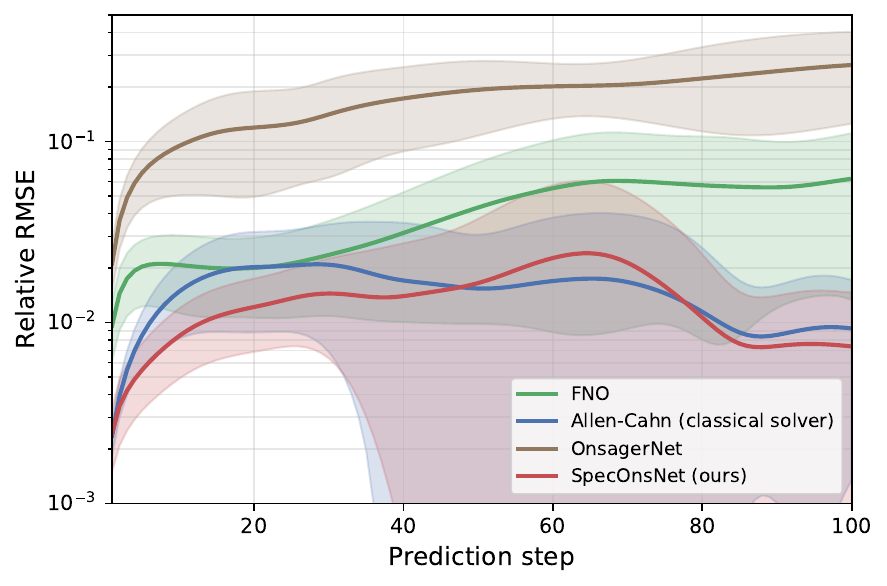}
    \subcaption{Allen--Cahn (1D) dataset}
    \end{minipage}%
    \hfill
    \begin{minipage}{.48\textwidth}
    \centering
    \includegraphics[width=\linewidth]{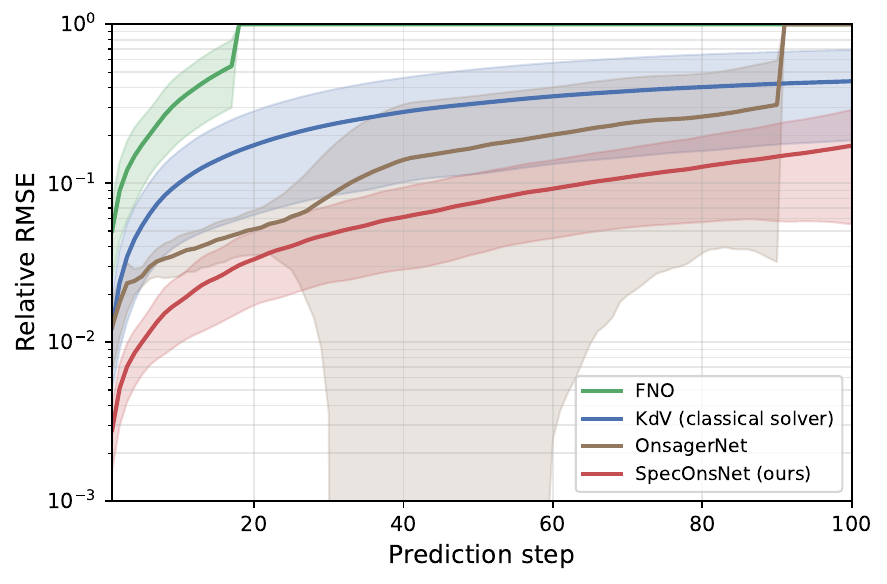}
    \subcaption{KdV dataset}
    \end{minipage}
    \caption{Comparison of long-time prediction performance}
    \label{fig:kdv-ac-pred-error}
\end{figure}

\begin{table}[h]
    \centering
    \resizebox{\textwidth}{!}{%
    \begin{tabular}{l|c|c|c|c}
    \toprule
    Task & Classical solver & FNO\cite{li2020fourier} & OnsagerNet\cite{Yu2021OnsagerNet} & \ournetabbrv \\
    \midrule
    \#params (1D; 2D) & --- & (66.3 K; 1.3 M) & (1.3 M; N/A) & (\textbf{18.2 K}; \textbf{50.0 K}) \\
    \midrule
    Allen--Cahn (1D) & ${0.0085}^{\pm0.0029}$ & $0.0204^{\pm0.0081}$ & $0.0260^{\pm0.0037}$ & $\mathbf{0.0055}^{\pm0.0022}$ \\
    KdV & $0.0541^{\pm0.0313}$ & $0.1747^{\pm0.0820}$ & $0.0671^{\pm0.0241}$ & $\mathbf{0.0100}^{\pm0.0037}$ \\
    Allen--Cahn (2D) & $0.0074^{\pm0.0008}$ & $0.0311^{\pm0.0047}$ & Out of Memory & $\mathbf{0.0037}^{\pm0.0011}$ \\
    \midrule
    FPUT chain & --- & $0.0360^{\pm0.0096}$ & $0.0242^{\pm0.0075}$ & $\mathbf{0.0082}^{\pm0.0026}$ \\
    FENE chain & --- & $0.0186^{\pm0.0066}$ & $0.0139^{\pm0.0076}$ & $\mathbf{0.0012}^{\pm0.0004}$ \\
    \bottomrule
    \end{tabular}%
    }
    \caption{5-step relative prediction error ($\downarrow$) for various baselines.}
    \label{tab:benchmark}
\end{table}

\subsubsection{Identifiability of the dynamics}\label{sec:identifiability-of-dynamics}
% directly validates uniqueness theorems: affine V for Allen–Cahn per Prop 4.19, conserved V for KdV/FENE per Prop 4.20)

Beyond mere trajectory matching, a core objective of our framework is the faithful extraction of the underlying thermodynamic mechanisms. To this end, we investigate the temporal behavior of the learned potentials evaluated on the reference trajectories to validate the identifiability guarantees established in \cref{sec:factorization}. 
As depicted in \cref{fig:kdv-ac-potential-evolution}, for the dissipative Allen--Cahn system, the learned free-energy functional $V_\theta$ exhibits a strictly affine relationship (highlighted by the dashed linear fit) with the true physical potential $V$. This empirical result corroborates the theoretical prediction that, for elliptic dissipative systems with $\mM$ and $\mW$ independent of the state, the Onsager factorization is identifiable up to an affine transformation. Meanwhile, for the purely conservative KdV system, the learned dynamics strictly conserve the learned potential. This confirms our theoretical assertion that the framework cannot spuriously inject artificial dissipation into a genuinely conservative system. To sum up, the experimental results on the PDE cases have numerically confirmed the identifiability of the proposed generalized Onsager principle shown in \cref{thm:factorization-uniqueness}.
% Ultimately, these results demonstrate that the extracted operators possess genuine physical interpretability, acting as discovered governing laws rather than unconstrained fitting artifacts.

% For the Allen–Cahn dataset, the learned potential shows an almost linear relation with the true potential, as highlighted by the dashed linear fit. For the KdV dataset, the learned dynamics approximately conserve the learned potential and the real potential.

\begin{figure}
    \centering
    \begin{minipage}{.48\textwidth}
    \centering
    \includegraphics[width=\linewidth]{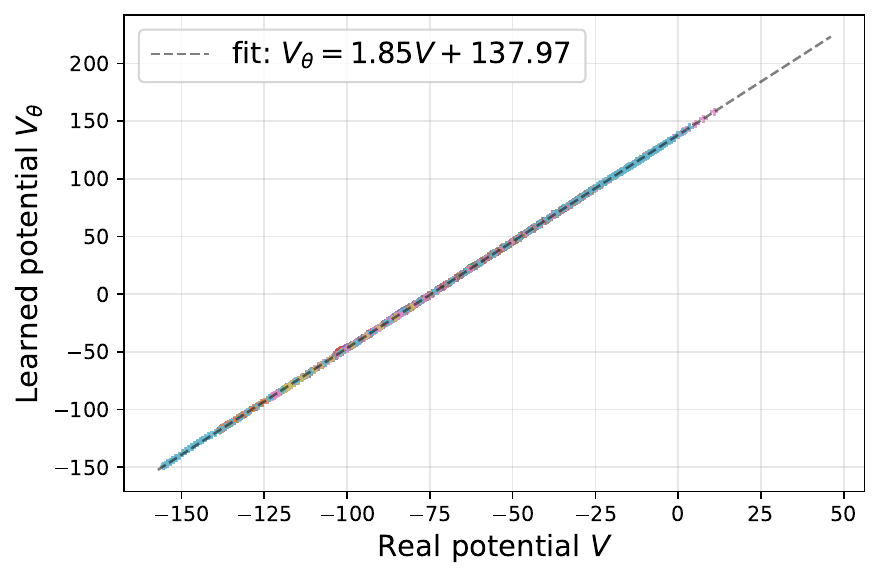}
    \subcaption{Allen--Cahn (1D) dataset}
    \end{minipage}%
    \hfill
    \begin{minipage}{.48\textwidth}
    \centering
    \includegraphics[width=\linewidth]{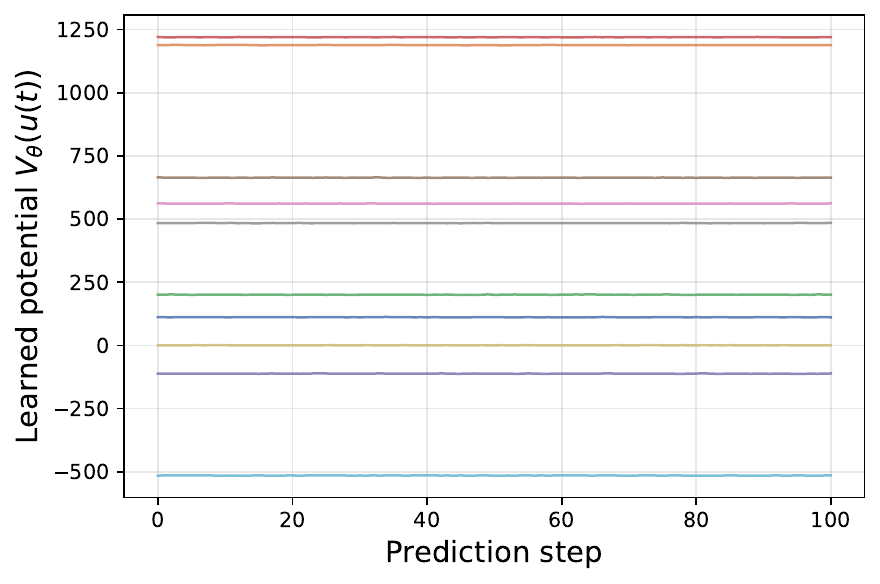}
    \subcaption{KdV dataset}
    \end{minipage}
    \caption{Identifiability of the PDE dynamics. (a) relationship between the learned potential and the real potential, and each point corresponds to one snapshot; (b) evolution of the learned potentials for the KdV dataset. For both the two figures, we use colors to distinguish 10 different trajectories in the test dataset.}
    \label{fig:kdv-ac-potential-evolution}
\end{figure}

\subsubsection{Empirical validation of a priori discrete stability}
In \cref{sec:theoretical-analysis}, we have established rigorous \textit{a priori} guarantees that apply universally to any dynamics within our hypothesis class. An important consequence of this framework, formalized in \cref{thm:discrete-stability}, is the preservation of discrete energy dissipation. Specifically, for elliptic dissipative systems, the explicit numerical integration strictly enforces a monotonically non-increasing free energy, guaranteeing trajectory stability provided the inference step size remains below a critical threshold. To empirically validate this theoretical bound, we evaluate the inference dynamics of both 1D and 2D Allen--Cahn models. As illustrated in \cref{fig:dissipation-dt-scaling}, the observed behavior aligns with our theoretical predictions. When the inference time step exceeds the characteristic scale used during training ($3\times10^{-3}$ compared to $1\times10^{-3}$), the predicted trajectory diverges, and the strict monotonic decay of the learned potential is subsequently violated.
Conversely, selecting an inference step smaller than $1\times10^{-3}$ safely preserves the monotonic decay of the free energy, leading to robust, long-term trajectory stability without any degradation in relative error. 
These observations empirically confirm the discrete stability of our {\ournetabbrv}, consistent with the theoretical guarantee induced by the structurally constrained hypothesis class.
\begin{figure}
    \centering
    \begin{minipage}{.48\textwidth}
    \centering
    \includegraphics[width=\linewidth]{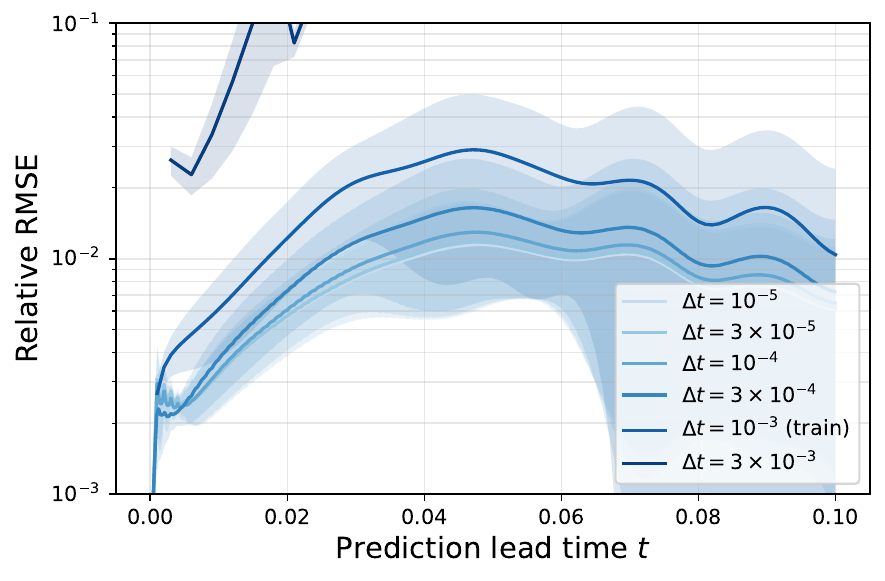}
    \subcaption{prediction error along rollout (1D)}
    \end{minipage}%
    \hfill
    \begin{minipage}{.48\textwidth}
    \centering
    \includegraphics[width=\linewidth]{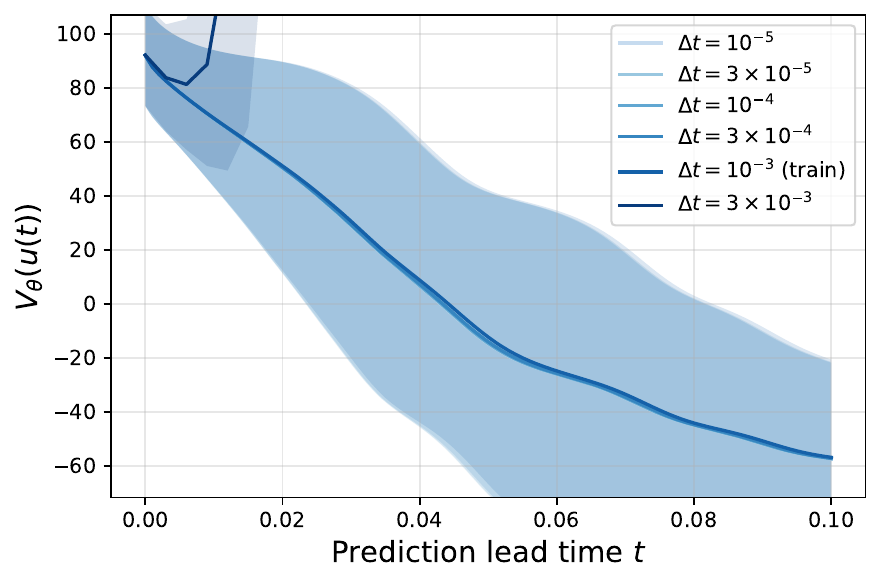}
    \subcaption{learned potential along rollout (1D)}
    \end{minipage}
    \begin{minipage}{.48\textwidth}
    \centering
    \includegraphics[width=\linewidth]{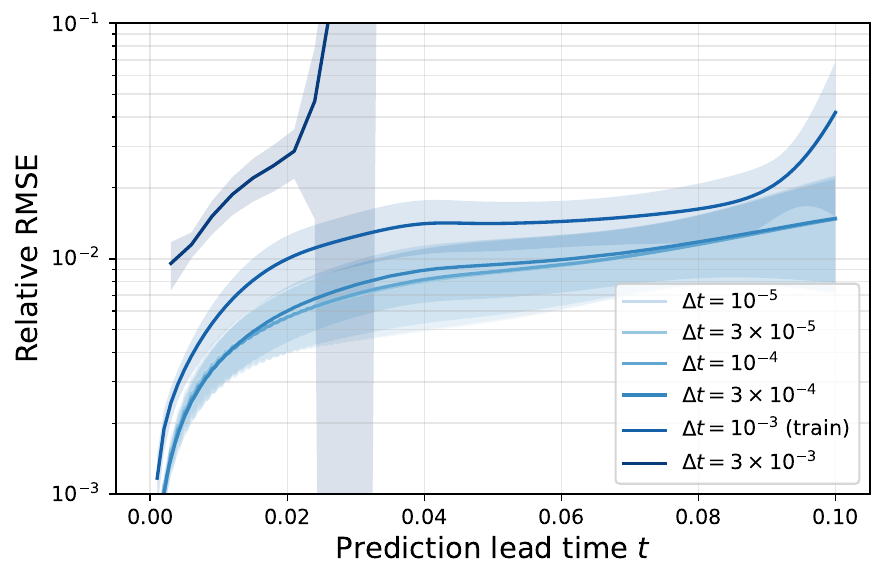}
    \subcaption{prediction error along rollout (2D)}
    \end{minipage}%
    \hfill
    \begin{minipage}{.48\textwidth}
    \centering
    \includegraphics[width=\linewidth]{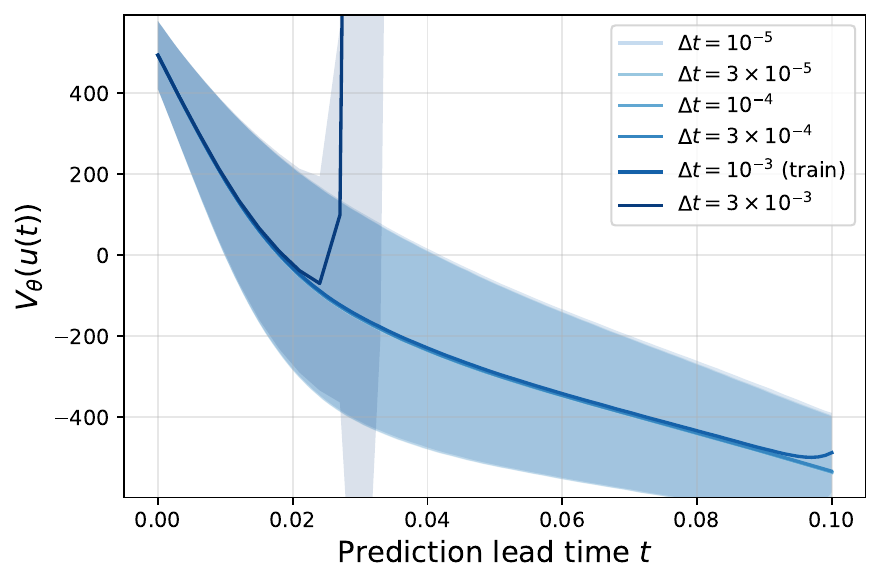}
    \subcaption{learned potential along rollout (2D)}
    \end{minipage}
    \caption{Inference step size scaling for the Allen--Cahn PDE models.}
    \label{fig:dissipation-dt-scaling}
\end{figure}

\subsection{Learning mesoscopic dynamics from microscopic data}
Whereas the continuum examples validate the framework against known reference dynamics, we next turn to a more practical multiscale setting: inferring effective mesoscopic spatio-temporal dynamics directly from microscopic simulation data without assuming a closed-form PDE \textit{a priori}.
In particular, we demonstrate our approach on nonlinear particle chains, a classical setting in which the mesoscopic energy structure reveals how local interactions give rise to energy transfer, nonlinear wave propagation, recurrence, and continuum-scale behavior.
The Fermi--Pasta--Ulam--Tsingou (FPUT) chain \cite{fermi1955studies} represents the classical setting in which a weak nonlinear perturbation of a harmonic interaction, under the long-wave and small-amplitude scaling, produces the KdV-type mesoscopic potential that balance nonlinearity and dispersion \cite{zabusky1965interaction}. 
The polymer chain with Finitely Extensible Nonlinear Elastic (FENE) \cite{warner1972kinetic}, by contrast, introduces a finite-extensibility constraint and the corresponding mesoscopic description is not available. Existing data-driven methods may reproduce coarse trajectories, but they do not identify the key mesoscopic potential that determines whether coarse-graining preserves or reshapes the KdV-type potential.
In this subsection, we show that our method recovers this learned mesoscopic potential as a physically interpretable quantity, enabling a direct comparison between the established FPUT energy mechanism and the finite-extensibility-induced effective potential of the polymer chain.

\subsubsection{Microscopic chain models}
We consider a one-dimensional nearest-neighbor chain with displacement \(q_n\)
and strain \(r_n=q_{n+1}-q_n\). For an interaction potential \(V\), the
microscopic dynamics are
\begin{equation}
    \ddot q_n=\nabla V(r_n)-\nabla V(r_{n-1}),\quad\text{yielding}\quad
    \ddot r_n=\nabla V(r_{n+1})-2\nabla V(r_n)+\nabla V(r_{n-1}).
\end{equation}
For the FPUT chain, $V(r) = c^2r^2/2+\alpha r^3/3$. In this case, the mesoscopic limit is analytically understood. Under the long-wave, small-amplitude scaling
\begin{equation}\label{eq: coarse-graining}
    r_n(t)=\varepsilon^2u(\xi,\tau),\qquad
    \xi=\varepsilon(n-ct),\qquad \tau=\varepsilon^3t,
\end{equation}
Taylor expansion and dominant balance give
\begin{equation}
    -2c u_{\xi\tau}
    =
    \frac{c^2}{12}u_{\xi\xi\xi\xi}
    +
    \alpha (u^2)_{\xi\xi}.
\end{equation}
With periodic or decaying boundary conditions, integration in \(\xi\) yields the KdV equation
\begin{equation}
    u_t+a_{\rm kdv}uu_x+b_{\rm kdv}u_{xxx}=0,
\end{equation}
where \(a_{\rm kdv}=\alpha/c\) and \(b_{\rm kdv}=c/24\). Figure~\ref{fig:chain-coarse-graining} illustrates how microscopic particle-chain models are coarse-grained into mesoscopic variables,

As a contrasting microscopic model, we use the FENE potential and force law
\begin{equation}
    V(r)=-\frac{HR^2}{2}\log\!\left(1-\frac{r^2}{R^2}\right),
    \qquad
    F(r)=\nabla V(r)=\frac{Hr}{1-(r/R)^2}.
\end{equation}
Here \(H\) and \(R\) are fixed. Unlike the polynomial FPUT force, this finite-extensibility law cannot be faithfully approximated by the Taylor expansion and therefore does not reduce to a simple standard continuum model under the same coarse-graining procedure \cref{eq: coarse-graining}, making the FENE chain a more demanding test case for learning mesoscopic dynamics directly from microscopic simulations.

\begin{figure}
    \centering
    \includegraphics[width=\linewidth]{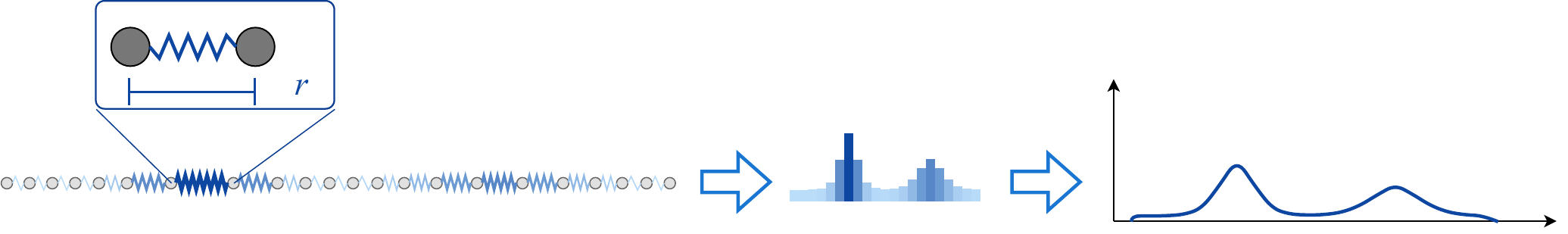}
    \caption{Illustration of microscopic chain models and their coarse-graining.}
    \label{fig:chain-coarse-graining}
\end{figure}

% \subsubsection{Still capable of learning the unknown dynamics}
First, we evaluate the prediction accuracy of our model compared with other data-driven approaches, and the quantitative performance is exhibited in \cref{tab:benchmark}. The trends of superior performance of {\ournetabbrv} on the microscopic datasets mirror those of the continuum examples. Our model consistently yields the highest overall accuracy and stability among all tested methodologies, which provide a reliable learned dynamics for our further study on the interpretability.

\subsubsection{Empirical validation of a priori theoretical analysis}
Recall that in \cref{sec:theoretical-analysis} we have provided a priori theoretical analysis, which applies to any dynamics in our hypothesis class.
In particular, we validate the discrete stability described in \cref{thm:discrete-stability}, which states that the one-step variation for the learned potential should be of order $\mathcal O(\Delta t^2)$. To this end, we plot the variation under different time-step sizes (from $0.1\times$ to $1.0\times$ the step size used for training) for both the two microscopic models in \cref{fig:one-step-potential-evolution}. We compute the median of the variation for robustness since for some rare cases, the predicted trajectory diverges because the network has never been trained with different step sizes. Empirical scaling indicates a variation of approximately $\mathcal O(\Delta t^{1.15})$. While this deviates from the theoretical quadratic law, the discrepancy is physically reasonable. By the one-step variation of the discrete energy \cref{eq:Descent-Lemma},
if the network perfectly identifies the conservative dynamics, then only the quadratic term remains. However, in practice, the learned operators are approximations, resulting in a small but non-zero residual dissipation. The observed scaling of $1.15$ thus reflects a superposition of a small $\mathcal{O}(\Delta t)$ contribution from the fitting error dominating the high-order term $\mathcal{O}(\Delta t^2)$.
\begin{figure}
    \centering
    \begin{minipage}{.48\textwidth}
    \centering
    \includegraphics[width=\linewidth]{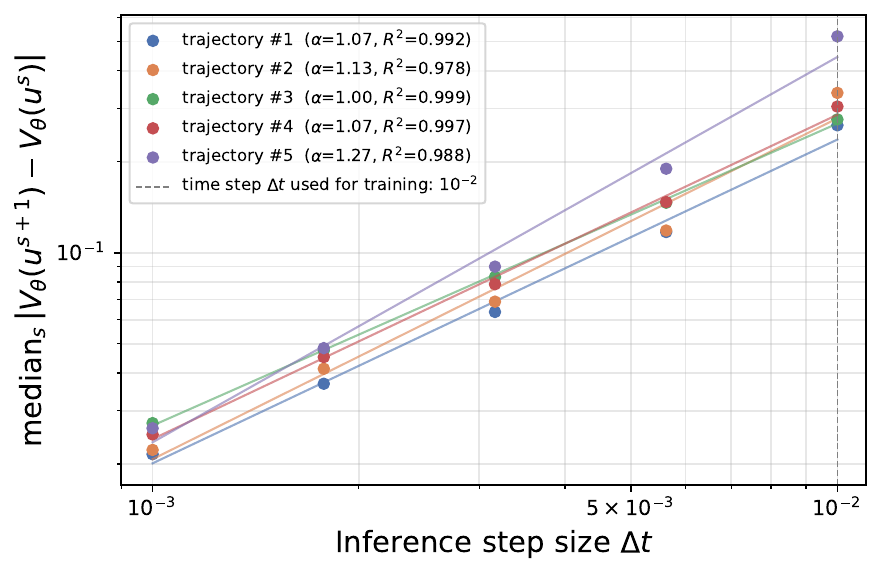}
    \subcaption{FPUT chain}
    \end{minipage}%
    \hfill
    \begin{minipage}{.48\textwidth}
    \centering
    \includegraphics[width=\linewidth]{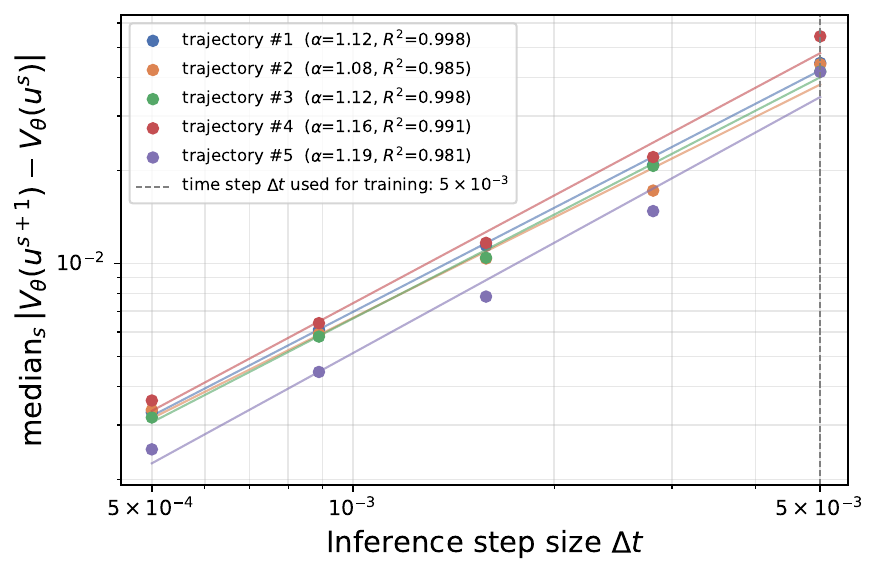}
    \subcaption{FENE chain}
    \end{minipage}
    \caption{One-step variation of the learned potentials $V_\theta$ for the microscopic chain models. For each test trajectory we run a log-log linear fit; the resulting scaling exponent $\alpha$ and the the coefficients of determination $R^2$ are appended in the legend.}
    \label{fig:one-step-potential-evolution}
\end{figure}

\subsubsection{Exploring the potential evolution}
As shown in \cref{fig:ckdv-fene-potential-evolution}, the learned dynamics approximately conserve both the learned potential. These results indicate that the proposed framework can recover the conservative property of the underlying mesoscopic dynamics from microscopic trajectory data alone, without assuming any prior knowledge of the mesoscopic evolution equation.

% As shown in \cref{fig:ckdv-fene-potential-evolution}, the learned dynamics approximately conserve the learned potential and the microscopic chain Hamiltonian. These results underscore the broad applicability of the current framework: it reliably discovers physically consistent and stable macroscopic governing laws regardless of whether an analytical effective equation is known \textit{a priori} or the dynamics must be entirely distilled from raw microscopic trajectories.\zhuoyuan{$\leftarrow$ Instead, in this part we should say, our framework is capable of recovering the conservative property without any knowledge of the mesoscopic dynamics. consider rewriting}
\begin{figure}
    \centering
    \begin{minipage}{.48\textwidth}
    \centering
    \includegraphics[width=\linewidth]{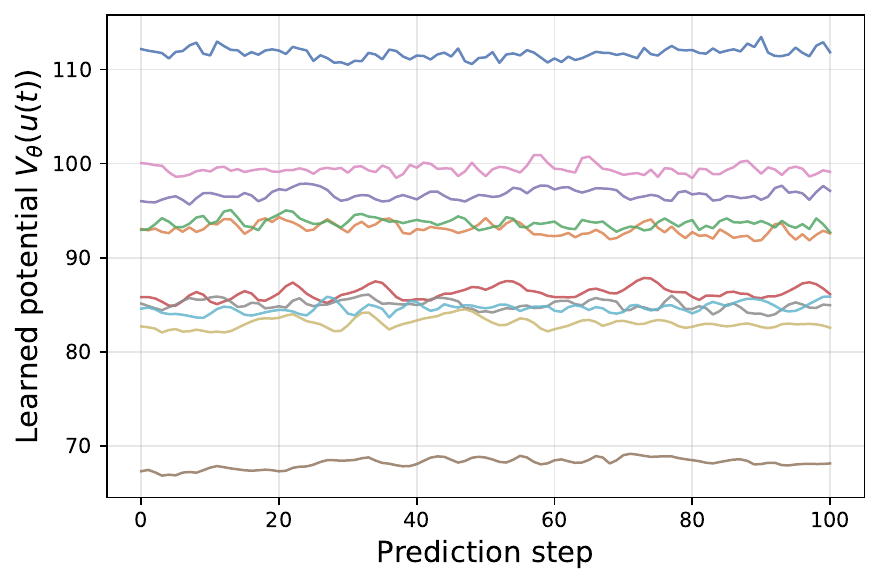}
    \subcaption{FPUT chain}
    \end{minipage}%
    \hfill
    \begin{minipage}{.48\textwidth}
    \centering
    \includegraphics[width=\linewidth]{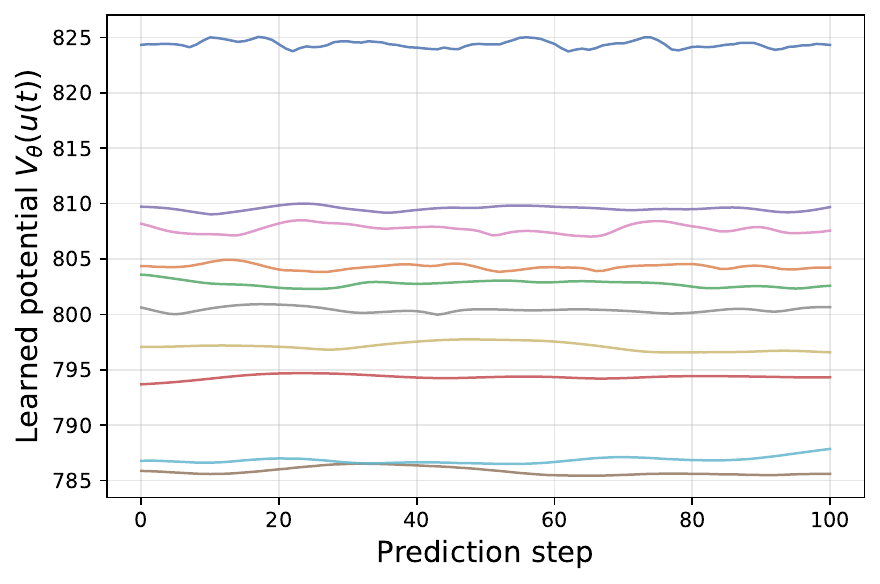}
    \subcaption{FENE chain}
    \end{minipage}
    \caption{Evolution of the learned potentials $V_\theta$ for the microscopic chain models. We use colors to distinguish different trajectories.
    }
    \label{fig:ckdv-fene-potential-evolution}
\end{figure}

\subsubsection{Interpretability of the learned potentials}
By revisiting our ansatz of $V$ defined in \cref{eq:V-ansatz}, intuitively we may want to know the learned component of the potential, or more specifically, the pointwise $F(u)$ term and the $V_2(u)$ designed for the residuals.

Firstly, we evaluate the learned components $V_0$ and $V_2$ on a randomly sampled initial state $u_0$ scaled by a varying amplitude $a$ in \cref{fig:V0-V2-a}. Two critical observations emerge from this scaling. For both cases, $V_0(a u_0)$ carries the dominant amplitude dependence, whereas $V_2(a u_0)$ varies on a much smaller scale. Thus $V_2$ represents a weaker correction beyond the pointwise constitutive ansatz, capturing residual effects such as nonlocal or higher-order interactions. Besides, the form of $V_0$ is strongly dependent on the microscopic physics. For the FPUT chain, $V_0(a u_0)$ displays a polynomial-type growth, consistent with the KdV-type Hamiltonian mechanism in which the local energy generates nonlinear transport. In contrast, the learned pointwise energy $V_0$ for the FENE chain initially follows a similar behavior for small amplitudes but then departs markedly from the FPUT profile, exhibiting substantially stronger, super-polynomial growth.
This contrast shows that the mesoscopic dynamics learned from the FENE chain is governed by a different nonlinear response once the finite-extensibility effect becomes active.

\begin{figure}
    \centering
    \includegraphics[width=0.95\linewidth]{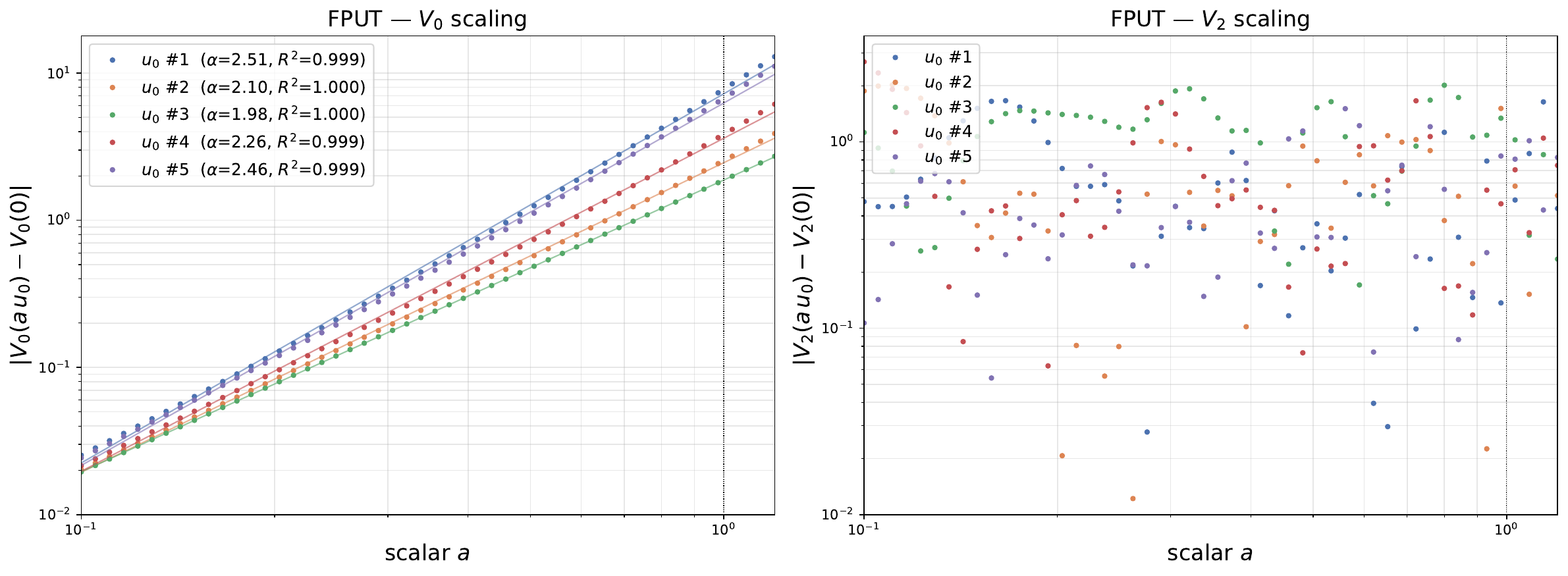}
    \includegraphics[width=0.95\linewidth]{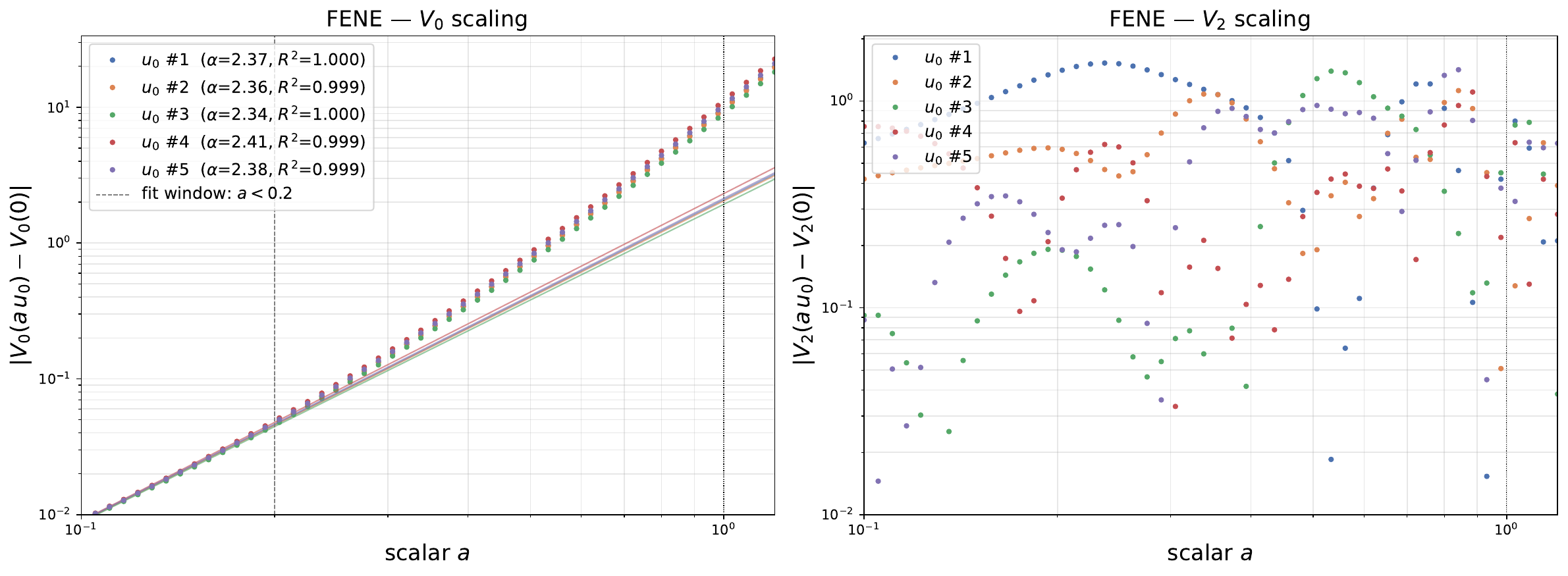}
    \caption{Scaling property of the components $V_0$ and $V_2$ for the microscopic chain models. For each test $u_0$, we run a log-log linear fit of $V_0$ w.r.t. the scalar $a$; the resulting scaling exponent $\alpha$ and the the coefficients of determination $R^2$ are appended in the legend.}
    \label{fig:V0-V2-a}
\end{figure}

% \zhuoyuan{
% There are two points we need to emphasize:
% \begin{itemize}
%     \item The scale of the residual $V_2(u)$ is relatively slow when compared with $V_0(u)$, which indicates that for both the two learned mesoscopic dynamics, apart from the pointwise potentials and local interactions, we require some other residual terms (e.g., global interactions in the VPFP limit\needcite)
%     \item When comparing the learned pointwise $V_0(u)$, for the FPU chain, the learned local energy displays polynomial-type growth, again consistent with a KdV-type nonlinear Hamiltonian mechanism. By contrast, the learned local energy for the FENE chain shows substantially stronger, super-polynomial, growth.
% \end{itemize}
% }

We also plot the learned $F(u)$ term in \cref{fig:Fu-varying} by varying the variable $u$ within a range where most of the training data stay to avoid out-of-distribution testing.
\begin{figure}
    \centering
    \begin{minipage}{.48\textwidth}
    \centering
    \includegraphics[width=\linewidth]{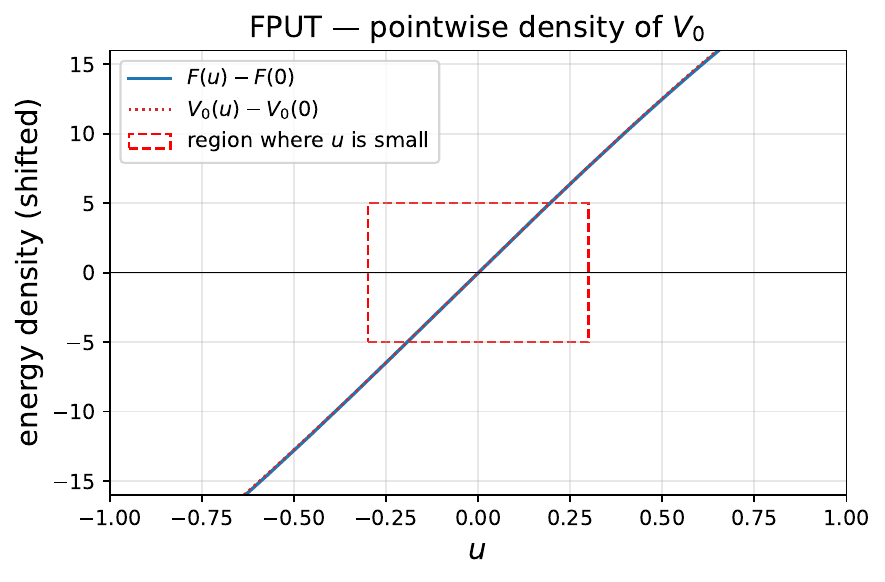}
    % \subcaption{FPU chain}
    \end{minipage}%
    \hfill
    \begin{minipage}{.48\textwidth}
    \centering
    \includegraphics[width=\linewidth]{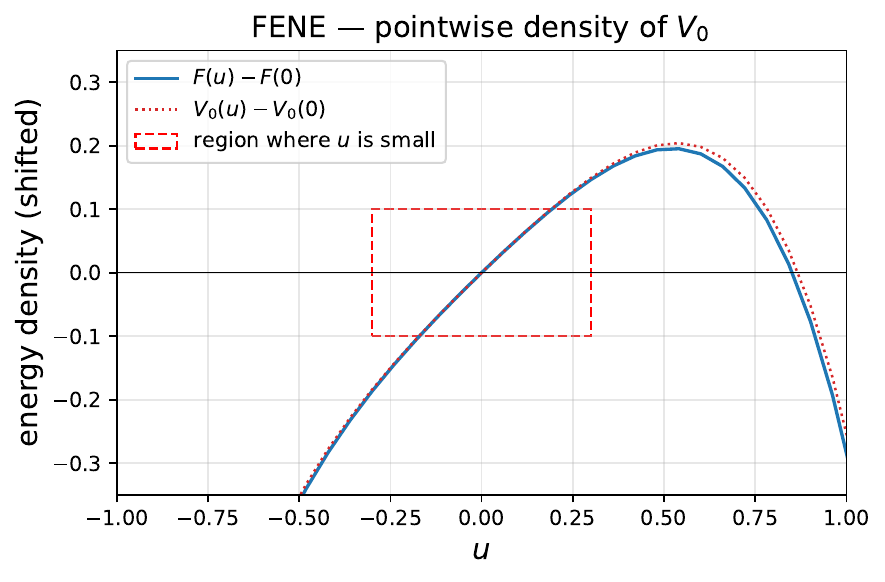}
    % \subcaption{FENE chain}
    \end{minipage}
    \caption{The pointwise component $F(u)$ and $V_0(u)$ of the learned potentials $V_\theta$}
    \label{fig:Fu-varying}
\end{figure}
The learned profiles further confirm the distinction between the two microscopic chains. For the FPUT chain, $F(u)$ is predominantly asymmetric, nearly odd over the sampled range, and displays super-quadratic growth, in qualitative agreement with the form of a KdV-type local Hamiltonian density. In contrast, the learned profile for the FENE chain is similar to the FPUT case only near the origin, but departs substantially as the amplitude increases. Taken together, these observations suggest that the two microscopic interaction laws induce different effective nonlinear constitutive relations at the mesoscopic level. In particular, the learned FENE dynamics are not well described by a KdV-type equation with merely modified coefficients, but instead encode a distinct effective nonlinear constitutive relation at the mesoscopic scale.
\section{Conclusion}\label{sec:conclusion}
In this work, we propose a departure from standard pipelines of modelling dynamic processes via evolution equations. Rather than deriving a fixed, instance-wise effective equation and subsequently analyzing its properties, we introduce a framework centered on a highly expressive yet mathematically constrained hypothesis class. By lifting the generalized Onsager principle to infinite-dimensional Gelfand triples, we define a unified class encompassing both dissipative and conservative mesoscopic dynamics. Crucially, we establish theoretical guarantees uniformly across this entire class. This ensures that foundational physical properties, including global well-posedness, asymptotic stability, unique factorization identifiability, and discrete energy dissipation, are guaranteed \textit{a priori}. Consequently, regardless of the specific data-driven model used to fit the dynamics, the resulting system inherently preserves these structural and analytical certainties.
% {\ournet} serves as a concrete, end-to-end realization of this philosophy. 
% By lifting the generalized Onsager principle to infinite-dimensional Gelfand triples, we defined a hypothesis class broad enough to encompass both dissipative and conservative mesoscopic dynamics. Crucially, we demonstrate that the mathematically stringent constraints of operator symmetry, positive semi-definiteness, and skew-symmetry can be exactly resolved via pointwise algebraic conditions in the spectral domain. This spectral diagonalization not only makes the framework computationally tractable, but it also allows us to prove global well-posedness, asymptotic stability, unique factorization identifiability, and discrete energy dissipation for the learned models. These guarantees usually absent in other data-driven methods.

% Empirically, this structured approach proved highly effective. Whether learning from canonical continuum PDEs (like the KdV and Allen--Cahn equations) or extracting effective macroscopic descriptions directly from atomistic particle chains (like the FPU and FENE models), S-OnsagerNet consistently outperformed unconstrained baselines in predictive accuracy and long-term numerical stability. It faithfully recovered the correct underlying physical regimes, conserving Hamiltonian energy where appropriate and recovering the exact affine transformations of dissipative free-energy functionals, thereby confirming that the learned operators represent genuine physical laws. 

Empirically, this structured approach has shown effectiveness across both continuum and particle-based settings with a specific implementation, \ournet. For continuum PDEs such as the KdV and Allen--Cahn equations, {\ournet} consistently improves predictive accuracy and long-time numerical stability over unconstrained baselines. Moreover, it faithfully recovers the correct underlying physical regimes, explicitly capturing the exact affine transformations of dissipative free-energy functionals and the conservation of purely Hamiltonian potentials. 
Furthermore, when learning mesoscopic dynamics directly from microscopic particle chains, the formulation of {\ournet} provides vital interpretable diagnostics. By analyzing the amplitude dependence of the learned energy components, we directly observe how microscopic interaction laws are reflected in the effective nonlinear constitutive relation at the mesoscopic scale. This interpretability allows the framework to distinguish KdV-type Hamiltonian behavior from mesoscopic dynamics governed by finite extensibility in the FENE chain.

While the proposed approach is a first step towards data-driven construction of mesoscopic descriptions, important theoretical and computational challenges remain. The current reliance on convolution structures to diagonalize the operators $\mM$ and $\mW$ restricts the methodology to periodic boundary conditions. Extending this uniform class-level analysis to handle complex spatial geometries, diverse boundary conditions, and fully state-dependent transport operators remains a critical next step.
More fundamentally, while the proposed hypothesis class is highly expressive, it does not universally encompass all non-equilibrium mesoscopic phenomena. For complex multiscale systems governed by fundamentally different thermodynamic or kinetic rules, it will be necessary to propose and integrate alternative structure-preserving physical principles within this hypothesis-driven framework.
Ultimately, we envision this approach as a rough blueprint for the automated discovery of reliable, physically consistent, and analytically sound multiscale models across a diverse spectrum of governing laws.

% \clearpage

% \input{app/chain_v2}
% \input{app/dataset}
% \input{app/network}
% \input{app/ablation}

% \input{app/misc}
% \input{app/misc-v2}
% \input{app/misc-v3}

\section*{Data availability}
All the codes for data generation, network training, and visualization are publicly available on the GitHub repository: \texttt{\url{https://github.com/MLDS-NUS/Meso-SpecOnsNet}}.

\section*{Acknowledgments}
% This project is supported by the National Research Foundation, Singapore, under its AI Singapore Programme (AISG Award No.: AISG3-RP-2022-028).
% Q.L. and A.Z. are partially supported by the National Research Foundation, Singapore, under the NRF fellowship (Project No. NRF-NRFF13-2021-0005). 
The computational work for this article was partially performed on resources of the National Supercomputing Centre, Singapore (\url{https://www.nscc.sg}).
During the preparation of this work, the authors used large language models (LLMs) to assist with the code writing and to polish the written text for spelling and grammar. The authors thoroughly reviewed and edited the content and take full responsibility for the final publication.
% During the preparation of this work, the authors used ChatGPT (5.4) and Gemini (3.1 Pro) to polish the written text for spelling and grammar. The code writing is assisted with Github Copilot and Claude Code (Opus 4.6). The authors reviewed and edited the content as needed and take full responsibility for the content of the publication.

\appendix

\section{Experiment details}

\subsection{Baselines}\label{sec:baselines}
% We compare our model with the following baselines.
% \begin{itemize}
%     \item Classical solver: a one-step classical scheme accessible to the reference dynamics with a larger step size;
%     \item Fourier Neural Operator (FNO): learning the dynamics with a residual structure in the physical space;
%     \item OnsagerNet: with a fixed spatial discretization, we treat the system as a high-dimensional ODE;
%     \item \ournet: we apply the parameterization as described in \cref{eq:parameterized-update,eq:parameterization-mu}. The detailed workflow is displayed in \cref{fig:architecture}.
% \end{itemize}
\paragraph{Classical solver with true operators}We use the same equation as that of the reference trajectory to evolve the solution numerically. A larger step size is set to keep consistent with the data-driven approaches. Meanwhile,  we adopt the SBDF1 scheme (see \cref{sec:numerical-stepper}) as a one-step variant of the reference scheme for a fair comparison. This model serves as a baseline for one-step classical methods with a large step size.
\paragraph{Fourier Neural Operator (FNO)} We use the FNO architecture \cite{li2020fourier} to directly learn the dynamics in the physical space, which serves as a robust baseline in the field of operator learning. More specifically, we set
\begin{equation}
    u^{s+1}\approx u^s+\gamma_\theta(u^s)\Delta t,
\end{equation}
where the network $\gamma_\theta$ is implemented with the FNO architecture.
\paragraph{OnsagerNet} We use the OnsagerNet architecture \cite{Yu2021OnsagerNet} originally designed for learning Onsager principle. Since the reference snapshots are saved as grid data, we can restate the learning task in the discrete space. Concretely, we treat the $N_x$-point PDE grid directly as an $N_x$-dimensional ODE, evolving under
\begin{equation}
    \ddt{\mathbf{u}} = -\bigl(M_\theta(\mathbf{u}) + W_\theta(\mathbf{u})\bigr)\nabla V_\theta(\mathbf{u}),\quad\mathbf u\in\mR^{N_x}.
\end{equation}
The matrix $M(\mathbf u)$ and $W(\mathbf u)$ are parameterized as low-rank matrices for computational efficiency, and the potential is implemented as an MLP.
\paragraph{{\ournet} (\ournetabbrv, our work)}We apply the parameterization as described in \cref{eq:parameterized-update,eq:parameterization-mu}. The detailed workflow is displayed in \cref{fig:architecture}.

\subsection{Evaluation metric}\label{sec:evaluation-metric}
To quantify the prediction performance, we introduce the relative RMSE at time $t$, evaluated as
\begin{equation}
    \textrm{Relative RMSE}(t)=\sqrt{\frac{\sum_i\norm{u_\mathrm{pred}(t)(x_i)-u_\mathrm{ref}(t)(x_i)}_2^2}{\sum_i\norm{u_\mathrm{ref}(t)(x_i)}_2^2}},
\end{equation}
where $u_\mathrm{ref}(t)$ and $u_\mathrm{pred}(t)$ are reference and prediction states for time $t$, respectively. 
\subsection{Training cost}
The network training is performed on a single NVIDIA A100 GPU based on the PyTorch framework, and each experimental configuration requires about 2 hours. Details of the optimizer and learning-rate scheduler for each experiment are given in \cref{sec:network-and-training}. 
\section{Examples}\label{sec:examples}
In this section, we exhibit some classical PDE dynamics covered by the generalized Onsager principle, or more specifically, those satisfying \cref{ass:symmetry}, including pure gradient flows ($\mW\equiv 0$), purely Hamiltonian flows ($\mM\equiv 0$), and mixed dynamics such as the incompressible Navier--Stokes equation, where both $\mM$ and $\mW$ do not vanish.
\begin{Exp}[(Hyper/Fractional) diffusion]
    Let the state space be $\mathcal{X} = L^2(\Omega)$. Consider the quadratic energy $V: \mathcal{X} \to \mathbb{R}$:
    \begin{equation}
        V(u) = \int_\Omega \frac{1}{2} u^2 \,\mathrm{d}x, \quad \frac{\delta V}{\delta u} = u.
    \end{equation}
    This yields the \underline{(hyper/fractional) diffusion} dynamics:
    \begin{equation}
        \partial_t u = -(-\Delta)^s u = -\big[ \underbrace{(-\Delta)^s}_{\mathcal{M}(u)} + \underbrace{0}_{\mathcal{W}(u)} \big] \underbrace{u}_{\frac{\delta V}{\delta u}}.
    \end{equation}
    Note that for $s=1$, this recovers the classical Heat equation.
\end{Exp}

\begin{Exp}[Reaction--Diffusion]\label{ex:reaction-diffusion}
    Let the state space be $\mathcal{X} = H^1(\Omega)$. Consider the Ginzburg--Landau free energy $V: \mathcal{X} \to \mathbb{R}$:
    \begin{equation}
        V(u) = \int_\Omega \left[ \frac{\varepsilon^2}{2}|\nabla u|^2 + F(u) \right] \mathrm{d}x, \quad \frac{\delta V}{\delta u} = -\varepsilon^2 \Delta u + F'(u),
    \end{equation}
    where $F(u)$ is a double-well potential. This energy generates two distinct dynamics depending on the choice of operators:
    \begin{enumerate}
        \item \underline{Allen--Cahn (Non-conserved):} By choosing $\mathcal{M}=I$, we obtain the $L^2$-gradient flow:
        \begin{equation}
            \partial_t u = \Delta u - F'(u) = -\big( \underbrace{I}_{\mathcal{M}(u)} + \underbrace{0}_{\mathcal{W}(u)} \big) \Big[ \underbrace{-\Delta u + F'(u)}_{\frac{\delta V}{\delta u}} \Big].
        \end{equation}

        \item \underline{Cahn--Hilliard (Conserved):} By choosing $\mathcal{M}=-\Delta$, we obtain the $H^{-1}$-gradient flow:
        \begin{equation}
            \partial_t u = \Delta(-\varepsilon^2 \Delta u + F'(u)) = -\big( \underbrace{-\Delta}_{\mathcal{M}(u)} + \underbrace{0}_{\mathcal{W}(u)} \big) \Big[ \underbrace{-\varepsilon^2 \Delta u + F'(u)}_{\frac{\delta V}{\delta u}} \Big].
        \end{equation}
    \end{enumerate}
\end{Exp}
% \begin{Exp}[(hyper/fractional) diffusion]
% Consider the energy $V:\mX\to\mR$ defined as
% \begin{equation}
%     V(u)=\int_\Omega\frac{u^2}2\md x,\quad \frac{\delta V}{\delta u}=u,
% \end{equation}
% which leads to the \underline{(hyper/fractional) diffusion} dynamics:
%     \begin{equation}
%         \partial_tu=-(-\Delta)^s u=-[\underbrace{(-\Delta)^s}_{\mM(u)}+\underbrace{0}_{\mW(u)}]\underbrace{u}_{\frac{\delta V}{\delta u}}.
%     \end{equation}
% \end{Exp}
% \begin{Exp}[reaction--diffusion]\label{ex:heat}
% Consider the energy $V:\mX\to\mR$ defined as
% \begin{equation}
%     V(u)=\int_\Omega\left[\frac{\varepsilon^2}2|\nabla u|^2+F(u)\right]\md x,\quad\frac{\delta V}{\delta u}=-\Delta u+F'(u),
% \end{equation}
% which leads to
% \begin{enumerate}
%     \item \underline{Allen-Cahn} dynamics:
%     \begin{equation}
%         \partial_tu=\Delta u-F'(u)=-(\underbrace{I}_{\mM(u)}+\underbrace{0}_{\mW(u)})[\underbrace{-\Delta u+F'(u)}_{\frac{\delta V}{\delta u}}],
%     \end{equation}
%     where we set $\varepsilon=0$, and
%     \item \underline{Cahn-Hilliard} dynamics:
%     \begin{equation}
%         \partial_tu=\Delta(-\varepsilon^2\Delta u+F'(u))=-(\underbrace{-\Delta}_{\mM(u)}+\underbrace{0}_{\mW(u)})[\underbrace{-\varepsilon\Delta u+F'(u)}_{\frac{\delta V}{\delta u}}].
%     \end{equation}
% \end{enumerate}
% \end{Exp}
% \begin{Exp}[Non-local Cahn-Hilliard]
%     \todo (Cahn-Hilliard with non-local free genegy given by a convolution)
% \end{Exp}

\begin{Exp}[KdV]
    Let the spatial dimension $d=1$. Consider the energy $V:\mX\to\mR$ defined as
\begin{equation}
    V(u) = \int_\mR\left( \frac12 u_x^2 - u^3\right)\md x,\quad\frac{\delta V}{\delta u}=-u_{xx}-3u^2.
\end{equation}
We have the \underline{KdV} dynamics:
\begin{equation}
    \partial_tu=-u_{xxx}-6uu_x=-[\underbrace{0}_{\mM(u)}+\underbrace{(-\partial_x)}_{\mW(u)}](\underbrace{-u_{xx}-3u^2}_{\frac{\delta V}{\delta u}}).
\end{equation}
\end{Exp}

\begin{Exp}[nonlinear Schr\"odinger]
Let $u=(p,q)\in L^2(\Omega;\mR^2)$ and write $\psi = p+\mi q$. By setting
\begin{equation}
    \mM(u)\equiv0,\quad\mW(u)\equiv \begin{pmatrix}0&-1\\1&0\end{pmatrix},
\end{equation}
and
\begin{equation}
    V(u) = \int_\Omega \left( \frac12|\nabla u|^2 + \frac{\lambda}{4}|u|^4\right)\md x,
\end{equation}
we have the \underline{nonlinear Schr\"odinger} dynamics:
\begin{equation}
    \mi\psi_t = -\Delta\psi + \lambda|\psi|^2\psi.
\end{equation}
\end{Exp}

\begin{Exp}[1D conservation laws]
    Let the spatial dimension $d=1$. Consider the energy $V:\mX\to\mR$ defined as
\begin{equation}
    V(u) = \int_\mR G(u)\md x,\quad\frac{\delta V}{\delta u}=G'(u).
\end{equation}
We have the \underline{1D conservation} dynamics:
\begin{equation}
    \partial_tu=-G'(x)_x=-(\underbrace{0}_{\mM(u)}+\underbrace{\partial_x}_{\mW(u)})\underbrace{G'(u)}_{\frac{\delta V}{\delta u}}.
\end{equation}
\end{Exp}

\begin{Exp}[Incompressible Navier--Stokes]\label{ex:navier-stokes}
Let $\Omega\subset\mathbb R^d$ and $\mathcal X$ be the $L^2$-closure of smooth divergence-free vector fields with suitable boundary conditions. Let
\begin{equation}
    V(u) = \frac12\int_\Omega |u|^2\md x,\quad\frac{\delta V}{\delta u}=u\in \mX.
\end{equation}
Let $P$ be the Leray projector onto divergence-free fields, and define
\begin{equation}
    \mM(u)v = \nu P\Delta v,\quad \mW(u)v = -P\left[(u\cdot\nabla) v\right],
\end{equation}
then we have the \underline{incompressible Navier--Stokes} dynamics:
\begin{equation}
    \partial_t u + (u\cdot\nabla)u = -\nabla p + \nu\Delta u, \quad \nabla\cdot u=0
\end{equation}
with the pressure enforced by $P$.
\end{Exp}
\begin{table}[h]
    \centering
    \resizebox{\textwidth}{!}{%
    \begin{tabular}{c|c|c|c|c}
    \toprule
         Dynamics & $V(u)=\int_\Omega\colorbox{lightgray}{$\cdot$}~\md x$ & $\mM$ & $\mW$ & differential equation \\
    \midrule
         diffusion&$\frac12u^2$&$(-\Delta)^s$&0&$u_t+(-\Delta)^su=0$\\
         Allen-Cahn & $\frac12|\nabla u|^2+F(u)$ & $I$ & $0$ & $\partial_tu=\Delta u-F'(u)$ \\
         Cahn-Hilliard & $\frac{\varepsilon^2}2|\nabla u|^2+F(u)$ & $-\Delta$ & $0$ & $\partial_tu=\Delta [-\varepsilon^2u+F'(u)]$ \\
         \midrule
         KdV & $\frac12u_x^2-u^3$ & $0$ & $-\partial_x$ & $u_t+u_{xxx}+6uu_x=0$\\
         Schr\"odinger & $\frac12|\nabla u|^2+\frac\lambda4|u|^4$ & $0$ & $\mi$ & $\mi\psi_t=-\Delta\psi+\lambda|\psi|^2\psi$\\
         1D conservation&$G(u)$&0&$\partial_x$&$u_t+G'(u)_x=0$\\
         \midrule
         Navier--Stokes & $\frac12|u|^2$ & $\nu P\Delta$ & $-P[(u\cdot\nabla)\cdot]$ & $u_t+(u\cdot\nabla)u=-\nabla p+\nu\Delta u$\\
    \bottomrule
    \end{tabular}%
    }
    \caption{Some classical PDE dynamics that fit our setup}
    \label{tab:dynamics}
\end{table}

\section{Detailed proofs}\label{sm:proofs}
\subsection{Proof of \cref{thm:global-existence}}\label{sec:proof-of-global-existence}
Let $\mP_n$ be the canonical projection from $\mH=L^2(\Omega)$ to $\mH_n$ defined as
\begin{equation}
    \mP_n:u\mapsto\sum_{|k|\le n}\Hinner{u}{e_k} e_k,
\end{equation}
where we set $\mH_n$ as the finite-dimensional space spanned by the Fourier modes $e_k$ for $|k|\le n$. Consider the time-dependent ODE system
\begin{equation}
    \partial_t u_n(t)=\mP_n\mG(u_n(t)),\quad u_n(0)=\mP_nu_\mathrm{init},
\end{equation}
which admits a local solution $u_n$ on some interval $[0, T_n)$ by the Peano existence theorem since the subspace $\mathcal{H}_n$ is finite-dimensional and the right-hand side is continuous.

Next, we show that the functional $V$ decays for any solution $u_n(t)$. For simplicity, we suppress the dependency on $u$ in the operators. For each $n$, we have
\begin{equation}\label{eq:Pn-Mmu}
        \mP_n(\mM\mu)(z)=\sum_{|k|\le n}\Hinner{\mM\mu}{e_k}e_k(z)=\widehat\mM^{[k]}\Xcouple{\mu}{\overline{e_k}}e_k(z)
\end{equation}
by \cref{thm:spectral-representation}.
Meanwhile, defining $\mP_n^*:\mX^*\to\mX^*$ as the adjoint of $\mP_n$ via
\begin{equation}
    \Xcouple{\mP_n^*\mu}{v}=\Xcouple{\mu}{\mP_n v},
\end{equation}
we have
\begin{equation}
    \mM\left(\mP_n^*\mu\right)(z)=\Xcouple{\mP_n^*\mu}{K_\mM(z-\cdot)}=\Xcouple{\mu}{\sum_{|k|\le n} \Hinner{K_\mM(z-\cdot)}{e_k} e_k},
\end{equation}
where
\begin{equation}
\begin{aligned}
    \Hinner{K_\mM(z-\cdot)}{e_k}&=\int_\Omega K_\mM(z-y)\overline{e_k(y)}\md y\\
    &=\overline{e_{k}(z)}\int_\Omega K_\mM(z-y)e_{k}(z-y)\md y=\overline{e_{k}(z)}\widehat\mM^{[-k]}.
\end{aligned}
\end{equation}
Hence
\begin{equation}
        \mM\left(\mP_n^*\mu\right)(z)=\sum_{|k|\le n}\widehat\mM^{[-k]}\Xcouple{\mu}{e_k}\overline{e_k(z)},
\end{equation}
which is identical with \cref{eq:Pn-Mmu} since $e_{-k}(z)=e_k(-z)=\overline{e_k(z)}$. The same argument applies to $\mW$, so
\begin{equation}
    \mP_n\mM\mu=\mM\mP_n^*\mu,\qquad\mP_n\mW\mu=\mW\mP_n^*\mu.
\end{equation}
It follows that
\begin{equation}\label{eq:energy-dissipation-app-sm}
\begin{aligned}
    \ddt{} V(u_n(t)) &= -\Xcouple{\mu(u_n)}{\mP_n[\mM(u_n) + \mW(u_n)] \mu(u_n)} \\
    &= -\Xcouple{\mP_n^*\mu(u_n)}{\mP_n[\mM(u_n) + \mW(u_n)] \mu(u_n)} \\
    &= -\Xcouple{\mP_n^*\mu(u_n)}{[\mM(u_n) + \mW(u_n)] \mP_n^*\mu(u_n)}\le0.
\end{aligned}
\end{equation}
Consequently, the energy dissipation leads to
\begin{equation}
    V(u_n(t))\le V(u_n(0))=V(\mP_n(u_\mathrm{init})),\quad\forall t\in[0,T_n).
\end{equation}
Note that $\mP_n(u_\mathrm{init})\to u_\mathrm{init}$ as $n\to\infty$, so the right-hand side is bounded by a constant independent of $n$ from above. By the coercivity of $V$,
\begin{equation}\label{eq:un-bounded}
    \|u_n(t)\|_\mX\le C_1,\quad\forall n,\quad\forall t\in[0,T_n).
\end{equation}
Hence each solution $u_n$ cannot blow up in finite time, and we may extend it to $[0,+\infty)$. Furthermore,
\begin{equation}\label{eq:ut-bounded}
    \|\partial_t u_n(t)\|_{\mX}\le\|\mG(u_n(t))\|_\mX \le C_2,\quad\forall n,\quad\forall t\in[0,+\infty),
\end{equation}
since $\mG$ is Lipschitz continuous resulting from \cref{ass:lipschitz-v2}.

Till now, we have shown the uniform boundness for $u_n$ and $\partial_tu_n$. According to \cref{ass:X-constraints}, the embedding $\mX \hookrightarrow L^2(\Omega)$ is compact and $L^2(\Omega) \hookrightarrow \mX^*$ is continuous, sothe space
\begin{equation}
    \tilde W(0,T;\mX):=\left\{u\in L^2(0,T;\mX)\mid\partial_t u\in L^2(0,T;\mX^*)\right\}
\end{equation}
is compactly embedded in $L^2(0,T;L^2(\Omega))$ for any fixed $T>0$ by the Aubin--Lions lemma.
Let $u_n^{(0)}=u_n$ and take an arbitrary increasing sequence $T_m\to+\infty$ as $m\to+\infty$. For each $m>0$, we can find a subsequence of $\left\{u_n^{(m-1)}\right\}_n$ with a weak limit in $L^2(0,T_m;\mX)$ by the reflexivity of $L^2(0,T_m;\mX)$ and the boundness according to \cref{eq:un-bounded,eq:ut-bounded}. Recursively we find a subsequence $\left\{u_n^{(m)}\right\}_n\subseteq\left\{u_n^{(m-1)}\right\}_n$ such that
\begin{equation}
\begin{aligned}
    u_n^{(m)}&\rightharpoonup u^{(m)}\in L^2(0,T_m;\mX),\\
    \partial_tu_n^{(m)}&\rightharpoonup v^{(m)}\in L^2(0,T_m;\mX),\\
    u_n^{(m)}&\to u^{(m)}\in L^2(0,T_m;L^2(\Omega)).
\end{aligned}
\end{equation}
By the uniqueness of strong limits, $u^{(m+1)}$ coincides with $u^{(m)}$ on $L^2(0,T_m;L^2(\Omega))$ for any $m>0$, so by induction there exists a unique limit $u$ defined on $t\in[0,+\infty)$ such that $u_n^{(m)}\to u$ in $L^2(0,T_m;L^2(\Omega))$ for any $m>0$.
Meanwhile, for any $m>0$ and any sufficiently smooth test function $\phi$,
\begin{equation}
    \int_0^{T_m}\Xcouple{\phi}{\partial_tu_n^{(m)}}\md t=-\int_0^{T_m}\Xcouple{\partial_t\phi}{u_n^{(m)}}\md t.
\end{equation}
By taking the limit $n\to\infty$, we have that
\begin{equation}
    \int_0^{T_m}\Xcouple{\phi}{v^{(m)}}\md t=-\int_0^{T_m}\Xcouple{\partial_t\phi}{u}\md t,
\end{equation}
which indicates that $\partial_tu=v^{(m)}$ in $L^2(0,T_m;\mX)$. To sum up, without loss of generality, we may assume that there exists $u$ defined on $t\in[0,+\infty)$ such that for any finite time $T>0$,
\begin{equation}
    u_n\rightharpoonup u\in L^2(0,T;\mX),\quad\partial_tu_n\rightharpoonup\partial_tu\in L^2(0,T;\mX),\text{ and } u_n\to u\in L^2(0,T;\mH).
\end{equation}

Finally, we prove that the limit function $u$ satisfies \cref{eq:onsager-pde}, and we need to handle the convergence of the nonlinear operator $\mG(u_n)$. Consider
\begin{equation}
    X_n(w) = \int_0^T \Hinner{\mG(u_n) - \mG(w)}{u_n - w}\md t
\end{equation}
for an arbitrary $w \in L^2(0,T;\mX)$. 
The term $X_n(w)$ can be decomposed as
\begin{equation}
    X_n(w) = \underbrace{\int_0^T \Hinner{\mG(u_n)}{u_n} \md t}_{Y_n} \underbrace{- \int_0^T \Hinner{\mG(u_n)}{w} \md t - \int_0^T \Hinner{\mG(w)}{u_n - w}\md t}_{Z_n(w)}.
\end{equation}
The $Z_n$ term converges as
\begin{equation}
\begin{aligned}
    Z_n(w) &= -\int_0^T \Hinner{\partial_t u_n}{w}\md t - \int_0^T \Hinner{\mG(w)}{u_n - w}\md t\\
    &\to-\int_0^T \Hinner{\partial_t u}{w}\md t - \int_0^T \Hinner{\mG(w)}{u - w}\md t
\end{aligned}
\end{equation}
as $n\to\infty$ due to the weak convergence of $u_n$ and $\partial_tu_n$. Meanwhile, for the $Y_n$ term we have
\begin{equation}
    Y_n = \int_0^T \Hinner{\partial_tu_n}{u_n}\md t = \frac{1}{2}\|u_n(T)\|_2^2 - \frac{1}{2}\|u_n(0)\|_2^2.
\end{equation}
For any test function $\phi\in \mX$,
\begin{equation}
    \Xcouple{\phi}{u_n(T)}-\Xcouple{\phi}{u_n(0)}=\int_0^t\Xcouple{\phi}{\partial_tu_n}\md t,
\end{equation}
which converges to
\begin{equation}
    \int_0^t\Xcouple{\phi}{\partial_tu}\md t=\Xcouple{\phi}{u(T)}-\Xcouple{\phi}{u(0)}
\end{equation}
as $n\to\infty$. Recall that by definition,
\begin{equation}
    u_n(0)=\mP_n(u(0))\to u(0)\quad\text{in }L^2(\Omega),
\end{equation}
so we have
\begin{equation}
    \Xcouple{\phi}{u_n(T)}\to\Xcouple{\phi}{u(T)}
\end{equation}
as $n\to\infty$, and thus $u_n(T)$ converges weakly to $u(T)$ as well. As a result, we have
\begin{equation}
    \begin{aligned}
    \liminf_{n\to\infty}X_n(w)&=\liminf_{n\to\infty}Y_n+\lim_{n\to\infty}Z_n(w)\\
    &\ge\frac12\|u(T)\|_2^2-\frac12\|u(0)\|_2^2+\lim_{n\to\infty}Z_n(w)\\
    &=\int_0^T \Hinner{\partial_t u}{u} \md t-\int_0^T \Hinner{\partial_t u}{w}\md t - \int_0^T \Hinner{\mG(w)}{u - w}\md t\\
    &=\int_0^T\Hinner{\partial_tu-\mG(w)}{u-w}\md t.
    \end{aligned}
\end{equation}
Besides, by the Lipschitz continuity of $\mG$ in \cref{ass:lipschitz-v2},
\begin{equation}\label{eq:Xn-bound-sm}
\begin{aligned}
    \limsup_{n\to\infty}X_n(w) &\le\limsup_{n\to\infty}\int_0^T \norm{\mG(u_n)-\mG(w)}_2\|u_n - w\|_2\md t\\
    &\le L_3(R)\int_0^T\limsup_{n\to\infty}\|u_n-w\|_2^2\md t=L_3(R)\|u-w\|_{L^2(0,T;\mH)}^2
\end{aligned}
\end{equation}
if $w(t)\in B_\mX(R)$ for all $t$, where we take $R = 1 + C_1$.

We choose $w=u+\lambda\psi$ for a smooth test function $\psi$ and a sufficiently small scalar $\lambda>0$, then
\begin{equation}
    \int_0^T\Hinner{\partial_tu-\mG(u+\lambda\psi)}{\psi}\md t\le\liminf_{n\to\infty}X_n\le\lambda L_3(R) \| \psi \|_{L^2(0,T;\mH)}^2.
\end{equation}
Taking the limit $\lambda\to0^+$, we obtain
\begin{equation}
    \int_0^T\Hinner{\partial_tu-\mG(u)}{\psi}\md t\le0,\quad\forall\psi\in\mX.
\end{equation}
Since the inequality holds for $\pm\psi$, the integral vanishes, which implies that $\partial_tu$ and $\mG(u)$ coincide a.e. on $L^2(0,T;\mX)$.

\hfill\proofbox
\subsection{Proof of \cref{lem:J-norm}}\label{sec:proof-of-J-norm}
Since $\frac12\|\cdot\|_\mX^2$ is Fr\'echet differentiable by \cref{ass:X-constraints}, the limit
\begin{equation}
    J(u)(v)=\frac12\lim_{t\to0}t^{-1}(\|u+tv\|_\mX^2-\|u\|_\mX^2)
\end{equation}
exists for all $v\in\mX$. By the triangle inequality, $\|u+tv\|_\mX\le\|u\|_\mX+|t|\|v\|_\mX$, so
\begin{equation}
    |J(u)(v)|\le\frac12\limsup_{t\to0}|t|^{-1}\left|(\|u\|_\mX+|t|\|v\|_\mX)^2-\|u\|_\mX^2\right|=\|u\|_\mX\|v\|_\mX,
\end{equation}
which gives $\|J(u)\|_{\mX^*}\le\|u\|_\mX$. Meanwhile,
\begin{equation}
    J(u)(u)=\frac12\lim_{t\to0}\frac{\|u+tu\|_\mX^2-\|u\|_\mX^2}t=\frac12\lim_{t\to0}\frac{(1+t)^2-1}{t}\|u\|_\mX^2=\|u\|_\mX^2,
\end{equation}
and therefore $\|J(u)\|_{\mX^*}=\|u\|_\mX$.
\hfill\proofbox

\subsection{Proof of \cref{thm:convergence-zero-dissipation}}\label{sec:proof-convergence-zero-dissipation}
According to \cref{prop:energy-dissipation}, any trajectory $\{u(t)\}_{t\in[0,+\infty)}$ is uniformly bounded in $\mX$. By the Rellich--Kondrachov Theorem, the embedding $\mX=H^1(\Omega) \hookrightarrow L^2(\Omega)$ is compact. Therefore, we have a non-empty $\omega$-limit set $\omega(u(0))$.
    
We show that $\omega(u(0))$ is positively invariant under the flow $S(t)$, and the energy $V$ is a constant on the $\omega$-limit set. Consider any sequence $t_n \to \infty$ such that $u(t_n) \to v\in\omega(u(0))$. For any $\tau>0$,
\begin{equation}
    S(\tau)v = S(\tau)\left( \lim_{n \to \infty} u(t_n) \right) = \lim_{n \to \infty} S(\tau)u(t_n) = \lim_{n \to \infty} u(t_n + \tau),
\end{equation}
which implies $S(\tau)v \in \omega(u_0)$.
If $V$ is not a constant on the $\omega$-limit set, we may assume that $u(t_{n_k})\to v_1$ and $u(t_{m_k})\to v_2$ in $\mH$ as $k\to+\infty$ and $V(v_1)<V(v_2)$. By the definition of convergence, there exists a $K$ large enough such that 
\begin{equation}
    V(u(t_{n_K}))<\frac12V(v_1)+\frac12V(v_2)<V(v_2).
\end{equation}
However, since $V$ is non-increasing on the trajectory, the sequence $V(u(t_{m_k}))$ will eventually be smaller than $V(u(t_{n_K}))$, and thus the corresponding limit $V(v_2)$ is not greater than $V(u(t_{n_K}))$, which is a contradiction.

Consider a trajectory lying on the $\omega$-limit set. By taking the time derivative of $V$ it is easy to verify that the free energy dissipation vanishes, and consequently we have $\omega(u(0))\subseteq\mathcal Z$.
% \end{proof}

% \begin{Corl}[Convergence to equilibrium]\label[Corl]{corl:convergence-equilibrium}
%     If the operator $\mM$ is strictly positive in the sense that there exists $c > 0$ such that
%     \begin{equation}
%         \Xcouple{\xi}{\mM(u)\xi} \geq c\norm{\xi}_{\mX^*}^2, \qquad \forall \xi \in \mX^*,
%     \end{equation}
%     then $\mathcal Z = \mE$, and any trajectory converges asymptotically to the set of static equilibria.
% \end{Corl}

% \begin{proof}
Furthermore, if the elliptic condition \cref{eq:strict-positivity} holds, then for any $\varphi \in \mathcal Z$, we have that 
\begin{equation}
    \Xcouple{\mu(\varphi)}{\mM(\varphi)\mu(\varphi)} = 0
\end{equation}
implies $\mu(\varphi) = 0$, so $\varphi \in \mE$.

\hfill\proofbox

\subsection{Proof of \cref{lem:spectral-of-derivative}}\label{sec:proof-of-spectral-of-derivative}

\hfill\proofbox

\subsection{Proof of \cref{thm:factorization-uniqueness}}\label{sec:proof-factorization-uniqueness}
\begin{Lem}\label[Lem]{lem:spectral-of-derivative}
    Consider a Fr\'echet differentiable functional $V:\mX\to\mR$. The Fourier transform of the Fr\'echet derivative $\mu=\delta V/\delta u$ can be written as
    \begin{equation}
        \hat\mu^{[k]}=\partial_{-k}v=\overline{\partial_k v}=\overline{\hat\mu^{[-k]}},
    \end{equation}
    where we use the notation $\partial_k v={\partial v}/{\partial\hat u^{[k]}}$ for any $k\in\mZ$.
\end{Lem}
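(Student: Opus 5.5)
The plan is to compute the Fourier coefficients of $\mu=\delta V/\delta u$ directly from the definition of the Fr\'echet derivative, by testing it against the basis functions $\overline{e_k}$. We regard $V$ as a function $v$ of the Fourier coefficients $\{\hat u^{[k]}\}_{k\in\mZ}$, so that $v(\hat u)=V(u)$ with $u=\sum_k\hat u^{[k]}e_k$. For real-valued $u$ the coefficients satisfy $\hat u^{[-k]}=\overline{\hat u^{[k]}}$. We therefore interpret $\partial_k v$ as a Wirtinger-type derivative: $\hat u^{[k]}$ and $\hat u^{[-k]}$ are treated as independent variables, and $v$ is extended off the real slice by the same formula. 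This bookkeeping convention is what makes the statement meaningful, and we fix it first.

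\textbf{Step 1.} We identify $\hat\mu^{[k]}=\Xcouple{\mu}{\overline{e_k}}$ with a directional derivative. By Fr\'echet differentiability, $\Xcouple{\mu(u)}{h}=\frac{\md}{\md\tau}V(u+\tau h)\big|_{\tau=0}$ for real directions $h$. This extends complex-linearly to complex $h$ once $V$ is extended to the complexification in the usual way. Taking $h=\overline{e_k}=e_{-k}$ perturbs only the coefficient $\hat u^{[-k]}$, which gives $\hat\mu^{[k]}=\frac{\partial v}{\partial\hat u^{[-k]}}=\partial_{-k}v$. This matches the pairing convention already used in \cref{thm:spectral-representation}, where $\hat\eta^{[k]}=\Xcouple{\eta}{\overline{e_k}}$.

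\textbf{Step 2.} We prove the conjugation identity. Since $V$ is real-valued and $\mu$ is a real functional, $\overline{\Xcouple{\mu}{h}}=\Xcouple{\mu}{\overline h}$. With $h=\overline{e_k}$ this gives $\overline{\hat\mu^{[k]}}=\Xcouple{\mu}{e_k}=\hat\mu^{[-k]}$, which is the last equality in the statement. Combining this with Step 1 applied to the index $-k$ yields $\hat\mu^{[-k]}=\partial_k v$, and conjugating gives $\hat\mu^{[k]}=\overline{\partial_k v}$. Equivalently, for real-valued $v$ one has $\partial_{-k}v=\overline{\partial_k v}$, the standard Wirtinger identity $\partial_{\bar z}f=\overline{\partial_z f}$ for real $f$.

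The main obstacle is Step 1, specifically making the directional derivative along the complex direction $e_{-k}$ rigorous. Our approach is to write $e_{-k}=\cos-\mi\sin$ and to apply real Fr\'echet differentiability separately along the real directions $\Re e_k$ and $\Im e_k$. We then check that the resulting combination $\tfrac12(\partial_{a}-\mi\partial_{b})$, taken in the real coordinates $\hat u^{[k]}=a+\mi b$, coincides with the Wirtinger derivative up to the normalization of $e_k$. Once this convention is fixed consistently with \cref{eq:spectral-of-derivative}, the remaining identities follow by conjugation.
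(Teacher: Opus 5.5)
Your proposal is correct and follows essentially the same route as the paper. Both arguments identify $\hat\mu^{[k]}=\Xcouple{\mu}{\overline{e_k}}$ with the directional derivative along $e_{-k}$, i.e.\ $\partial_{-k}v$, and then get the conjugation symmetry from the fact that $\mu$ is real: the paper pairs $\mu$ with the real functions $e_k+e_{-k}$ and $\mi(e_{-k}-e_k)$, which is equivalent to your identity $\overline{\Xcouple{\mu}{h}}=\Xcouple{\mu}{\overline h}$, and your explicit Wirtinger treatment of the complex direction makes rigorous a step the paper leaves implicit.
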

\begin{proof}
    By definition we have
\begin{equation}
    \hat\mu^{[k]}=\Xcouple{\frac{\delta V}{\delta u}}{\overline{e_k}}=\left.\frac{\md}{\md\tau}V(u+\tau\overline{e_k})\right|_{\tau=0}=\left.\frac{\md}{\md\tau}v(\hat u+\tau \widehat{e_{-k}})\right|_{\tau=0}=\frac{\partial v}{\partial \hat u^{[-k]}}=\partial_{-k}v.
\end{equation}
Furthermore,
\begin{equation}
\begin{aligned}
    \partial_kv+\partial_{-k}v&=\hat\mu^{[k]}+\hat\mu^{[-k]}=\Xcouple{\frac{\delta V}{\delta u}}{e_k+e_{-k}}\in\mR,\\
    \mi\partial_kv-\mi\partial_{-k}v&=\mi\hat\mu^{[k]}-\mi\hat\mu^{[-k]}=\Xcouple{\frac{\delta V}{\delta u}}{\mi e_{-k}-\mi e_{k}}\in\mR,
\end{aligned}
\end{equation}
which gives $\hat\mu^{[-k]}=\overline{\hat\mu^{[k]}}$ and $\partial_{-k}v=\overline{\partial_kv}$ for all $k$.
\end{proof}
\begin{Lem}\label[Lem]{lem:uniqueness}
    Let $(\mM_1,\mW_1)$ and $(\mM_2,\mW_2)$ be independent of $u$ and satisfy 
    \begin{equation}\label{eq:M1-W1=M2-W2}
        (\mM_1+\mW_1)\mu_1(u)=(\mM_2+\mW_2)\mu_2(u),\quad \mu_1=\frac{\delta V_1}{\delta u},\quad\mu_2=\frac{\delta V_2}{\delta u}
    \end{equation}
    with \cref{ass:symmetry,ass:convolution}. Assume that $V_2$ is irreducible on the spectral space. Define
    \begin{equation}
        Z_1=\left\{k\in\mZ\mid\widehat{\mM}_1^{[k]}+\widehat{\mW}_1^{[k]}\neq0\right\},\quad\text{then}\quad
        \lambda_k:=\frac{\widehat{\mM}_2^{[k]}+\widehat{\mW}_2^{[k]}}{\widehat{\mM}_1^{[k]}+\widehat{\mW}_1^{[k]}}
    \end{equation}
    is a constant $\lambda$ for $k\in Z_1$. Moreover, either $\hat\mu_2^{[k]}\equiv0$ for all $k\in Z_1$ or $\lambda\in\mR$ holds.
\end{Lem}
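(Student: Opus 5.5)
Because $\mM_i,\mW_i$ do not depend on $u$, \cref{thm:spectral-representation} turns \cref{eq:M1-W1=M2-W2} into a family of scalar identities, one for each Fourier mode. Writing $a_k=\widehat{\mM}_1^{[k]}+\widehat{\mW}_1^{[k]}$ and $b_k=\widehat{\mM}_2^{[k]}+\widehat{\mW}_2^{[k]}$, we get
\begin{equation*}
    a_k\,\hat\mu_1^{[k]}(u)=b_k\,\hat\mu_2^{[k]}(u),\qquad\forall k\in\mZ,\ \forall u\in\mX.
\end{equation*}
For $k\in Z_1$ this gives $\hat\mu_1^{[k]}=\lambda_k\hat\mu_2^{[k]}$ with $\lambda_k=b_k/a_k$. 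By \cref{lem:spectral-of-derivative}, $\hat\mu_i^{[k]}=\partial_{-k}v_i$, where $v_i$ is the spectral representation of $V_i$. The identity therefore reads $\partial_{-k}v_1=\lambda_k\,\partial_{-k}v_2$ on $Z_1$.

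The plan is to differentiate once more and use the symmetry of second Fréchet derivatives. For $p,q\in Z_1$ we have
\begin{equation*}
    \partial_{-q}\partial_{-p}v_1=\lambda_p\,\partial_{-q}\partial_{-p}v_2,\qquad \partial_{-p}\partial_{-q}v_1=\lambda_q\,\partial_{-p}\partial_{-q}v_2.
\end{equation*}
Subtracting gives
\begin{equation*}
    (\lambda_p-\lambda_q)\,\frac{\delta^2V_2}{\delta u^2}(\overline{e_p},\overline{e_q})\equiv0.
\end{equation*}
Since $\overline{e_p}=e_{-p}$, this says that whenever $\lambda_p\neq\lambda_q$ the modes $-p$ and $-q$ do not interact in $V_2$. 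Next partition $Z_1$ into level sets of $k\mapsto\lambda_k$. If there were at least two distinct values, one level set $S$ and its complement inside $Z_1$ would be mutually non-interacting. We then need to extend this to a partition of all of $\mZ$ so that irreducibility is contradicted; this extension step is the main obstacle.

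To handle modes $k\notin Z_1$, I would look at the structure of \cref{eq:spectral-constraints}. It gives $a_{-k}=\overline{a_k}$, since $\widehat\mM^{[k]}$ is real and even and $\widehat\mW^{[k]}$ is imaginary with $\widehat\mW^{[-k]}=\overline{\widehat\mW^{[k]}}$. Hence $Z_1$ is symmetric, which matches the sign flip $p\mapsto -p$ above. Off $Z_1$ the equation forces $b_k\hat\mu_2^{[k]}\equiv0$. I would try to assign each $k\notin Z_1$ to whichever side of the partition it does not interact with. If that fails, I would interpret irreducibility as applying to the restricted variables on $Z_1$, and state the lemma's conclusion only on $Z_1$, which is all that is claimed. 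With this done, $\lambda_k\equiv\lambda$ on $Z_1$.

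For the dichotomy I would use reality. By \cref{lem:spectral-of-derivative}, $\hat\mu_i^{[-k]}=\overline{\hat\mu_i^{[k]}}$. Also $a_{-k}=\overline{a_k}$ and $b_{-k}=\overline{b_k}$, so $\lambda_{-k}=\overline{\lambda_k}$. Since $\lambda$ is constant on the symmetric set $Z_1$, this gives $\lambda=\overline{\lambda}$, i.e.\ $\lambda\in\mR$, whenever some $k\in Z_1$ exists. The case excluded by the statement is the degenerate one, where $\hat\mu_2^{[k]}\equiv0$ on $Z_1$ and $\lambda_k$ is unconstrained by the above argument. I would present this as the alternative: either $\hat\mu_2\equiv0$ on $Z_1$, or the modes carry information and the conjugation argument applies. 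In the second case I would also double-check that $\lambda_{-k}=\overline{\lambda_k}$ follows directly from the $a,b$ symmetries, without going through $\hat\mu$.
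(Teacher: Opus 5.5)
Your route is the same as the paper's. You pass to Fourier modes and get $\hat\mu_1^{[k]}=\lambda_k\hat\mu_2^{[k]}$ on $Z_1$. You then differentiate once more and use symmetry of the second derivative to get $(\lambda_p-\lambda_q)\frac{\delta^2V_2}{\delta u^2}(e_{-p},e_{-q})\equiv0$ for $p,q\in Z_1$, and invoke irreducibility. The extension step you flagged is a genuine gap, and it cannot be closed from the stated hypothesis. The paper's own proof does not address it; it only asserts that constancy ``follows immediately by the irreducibility of $V_2$''.

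Irreducibility forbids splitting all of $\mZ$, not $Z_1$. A mode outside $Z_1$ can couple to two different level sets of $\lambda$, and that is exactly where your first fallback breaks. Here is an example. On $\mX=H^1(\mT)$ let $P_o$, $P_e$ be the $L^2$-projections onto the odd and the nonzero even frequencies, let $\bar u=\hat u^{[0]}$, and set
\begin{equation*}
\Phi_o(u)=\tfrac12\bar u^2+\tfrac12\|P_ou\|_{H^1}^2+\bar u^2\|P_ou\|_2^2,\qquad
\Phi_e(u)=\tfrac12\|P_eu\|_{H^1}^2+\bar u^2\|P_eu\|_2^2.
\end{equation*}
Take $V_2=\Phi_o+\Phi_e$, $V_1=\Phi_o+2\Phi_e$, $\mW_1=\mW_2=0$ and $\widehat\mM_1^{[0]}=\widehat\mM_2^{[0]}=0$. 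For $k\neq0$ let $\widehat\mM_1^{[k]}=(1+k^2)^{-1}$, and let $\widehat\mM_2^{[k]}$ equal $\widehat\mM_1^{[k]}$ for odd $k$ and $2\widehat\mM_1^{[k]}$ for even $k$.
\begin{itemize}
\item $\Phi_o$ does not depend on the nonzero even modes and $\Phi_e$ does not depend on the odd ones. Hence, in your notation, $a_k\hat\mu_1^{[k]}=b_k\hat\mu_2^{[k]}$ holds for every $k$.
\item The symmetry, convolution, coercivity and Lipschitz hypotheses are all met.
\item $V_2$ is irreducible: $\partial_0\partial_q v_2$ is a nonzero multiple of $\bar u\,\hat u^{[-q]}$ for every $q\neq0$, so mode $0$ is coupled to all other modes.
\item Yet $Z_1=\mZ\setminus\{0\}$ and $\lambda_k\in\{1,2\}$ is not constant, while $\hat\mu_2^{[k]}\not\equiv0$ on $Z_1$.
\end{itemize}
So the lemma is false in the stated generality.

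Your second fallback is the correct repair: require that no partition $Z_1=S_1\sqcup S_2$ has vanishing cross second derivatives of $V_2$. With that, your level-set argument goes through verbatim, using $Z_1=-Z_1$. But it is a strengthened hypothesis, not a proof of the lemma as written. Without it the argument is complete only when $Z_1=\mZ$, which is the situation of \cref{prop:dissipative-uniqueness}. In \cref{prop:conservative-uniqueness} the analogous set $Z_1'$ never contains $0$, because $\widehat\mW^{[0]}=\overline{\widehat\mW^{[0]}}\in\mi\mR$ forces $\widehat\mW^{[0]}=0$, so the same issue arises there.

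Once constancy is available, your reality argument is correct and somewhat cleaner than the paper's. From $a_{-k}=\overline{a_k}$ and $b_{-k}=\overline{b_k}$ you get $\lambda_{-k}=\overline{\lambda_k}$ directly, so $\lambda\in\mR$ whenever $Z_1\neq\emptyset$. The paper instead goes through $\hat\mu_i^{[-k]}=\overline{\hat\mu_i^{[k]}}$ and needs some $\hat\mu_2^{[k_0]}\not\equiv0$ to cancel. In fact irreducibility already excludes $\hat\mu_2^{[k]}\equiv0$ for any single $k$, since $\{-k\}$ would then split off, so the first branch of the dichotomy is vacuous.
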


\begin{proof}
    By transforming \cref{eq:M1-W1=M2-W2} from the physical space to the spectral space, we have
    \begin{equation}
        \left(\widehat{\mM}_1^{[k]}+\widehat{\mW}_1^{[k]}\right)\hat\mu_1^{[k]}=\left(\widehat{\mM}_2^{[k]}+\widehat{\mW}_2^{[k]}\right)\hat\mu_2^{[k]},\quad\forall k\in\mZ.
    \end{equation}
    By the definition of $\lambda_k$ we have
    \begin{equation}\label{eq:mu1-mu2}
        \hat\mu_1^{[k]}=\lambda_k\hat\mu_2^{[k]},\quad\forall k\in Z_1.
    \end{equation}
    Since that $\hat\mu_i^{[k]}=\overline{{\partial v_i}/{\partial\hat u^{[k]}}}$ for $i\in\{1,2\}$ by \cref{lem:spectral-of-derivative}, it follows that
    \begin{equation}
        \frac{\partial\hat\mu_i^{[k]}}{\partial\hat u^{[l]}}=\frac{\partial\hat\mu_i^{[l]}}{\partial\hat u^{[k]}},\quad i\in\{1,2\}
    \end{equation}
    for any indices $k$ and $l$.
    Then, plugging \cref{eq:mu1-mu2} into the above equation gives
    \begin{equation}
        \frac{\partial\lambda_k}{\partial \hat u^{[l]}}\hat\mu_2^{[k]}+\lambda_k\frac{\partial\hat\mu_2^{[k]}}{\partial \hat u^{[l]}}=\frac{\partial\lambda_l}{\partial \hat u^{[k]}}\hat\mu_2^{[l]}+\lambda_l\frac{\partial\hat\mu_2^{[l]}}{\partial \hat u^{[k]}},
    \end{equation}
    which implies
    \begin{equation}
        (\lambda_k-\lambda_l)\overline{\frac{\partial^2v_2}{\partial\hat u^{[k]}\partial\hat u^{[l]}}}=0
    \end{equation}
    as the derivatives of $\lambda_k$ and $\lambda_l$ vanish.
    It follows immediately by the irreducibility of $V_2$ that $\lambda_k=\lambda\in\mC$ for any $k\in Z_1$.
    Suppose that $\hat\mu_2^{[k_0]}\not\equiv0$ for some $k_0\in Z_1$. It is clear that $-k_0\in Z_1$ by \cref{lem:spectral-of-derivative}, then by substituting $k$ with $\pm k_0$ in \cref{eq:mu1-mu2}, we have
    \begin{equation}
        \lambda_{k_0}\hat\mu_2^{[k_0]}=\hat\mu_1^{[k_0]}=\overline{\hat\mu_1^{[-k_0]}}=\bar\lambda_{-k_0}\overline{\hat\mu_2^{[-k_0]}}=\bar\lambda_{-k_0}\hat\mu_2^{[k_0]},
    \end{equation}
    which indicates that $\lambda=\lambda_{k_0}=\bar\lambda_{-k_0}=\bar\lambda$ as a real number.
\end{proof}

\begin{Prop}\label[Prop]{prop:dissipative-uniqueness}
    With the same assumptions as in \cref{lem:uniqueness}, if $\mM_1$ is positive definite and $V_2$ is not a constant functional, then there exists a constant $\lambda>0$ such that
    \begin{equation}
        \mM_2=\lambda\mM_1,\quad\mW_2=\lambda\mW_1,\quad \mu_1=\lambda\mu_2.
    \end{equation}
\end{Prop}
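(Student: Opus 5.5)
We deduce \cref{prop:dissipative-uniqueness} from \cref{lem:uniqueness}, together with the pointwise spectral characterization in \cref{thm:spectral-representation}. First, positive definiteness of $\mM_1$ means $\widehat\mM_1^{[k]}>0$ for every $k\in\mZ$. Since $\widehat\mW_1^{[k]}\in\mi\mR$, the sum $\widehat\mM_1^{[k]}+\widehat\mW_1^{[k]}$ has strictly positive real part, so it is nonzero. Hence $Z_1=\mZ$, and \cref{lem:uniqueness} gives a single constant $\lambda\in\mC$ with
\begin{equation}
    \widehat\mM_2^{[k]}+\widehat\mW_2^{[k]}=\lambda\bigl(\widehat\mM_1^{[k]}+\widehat\mW_1^{[k]}\bigr),\qquad \hat\mu_1^{[k]}=\lambda\hat\mu_2^{[k]},\qquad\forall k\in\mZ.
\end{equation}

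Next we use that $V_2$ is not constant. If $\hat\mu_2^{[k]}\equiv0$ for all $k\in Z_1=\mZ$, then $\mu_2\equiv0$ by completeness of the Fourier basis, and so $V_2$ would be constant on the connected space $\mX$ (integrate $\frac{\md}{\md\tau}V_2(\tau u)=\Xcouple{\mu_2(\tau u)}{u}=0$). Therefore the second alternative of \cref{lem:uniqueness} holds, and $\lambda\in\mR$.

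With $\lambda$ real, we take real and imaginary parts of the multiplier identity. By \cref{thm:spectral-representation}, $\widehat\mM_i^{[k]}$ is real and nonnegative and $\widehat\mW_i^{[k]}$ is purely imaginary. Matching real parts gives $\widehat\mM_2^{[k]}=\lambda\widehat\mM_1^{[k]}$, and matching imaginary parts gives $\widehat\mW_2^{[k]}=\lambda\widehat\mW_1^{[k]}$, for every $k$. These identities transfer back to the operators, giving $\mM_2=\lambda\mM_1$ and $\mW_2=\lambda\mW_1$. The first identity also gives $\hat\mu_1=\lambda\hat\mu_2$ modewise, hence $\mu_1=\lambda\mu_2$ in $\mX^*$.

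It remains to show $\lambda>0$. Since $\widehat\mM_2^{[k]}\ge0$ and $\widehat\mM_1^{[k]}>0$, we get $\lambda=\widehat\mM_2^{[k]}/\widehat\mM_1^{[k]}\ge0$. We rule out $\lambda=0$. In that case $\mu_1=0\cdot\mu_2\equiv0$, so $V_1$ is constant and the left-hand side of \cref{eq:2factorization} vanishes identically. The right-hand side then vanishes as well, with $\mM_2=\mW_2=0$; this is a degenerate factorization whose right side is the zero dynamics. I expect this degenerate case to be the main obstacle, since the hypotheses as stated do not obviously exclude it. There are two natural ways to dispose of it. One is to read nontrivial dynamics, i.e.\ $\mM_2+\mW_2\neq0$, as implicit in the setting. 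The other is to apply \cref{lem:uniqueness} with the roles of the two factorizations reversed, which forces a reciprocal constant and hence $\lambda\neq0$. Granting this, $\lambda>0$, which completes the proof.
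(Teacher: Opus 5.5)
Your route is the paper's: positive definiteness of $\mM_1$ gives $Z_1=\mZ$, \cref{lem:uniqueness} gives a single constant, and you separate real and imaginary parts. You also use the non-constancy of $V_2$ to force the alternative $\lambda\in\mR$, which makes explicit a step the paper leaves implicit.

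\textbf{The gap.} The last step is not finished. You only obtain $\lambda=\widehat\mM_2^{[k]}/\widehat\mM_1^{[k]}\ge0$ and then leave $\lambda=0$ open (``Granting this''). Neither of your proposed fixes works as stated.
\begin{itemize}
\item Reading $\mM_2+\mW_2\neq0$ as implicit adds a hypothesis that is not in the statement.
\item Reversing the roles in \cref{lem:uniqueness} would require $V_1$ to be irreducible on the spectral space, which is not assumed. Worse, in exactly the case you want to exclude, $\widehat\mM_2^{[k]}+\widehat\mW_2^{[k]}=\lambda\bigl(\widehat\mM_1^{[k]}+\widehat\mW_1^{[k]}\bigr)=0$ for every $k$. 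So the index set playing the role of $Z_1$ is empty, and the reversed lemma yields no constant at all. The ``reciprocal constant'' $1/\lambda$ presupposes $\lambda\neq0$, so the argument is circular.
\end{itemize}

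\textbf{The fix.} The missing idea is already among the standing hypotheses. Both factorizations are assumed to satisfy \cref{ass:symmetry}, so $V_1$ is coercive, meaning it has bounded sublevel sets (this is how the paper uses the condition throughout). Suppose $\lambda=0$. Then $\mu_1=\lambda\mu_2\equiv0$, and the same integration argument you used for $V_2$ shows that $V_1$ is constant on $\mX$. But then the sublevel set $\{V_1\le V_1(0)\}$ is all of $\mX$, which contains unbounded sequences. This contradicts coercivity. Hence $\lambda\neq0$, and combined with $\lambda\ge0$ you get $\lambda>0$. Note that the paper's one-line appeal to positive (semi-)definiteness of $\mM_1,\mM_2$ also literally gives only $\lambda\ge0$; coercivity of $V_1$ is what actually closes the argument there too.
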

\begin{proof}
    It follows by the positive definiteness of $\mM_1$ that $Z_1=\mZ$. By \cref{lem:uniqueness}, there exists a real constant $\lambda$ such that
    \begin{equation}
        \widehat{\mM}_2^{[k]}+\widehat{\mW}_2^{[k]}=\lambda\widehat{\mM}_1^{[k]}+\lambda\widehat{\mW}_1^{[k]},\quad\mu_1^{[k]}=\lambda\mu_2^{[k]},\quad\forall k\in\mZ.
    \end{equation}
    Due to the positive (semi-)definiteness of $\mM_1$ and $\mM_2$, the positiveness of $\lambda$ follows, and the proof is completed by separating the real and imaginary parts and taking the inverse Fourier transform.
\end{proof}
\begin{Prop}\label[Prop]{prop:conservative-uniqueness}
    With the same assumptions as in \cref{lem:uniqueness}, if $\mM_2=0$ and $\hat\mu_2^{[k]}\not\equiv0$ for all $k\in\mZ$, then
    \begin{equation}
        \mM_1\mu_1(u)\equiv0,\quad\mW_2=\lambda'\mW_1
    \end{equation}
    for a constant $\lambda'\in\mR$. Furthermore, we have $\mu_1=\lambda'\mu_2$ if $\ker\mW_1=0$.
\end{Prop}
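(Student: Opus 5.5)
The plan is to work entirely in the spectral representation and reduce everything to \cref{lem:uniqueness}. Since $\mM_2=0$, \cref{thm:spectral-representation} gives $\widehat\mM_2^{[k]}=0$ for all $k$, so the mode-wise identity reads
\begin{equation}
\left(\widehat\mM_1^{[k]}+\widehat\mW_1^{[k]}\right)\hat\mu_1^{[k]}=\widehat\mW_2^{[k]}\hat\mu_2^{[k]},\qquad\forall k\in\mZ,
\end{equation}
with $\widehat\mM_1^{[k]}\ge0$ real and $\widehat\mW_1^{[k]},\widehat\mW_2^{[k]}\in\mi\mR$. On $Z_1$, \cref{lem:uniqueness} gives $\widehat\mW_2^{[k]}=\lambda\bigl(\widehat\mM_1^{[k]}+\widehat\mW_1^{[k]}\bigr)$ with a single constant $\lambda$. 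The hypothesis $\hat\mu_2^{[k]}\not\equiv0$ for all $k$ rules out the first alternative of the lemma whenever $Z_1\neq\emptyset$, so $\lambda\in\mR$.

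Next I separate real and imaginary parts. The left-hand side $\lambda\widehat\mM_1^{[k]}$ must be real, while $\widehat\mW_2^{[k]}$ is purely imaginary. Hence $\lambda\widehat\mM_1^{[k]}=0$ and $\widehat\mW_2^{[k]}=\lambda\widehat\mW_1^{[k]}$ for $k\in Z_1$.

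Two cases follow.
\begin{itemize}
\item If $\lambda\neq0$, then $\widehat\mM_1^{[k]}=0$ on $Z_1$. Outside $Z_1$ we have $\widehat\mM_1^{[k]}+\widehat\mW_1^{[k]}=0$; since one term is real and the other imaginary, both vanish. So $\widehat\mM_1\equiv0$, and therefore $\mM_1\mu_1\equiv0$.
\item If $\lambda=0$, then $\widehat\mW_2^{[k]}=0$ on $Z_1$. The spectral identity then gives $(\widehat\mM_1^{[k]}+\widehat\mW_1^{[k]})\hat\mu_1^{[k]}=0$, forcing $\hat\mu_1^{[k]}\equiv0$ on $Z_1$. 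Off $Z_1$, $\widehat\mM_1^{[k]}=0$ as before. Either way $\widehat\mM_1^{[k]}\hat\mu_1^{[k]}\equiv0$ for every $k$, so $\mM_1\mu_1\equiv0$ by inverse Fourier transform.
\end{itemize}

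For $\mW_2=\lambda'\mW_1$, I take $\lambda'=\lambda$ and must check the modes off $Z_1$, where $\widehat\mW_1^{[k]}=0$. There the identity gives $\widehat\mW_2^{[k]}\hat\mu_2^{[k]}=0$ with $\hat\mu_2^{[k]}\not\equiv0$, so $\widehat\mW_2^{[k]}=0=\lambda\widehat\mW_1^{[k]}$. If $Z_1=\emptyset$, the same argument gives $\mW_2=0$ and any $\lambda'$ works.

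For the final claim, $\ker\mW_1=0$ means $\widehat\mW_1^{[k]}\neq0$ for all $k$. This gives $Z_1=\mZ$ and, by the first case above, $\widehat\mM_1\equiv0$ unless $\lambda=0$. In both situations the identity becomes $\widehat\mW_1^{[k]}\hat\mu_1^{[k]}=\lambda\widehat\mW_1^{[k]}\hat\mu_2^{[k]}$. Dividing by $\widehat\mW_1^{[k]}$ yields $\hat\mu_1^{[k]}=\lambda\hat\mu_2^{[k]}$ for all $k$, i.e.\ $\mu_1=\lambda'\mu_2$.

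The main obstacle is the case bookkeeping around $\lambda=0$ and $Z_1\neq\mZ$. One also has to make sure the irreducibility argument inside \cref{lem:uniqueness} legitimately treats $\lambda_k$ as $u$-independent, which holds because the operators are state-independent. Care is needed that ``$\not\equiv0$'' is used as a functional statement in $u$, so that one can divide pointwise on an open set and conclude by continuity.
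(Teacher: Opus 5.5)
Your argument is correct and is essentially the paper's proof. Both pass to the mode-wise identity, invoke \cref{lem:uniqueness} for a single real ratio $\lambda$ on $Z_1$, separate real and imaginary parts to force $\widehat\mM_1^{[k]}\hat\mu_1^{[k]}\equiv0$, and use $\hat\mu_2^{[k]}\not\equiv0$ to get $\widehat\mW_2^{[k]}=0$ wherever $\widehat\mW_1^{[k]}=0$. The only difference is organizational: you read off $\widehat\mW_2^{[k]}=\lambda\widehat\mW_1^{[k]}$ directly from the imaginary parts on $Z_1$ (taking $\lambda'=\lambda$), whereas the paper re-runs the irreducibility argument on $Z_1'=\{k:\widehat\mW_1^{[k]}\neq0\}\subseteq Z_1$, which your version shows is unnecessary.
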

\begin{proof}
    Suppose that $\widehat{\mM}_1^{[k_0]}\neq0$ for some $k_0\in\mZ$. It follows that $k_0\in Z_1$ by definition, then we have
    \begin{equation}
        \widehat{\mM}_2^{[k_0]}+\widehat{\mW}_2^{[k_0]}=\lambda\widehat{\mM}_1^{[k_0]}+\lambda\widehat{\mW}_1^{[k_0]},\quad\mu_1^{[k_0]}=\lambda\mu_2^{[k_0]}
    \end{equation}
    for some $\lambda\in\mR$ according to \cref{lem:uniqueness}. Comparing the real parts on both sides leads to $\lambda=0$, which implies $\hat\mu_1^{[k_0]}\equiv0$. Therefore, it can be concluded that $\widehat{\mM}_1^{[k]}\mu_1^{[k]}\equiv0$ for all $k\in\mZ$. Back to the spectral representation of \cref{eq:M1-W1=M2-W2}, we have
    \begin{equation}\label{eq:W1=W2}
        \widehat{\mW}_1^{[k]}\hat\mu_1^{[k]}=\widehat{\mW}_2^{[k]}\hat\mu_2^{[k]},\quad\forall k\in\mZ.
    \end{equation}
    Analogously, we may define $Z_1'=\left\{k\in\mZ\mid\widehat{\mW}_1^{[k]}\neq0\right\}$
    and let $\lambda_k':={\widehat{\mW}_2^{[k]}}/{\widehat{\mW}_1^{[k]}}\in\mR$
    for any $k\in Z_1'$. With the same deduction in \cref{lem:uniqueness}, we can show that $\lambda_k'$ remains a constant for $k\in Z_1'$ because of the irreducibility of $V_2$. Hence $\hat\mu_1^{[k]}=\lambda'\hat\mu_2^{[k]}$ with a constant $\lambda'\in\mR$ for any $k\in Z_1'$. Meanwhile, for $k\notin Z_1'$, $\widehat\mW_1^{[k]}=0$ implies $\widehat\mW_2^{[k]}=0$ according to \cref{eq:W1=W2}. As a result, $\widehat\mW_2^{[k]}=\lambda'\widehat\mW_1^{[k]}$ for any $k$ and thus $\mW_2=\lambda'\mW_1$. Additionally, $\ker\mW_1=0$ implies $Z_1'=\mZ$ and thus $\mu_1=\lambda'\mu_2$.
\end{proof}

\subsection{Proof of \cref{thm:discrete-stability}}\label{sec:proof-discrete-energy-dissipation}
By the Lipschitz continuity of $\mu_1$, applying the Descent Lemma to the step from $u^s$ to $u^{s+1}$, we have that
\begin{equation}\label{eq:Descent-Lemma-sm}
    \begin{aligned}
        V(u^{s+1}) - V(u^s) &=\int_0^1\left\langle \mu(u^s+\tau(u^{s+1}-u^s)),u^{s+1}-u^s\right\rangle_{\mX^*,\mX}\md \tau\\
        &=\int_0^1\left\langle \mu(u^s+\tau(u^{s+1}-u^s))-\mu(u^s),u^{s+1}-u^s\right\rangle_{\mX^*,\mX}\md \tau\\
        &\hspace{15ex}+\langle\mu(u^s),u^{s+1}-u^{s}\rangle_{\mX^*,\mX}\\
        &\le\langle\mu(u^s),u^{s+1}-u^{s}\rangle_{\mX^*,\mX}+L_1(R)\int_0^1\tau\|u^{s+1}-u^s\|_\mX^2\md\tau\\
        &\le \Xcouple{\mu^s}{u^{s+1}-u^s} + \frac{L_1(R)}{2} \norm{u^{s+1}-u^s}_\mX^2
    \end{aligned}
\end{equation}
as long as
\begin{equation}\label{eq:R-range-sm}
    \max\left(\norm{u^s}_\mX,\norm{u^{s+1}}_\mX\right)\le R.
\end{equation}
Combining with the discrete update \cref{eq:discrete-update}, we may conclude that it suffices to ensure that
\begin{equation}\label{eq:L1R-sufficient}
     \left\|[\mM(u^s) + \mW(u^s)] \mu(u^s)\right\|_\mX^2L_1(R)\Delta t\le 2\left\langle \mu(u^s), \mM(u^s) \mu(u^s) \right\rangle_{\mX^*,\mX},
\end{equation}
for some $R$ satisfying \cref{eq:R-range-sm}. Note that in \cref{eq:R-range-sm}, $R$ is partially determined by $\|u_{n+1}\|_\mX$, and we have to dismiss the dependency. 

Now, we claim that either $\mu^s=0$, or
\begin{equation}\label{eq:claim-sm}
    \Delta t\le\frac{2\Xcouple{\mu^s}{\mM^s\mu^s}}{\norm{(\mM^s+\mW^s)\mu^s}_\mX^2L_1(R^s)},\quad R^s=D+\frac{2\Xcouple{\mu^s}{\mM^s\mu^s}}{\norm{(\mM^s+\mW^s)\mu^s}_\mX L_1(D)}
\end{equation}
for the $s$th step will ensure that $V(u^{s+1})\le V(u^s)$ and $\norm{u^s}_\mX\le D$ for all $s$, and we will prove it by induction. The constant $D$ is chosen such that
\begin{equation}
    V(w)\le V(u^0)\implies\|w\|_\mX\le D
\end{equation}
for any $w\in\mX$, and the existence is guaranteed by the coercivity of $V$.
When $\mu^s=0$, the claim is trivial. Otherwise, by triangle inequality, we can decompose the update as
\begin{equation}
\begin{aligned}
    \norm{u^{s+1}}_\mX&\le\norm{u^s}_\mX+\Delta t\norm{(\mM^s+\mW^s)\mu^s}_\mX\\
    &\le\norm{u^s}_\mX+\frac{2\Xcouple{\mu^s}{\mM^s\mu^s}}{\norm{(\mM^s+\mW^s)\mu^s}_\mX L_1(R^s)}\\
    &\le D+\frac{2\Xcouple{\mu^s}{\mM^s\mu^s}}{\norm{(\mM^s+\mW^s)\mu^s}_\mX L_1(D)}=R^s,
\end{aligned}
\end{equation}
which means that $R=R^s$ makes \cref{eq:Descent-Lemma-sm} hold. It follows that
\begin{equation}
\begin{aligned}
    V(u^{s+1})-V(u^s)&\le\Xcouple{\mu^s}{u^{s+1}-u^s} + \frac{L_1(R^s)}{2} \norm{u^{s+1}-u^s}_\mX^2\\
    &=-\Delta t\Xcouple{\mu^s}{\mM^s\mu^s} + \frac{L_1(R^s)}{2} \norm{(\mM^s + \mW^s) \mu^s}_\mX^2(\Delta t)^2\\
    &\le0.
\end{aligned}
\end{equation}
Hence we obtain the non-increasing behavior of $V(u^s)$, and by induction we have $V(u^{s+1})\le V(u^s)\le\cdots\le V(u^0)$, which implies that $\norm{u^{s+1}}_\mX\le D$ as well.

After proving the claim, by \cref{eq:claim-sm} we have
\begin{equation}
\begin{aligned}
    R^s&\le D+\frac{2\Xcouple{\mu^s}{(\mM^s+\mW^s)\mu^s}}{\norm{(\mM^s+\mW^s)\mu^s}_\mX L_1(D)}\le D+\frac{2\norm{\mu^s}_\mX}{L_1(D)}\\
    &\le D+\frac{2(\|\mu(0)\|_\mX+L_1(\norm{u^s}_\mX)\norm{u^s}_\mX}{L_1(D)}\\
    &\le3D+2\norm{\mu(0)}_\mX L_1(D)^{-1}=:R_0.
\end{aligned}
\end{equation}
Therefore, the upper bound for the time step
\begin{equation}
    \frac{2\Xcouple{\mu^s}{\mM^s\mu^s}}{\norm{(\mM^s+\mW^s)\mu^s}_\mX^2L_1(R^s)}=2\kappa L_1(R^s)^{-1}\ge2\kappa L_1(R_0)^{-1}
\end{equation}
Moreover, by \Cref{ass:lipschitz-v2} we have
\begin{equation}
\begin{aligned}
    \kappa=\frac{\Xcouple{\mu^s}{\mM^s\mu^s}}{\norm{(\mM^s+\mW^s)\mu^s}_\mX^2}&\ge\frac{c}{\norm{\mM^s+\mW^s}^2}\\
    &\ge\frac{c}{\left[\norm{\mM(0)+\mW(0)}+L_2(\norm{u^s}_\mX)\norm{u^s}_\mX\right]^2}\\
    &\ge\frac{c}{\left[\norm{\mM(0)+\mW(0)}+L_2(D)D\right]^2}=:\kappa_0
\end{aligned}
\end{equation}
is always positive for elliptic dynamics when \cref{eq:strict-positivity} holds.

For conservative systems where $\mM=0$, a similar deduction as \cref{eq:Descent-Lemma-sm} gives
\begin{equation}
    \left|V(u^{s+1})-V(u^s)\right|\le\frac12 L_1(R)\norm{u^{s+1}-u^s}_\mX^2\le\frac12 L_1(R)(\Delta t)^2\norm{\mW^s\mu^s}_\mX^2.
\end{equation}
whenever \cref{eq:R-range-sm} holds.
By assuming that the discrete trajectory does not blow up in finite time $T$, we set $\tilde R$ as the maximal $\mX$-norm on the trajectory, and the term $\norm{\mW^s\mu^s}_\mX$ can be bounded by $\tilde R$ as well due to their Lipschitz continuity given by \cref{ass:lipschitz-v2}. The boundness of the potential changes across the whole trajectory follows immediately by decomposing the increment into multiple steps.

\hfill\proofbox

\section{PDE models}\label{sec:PDE-models}
We describe the detailed setup for the PDE models involved in this work, including the Allen--Cahn and KdV dynamics.

\subsection{Allen--Cahn dataset}
We consider the following Allen--Cahn dynamics
\begin{equation}
    \partial_tu=\Delta u+\varepsilon^{-2}(u-u^3)=-\big(\underbrace{I}_{\mathcal{M}(u)} + \underbrace{0}_{\mathcal{W}(u)} \big) \Big[ \underbrace{- \Delta u - \varepsilon^{-2}(u-u^3)}_{\frac{\delta V}{\delta u}} \Big]
\end{equation}
with $\varepsilon=0.1$.
The free energy functional $V$ can be defined as
\begin{equation}
    V_\mathrm{AC}=\int_\Omega\left(\frac12|\nabla u|^2-\frac{2u^2-u^4}{4\varepsilon^2}\right)\md x.
\end{equation}

\subsection{KdV dataset}
We consider the following KdV dynamics
\begin{equation}
    \partial_tu=-u_{xxx}-6uu_x=-\big(\underbrace{0}_{\mathcal{M}(u)} + \underbrace{(-\partial_x)}_{\mathcal{W}(u)} \big) \Big[ \underbrace{-u_{xx}-3u^2}_{\frac{\delta V}{\delta u}} \Big].
\end{equation}
The free energy functional $V$ can be defined as
\begin{equation}
    V_\mathrm{KdV}=\int_\Omega\left(\frac12u_x^2-u^3\right)\md x.
\end{equation}

\subsection{Numerical steppers}\label{sec:numerical-stepper}
Consider solving PDE on the spectral domain
\begin{equation}
    \partial_t\hat u_t=-\left[\hat\mM(\hat u_t)+\hat\mW(\hat u_t)\right]\hat\mu_t.
\end{equation}
Suppose that the term $\hat\mu_t$ can be decomposed as a linear function plus a nonlinear function of $\hat u_t$, formulated as
\begin{equation}
    \hat\mu_t=\hat L\hat u_t+\hat N(\hat u_t),
\end{equation}
then we may use the time discretization
\begin{equation}
    \hat u_{t+\Delta t}-\hat u_t=-\left[\hat\mM(\hat u_t)+\hat\mW(\hat u_t)\right]\left[\hat L\hat u_{t+\Delta t}+\hat N(\hat u_t)\right]\Delta t,
\end{equation}
which is equivalent to
\begin{equation}
    \hat u_{t+\Delta t}=\frac{\hat u_t-\left[\hat\mM(\hat u_t)+\hat\mW(\hat u_t)\right]\hat N(\hat u_t)\Delta t}{1+\left[\hat\mM(\hat u_t)+\hat\mW(\hat u_t)\right]\hat L\Delta t}.
\end{equation}
Such an update is usually referred to as the semi-implicit backward differentiation stepper of order 1 (SBDF1).

Analogous to the leap-frog scheme, we may also use the time discretization
\begin{equation}
    \hat u_{t+\Delta t}-\hat u_{t-\Delta t}=-2\left[\hat\mM(\hat u_t)+\hat\mW(\hat u_t)\right]\left[\hat L\hat u_{t+\Delta t}+\hat N(\hat u_t)\right]\Delta t,
\end{equation}
and the resulted scheme is abbreviated as SBDF2.

\section{Microscopic chain models}
Consider a one-dimensional chain where the displacement of the $n$th node is denoted by $q_n$. The strain between the $n$th and the $(n+1)$th nodes is defined as
\begin{equation}
  r_n = q_{n+1} - q_n.
\end{equation}

\subsection{FPUT chain model}
The Fermi--Pasta--Ulam--Tsingou (FPUT) chain model employs a truncated quadratic force law given by
\begin{equation}
  F(r) = c^2 r + \alpha r^2.
\end{equation}
Consequently, the equations of motion for the node displacements can be written as
\begin{equation}
  \ddot q_n = F(r_n) - F(r_{n-1}).
\end{equation}
This system can be cast in Hamiltonian form, with the Hamiltonian defined as
\begin{equation}
  H = \sum_n \left(\frac{1}{2}v_n^2 + V(r_n)\right),
  \qquad F(r) = V'(r),
\end{equation}
yielding the canonical equations
\begin{equation}
\begin{cases}
  \dot q_n = \dfrac{\partial H}{\partial v_n} = v_n,\\[1em]
  \dot v_n = -\dfrac{\partial H}{\partial q_n} = F(r_n) - F(r_{n-1}).
\end{cases}
\end{equation}
Taking the second time derivative of the strain yields the governing equation for $r_n$
\begin{equation}\label{eq:strain}
  \ddot r_n = \ddot q_{n+1} - \ddot q_n = c^2(r_{n+1} - 2r_n + r_{n-1}) + \alpha(r_{n+1}^2 - 2r_n^2 + r_{n-1}^2).
\end{equation}

To capture the continuum limit, we adopt a long-wave, small-amplitude ansatz describing a right-moving wave
\begin{equation}
  r_n(t) = \varepsilon^2 u(\xi,\tau),
  \qquad
  \xi = \varepsilon(n - ct),
  \qquad
  \tau = \varepsilon^p t,
\end{equation}
where the temporal scaling exponent $p>1$ will be determined via dominant balance. Applying the chain rule, the differential time operator becomes
\begin{equation}
  \partial_t = -c\varepsilon \partial_\xi + \varepsilon^p \partial_\tau.
\end{equation}
Consequently, the time derivatives of the strain are
\begin{equation}
    \dot r_n = -c\varepsilon^3 u_\xi + \varepsilon^{p+2} u_\tau
\end{equation}
and
\begin{equation}\label{eq:compare1}
  \ddot r_n = \varepsilon^2 \left(c^2\varepsilon^2 u_{\xi\xi} - 2c\varepsilon^{p+1} u_{\xi\tau} + \varepsilon^{2p} u_{\tau\tau}\right)
  = c^2 \varepsilon^4 u_{\xi\xi} - 2c \varepsilon^{p+3} u_{\xi\tau} + \mathcal O(\varepsilon^{2p+2}).
\end{equation}
Expanding the spatial shift terms $r_{n\pm 1} = \varepsilon^2 u(\xi \pm \varepsilon,\tau)$ in $\varepsilon$ using a Taylor series gives
\begin{equation}
  r_{n+1} - 2r_n + r_{n-1} = \varepsilon^4 u_{\xi\xi} + \frac{\varepsilon^6}{12} u_{\xi\xi\xi\xi} + \mathcal O(\varepsilon^8),
\end{equation}
and
\begin{equation}
  r_{n+1}^2 - 2r_n^2 + r_{n-1}^2 = \varepsilon^6 (u^2)_{\xi\xi} + \mathcal O(\varepsilon^8).
\end{equation}
Substituting these expansions back into \cref{eq:strain} yields
\begin{equation}\label{eq:compare2}
  \ddot r_n = c^2 \varepsilon^4 u_{\xi\xi} + \varepsilon^6 \left( \frac{c^2}{12} u_{\xi\xi\xi\xi} + \alpha (u^2)_{\xi\xi} \right) + \mathcal O(\varepsilon^8).
\end{equation}
Equating the terms in \cref{eq:compare1} and \cref{eq:compare2}, dominant balance dictates a cubic slow-time scale ($p=3$). Collecting the $\mathcal O(\varepsilon^6)$ terms leaves
\begin{equation}
  -2cu_{\xi\tau} = \frac{c^2}{12} u_{\xi\xi\xi\xi} + \alpha (u^2)_{\xi\xi}.
\end{equation}
Assuming periodic or rapidly decaying boundary conditions, integrating with respect to $\xi$ yields
\begin{equation}
    u_\tau + \frac\alpha{2c}(u^2)_\xi + \frac c{24}u_{\xi\xi\xi} = 0,
\end{equation}
which is the standard form of the Korteweg–de Vries (KdV) equation
\begin{equation}
  u_t + a_{\mathrm{kdv}}uu_x + b_{\mathrm{kdv}}u_{xxx} = 0,
\end{equation}
with $a_\mathrm{kdv}=\alpha/c$ and $b_\mathrm{kdv}=c/24$.

\subsection{FENE chain model}
Alternatively, the Finitely Extensible Nonlinear Elastic (FENE) model utilizes the potential
\begin{equation}
    V(r) = -\frac{HR^2}{2} \log(1 - (r/R)^2),
\end{equation}
which yields the force law
\begin{equation}
    F(r) = \frac{Hr}{1 - (r/R)^2},
\end{equation}
where the parameters $H$ and $R$ are fixed. Because the macroscopic continuum limit of this specific force law does not readily reduce to a known, standard PDE, it provides a nontrivial testbed for evaluating the capabilities of our framework.

\subsection{Numerical simulation}\label{sec:chain-simulation}
In this section, we detail the numerical simulation of the microscopic chain. We illustrate the procedure using the chain-KdV model, and an analogous approach applies to the FENE model.

Recall that the microscopic chain operates in laboratory time $t$, whereas the macroscopic KdV equation evolves in slow time $\tau=\varepsilon^3t$ for a fixed scaling factor $0 < \varepsilon \ll 1$. Given an initial macroscopic profile $u(\xi,0)$, the corresponding initial conditions for the microscopic strain and strain rate are
\begin{equation}
  r_n(0) = \varepsilon^2 u(\varepsilon n,0),
  \qquad
  \dot r_n(0) = -c\varepsilon^3 u_\xi(\varepsilon n,0).
\end{equation}

We integrate the system in time using the velocity Verlet algorithm. Introducing a micro time-step $\delta t$ such that the macroscopic saving interval is $\Delta t = m \delta t$ for $m$ substeps, the update rules for a single substep are as follows.
\begin{enumerate}
    \item Compute the half-step velocity $\dot r_{n}^{1/2} = \dot r_n + \frac{1}{2}\ddot r_n\delta t$, where $\ddot r_n$ is evaluated via \cref{eq:strain} using the current strain $r_n$.
    \item Update the strain $r_{n}^{+} = r_n + \dot r_{n}^{1/2}\delta t$.
    \item Compute the full-step velocity $\dot r_{n}^{+} = \dot r_{n}^{1/2} + \frac{1}{2}\ddot r_{n}^{+}\delta t$, where $\ddot r_{n}^{+}$ is evaluated via \cref{eq:strain} using the updated strain $r_{n}^{+}$.
\end{enumerate}
After advancing the system by $m$ substeps, we reconstruct the macroscopic state $u$ at time $\tau + \Delta\tau$ (where $\Delta\tau = \varepsilon^3\Delta t$) via the inverse relation
\begin{equation}
    u(\varepsilon(n-ct), \varepsilon^3 t) = \varepsilon^{-2}r_n(t)
\end{equation}
for each nodal index $n$. In practice, we enforce periodic boundary conditions by applying a modulo operation to the spatial coordinates relative to the domain length, and subsequently interpolate the scattered $u$ values onto a uniform spatial grid.

\section{Data generation}\label{sec:data-generation}
% Our experiments involve four test cases, including two PDE cases (KdV and Allen--Cahn equations) and two microscopic cases (FPU and FENE chains).

% To generate the reference trajectories for the PDE datasets, we adopt the Python package \texttt{dedalus}\footnote{\url{https://github.com/dedalusProject/dedalus}} and apply the SBDF2 time stepper on the spectral space. We set the step size as $0.001$, with 25 inner integral substeps. the simulation horizon is $T=0.1$, which gives 100 snapshots for a single trajectory.

% For the FPU chain model, we select parameters corresponding to a canonical KdV equation with $a_\mathrm{kdv}=1$ and $b_\mathrm{kdv}=1/24$, which dictates $c=1$ and $\alpha=1$. For the FENE chain model, we set $H=1$ and $R=50\varepsilon^2$. The scaling parameter $\varepsilon$ is fixed at $0.05$ and $0.03$ for the chain--KdV and FENE models, respectively.

% All four datasets share the same underlying idea: a random superposition of some sine waves for the initial conditions. For the PDE datasets, we direct initialize the solution variable $u$, while for the chain models, we initialize the macripscopic state $u$ and then convert it to the strain variable $r$.

Our numerical experiments encompass four distinct test cases: two continuum PDE models (the KdV and Allen--Cahn equations) and two microscopic particle systems (the FPUT and FENE chain models).

% To generate high-fidelity reference trajectories for the PDE datasets, we employ the \texttt{dedalus}\footnote{\url{https://github.com/dedalusProject/dedalus}} Python framework, utilizing a second-order semi-implicit backward differentiation formula (SBDF2) in the spectral domain. 
We set the recording step size to $\Delta t = 0.001$, resolving the internal dynamics with $25$ intermediate integration substeps per recorded step. The total simulation horizon is $T=0.1$, yielding exactly $100$ temporally equidistant snapshots per trajectory. For 1D cases, we set the grid size as 256, and for 2D cases, we set the spatial resolution as $128\times128$. The data generation process for the PDE models are implemented via the Python package \texttt{dedalus}\footnote{\url{https://dedalus-project.org/}} \cite{dedalus} with the SBDF2 numerical stepper. The generation for the microscopic chain models follows \cref{sec:chain-simulation}.

For the microscopic systems, the FPUT chain parameters are selected to correspond to a canonical continuum KdV equation with $a_\mathrm{kdv}=1$ and $b_\mathrm{kdv}=1/24$, which fixes the microscopic parameters at $c=1$ and $\alpha=1$. For the FENE chain model, we set the parameters to $H=1$ and $R=50\varepsilon^2$. The spatial scaling parameter $\varepsilon$ is fixed at $0.05$ for the FPUT chain and $0.03$ for the FENE chain.

The initial conditions across all four datasets are generated using a consistent methodology based on random superpositions of sinusoidal waves. For the PDE datasets, these superpositions are used to directly initialize the continuum solution variable $u$. For the particle chain models, we first generate a continuum state $u$, which is subsequently projected onto the discrete microscopic strain variables $r$.
\section{Networks and training details}\label{sec:network-and-training}
\subsection{Network architectures}
We describe the detailed architecture for each model below.
\paragraph{FNO} We employ the FNO module officially implemented in the Python package 
\texttt{neuralop}\footnote{\url{https://github.com/neuraloperator/neuraloperator}} \cite{kossaifi2025neuralop}. We set the number of modes as 24, and the number of hidden channels as 32. All the other parameters remain unchanged.
\paragraph{OnsagerNet} We use the classical OnsagerNet formulation that treats the $N_x$-point PDE grid directly as an $N_x$-dimensional ODE, evolving under
\begin{equation}
    \ddt{\mathbf{u}} = -\bigl(M(\mathbf{u}) + W(\mathbf{u})\bigr)\nabla V(\mathbf{u}),
\end{equation}
where $V$ is a coercive MLP potential with 3 hidden layers of width 256 and \texttt{tanh} activations; $M(\mathbf{u}) = \epsilon \mathbf{I} + B(\mathbf{u})B(\mathbf{u})^\top$ is a low-r   ank SPD dissipation matrix of rank 8 with $\epsilon = 10^{-3}$, and $B$ parameterized by a 2-hidden-layer MLP of width 256; and $W(\mathbf{u}) = B(\mathbf{u})C^\top - CB(\mathbf{u})^\top$ is a low-rank anti-symmetric conservation matrix of rank 4, similarly parameterized.
\paragraph{\ournetabbrv} We use a pointwise MLP applied independently at each grid point to implement $F_\phi$. For the gradient term $\nabla v_\phi$, we represent $v_\phi$ as a complex-valued MLP acting on the first 16 Fourier modes with 3 hidden layers of width 64, and the graidient is computed via automatic differentiation. All the activations in $F_\phi$ and $v_\phi$ are set as SiLU \cite{hendrycks2016GELUs}, except for the activations for the final nonlinear layer where we set them as sine to ensure the boundness of the network output.
\subsection{Training details}\label{sec:training-details}
% The time step sizes are set the same as the reference, and we fix the number of gradient accumulation steps as $K=5$ for the chain-KdV case and $K=3$ for the other cases.

% \begin{table}[h]
%     \centering
%     \begin{tabular}{l|c}
%     \toprule
%     Model & number of parameters \\
%     \midrule
%     Classical solver &  --- \\
%     S-OnsagerNet-V & 258 \\
%     \midrule
%     FNO\cite{li2020fourier} & 66.3K \\
%     OnsagerNet\cite{Yu2021OnsagerNet}& 1.3M \\
%     Res-OnsagerNet & 790K \\
%     S-OnsagerNet & 18.2K \\
%     \bottomrule
%     \end{tabular}
%     \caption{5-step relative prediction error ($\downarrow$) for various baselines.}
%     \label{tab:benchmark}
% \end{table}

We adopt the Adam optimizer with learning rate $10^{-3}$. No weight decay is used. To manage the learning rate, we decrease the learning rate to half if the validation loss does not decrease for 300 epochs. We validate the networks every 100 epochs, and terminate the training process when the validation loss does not decrease for 5 validation steps. For the loss function, we fix the number of gradient accumulation steps as $K=5$ for the FPUT model and $K=3$ for the other cases.
\section{Ablation study}
We have conducted an ablation study for the modeling choice of our \ournet. Apart from the baselines (classical solver and FNO), we introduce the following variants of our model.
\paragraph{Residual OnsagerNet (abbreviated as Res-OnsagerNet)}
Instead of using \cref{eq:parameterization-mu}, we introduce
\begin{equation}
    \hat\mu_\theta^\mathrm{Res}(u^s)=\left[\alpha+(2\pi \mathbf{k})^2\beta\right]\odot\hat u^s+H_\phi(\hat u^s).
\end{equation}
where we directly parameterize all the gradient terms $H_\phi$ as a complex-valued network. This is the most naive approach to learning the increment in the spectral space. Note that in such circumstances, the parameterization for the term $\mu_t$ is not guaranteed to be a functional derivative.
\paragraph{{\ournet} with real $V$ (abbreviated as \ournetabbrv-V)}
In the update \cref{eq:parameterization-mu} of \ournetabbrv, we set the potential $V$ as the real potential for the PDE models. More specifically, we set $\hat\mu_\theta(u^s)$ as $V_\mathrm{KdV}$ and $V_{\mathrm{AC}}$ for the KdV equation and the Allen--Cahn equation, respectively.
\begin{table}[h]
    \centering
    \resizebox{\textwidth}{!}{%
    \begin{tabular}{l|c|c|c|c|c}
    \toprule
    Model & \#params& KdV & Allen--Cahn & Chain--KdV & FENE \\
    \midrule
    Classical solver & --- & $0.0541^{\pm0.0313}$ & ${0.0085}^{\pm0.0029}$ & --- & --- \\
    \ournetabbrv-V & 258 & $0.0096^{\pm0.0039}$ & $0.0059^{\pm0.0020}$ & --- & --- \\
    \midrule
    FNO\cite{li2020fourier} & 66.3 K & $0.1747^{\pm0.0820}$ & $0.0204^{\pm0.0081}$ & $0.0360^{\pm0.0096}$ & $0.0186^{\pm0.0066}$ \\
    OnsagerNet\cite{Yu2021OnsagerNet}& 1.3 M & $0.0671^{\pm0.0241}$ & $0.0260^{\pm0.0037}$ & $0.0242^{\pm0.0075}$ & $0.0139^{\pm0.0076}$ \\
    Res-OnsagerNet & 790 K & $0.0129^{\pm0.0027}$ & $0.1225^{\pm0.0277}$ & $0.0156^{\pm0.0043}$ & $0.0100^{\pm0.0058}$ \\
    \ournetabbrv & \textbf{18.2 K} & $\mathbf{0.0100}^{\pm0.0037}$ & $\mathbf{0.0055}^{\pm0.0022}$ & $\mathbf{0.0082}^{\pm0.0026}$ & $\mathbf{0.0012}^{\pm0.0004}$ \\
    \bottomrule
    \end{tabular}%
    }
    \caption{5-step relative prediction error ($\downarrow$) for various baselines. Some of the entries have been exhibited in \cref{tab:benchmark} from the main text.}
    \label{tab:ablation}
\end{table}

\bibliographystyle{siamplain}
\bibliography{refs}
\end{document}